\newtheorem{theorem}{Theorem}
\newtheorem{lemma}{Lemma}
\newtheorem{definition}{Definition}
\newtheorem{remark}{Remark}
\newcommand{\longversion}[1]{}
\renewcommand{\longversion}[1]{}
\begin{document}

\title{Risk-Averse Stochastic Convex Bandit}
\author{Adrian Rivera Cardoso \thanks{School of Industrial and Systems Engineering, Georgia Institute of Technology. {\tt adrian.riv@gatech.edu}. Supported in part by a TRIAD-NSF grant (award 1740776).} \and  Huan Xu \thanks{School of Industrial and Systems Engineering, Georgia Institute of Technology. {\tt huan.xu@isye.gatech.edu}.}}
 \maketitle


\begin{abstract}
Motivated by applications in clinical trials and finance, we study the problem of online convex optimization (with bandit feedback) where the decision maker is risk-averse. We provide two algorithms to solve this problem. The first one is a descent-type algorithm which is easy to implement. The second algorithm, which combines the ellipsoid method and a center point device, achieves (almost) optimal regret bounds with respect to the number of rounds. To the best of our knowledge this is the first attempt to address risk-aversion in the online convex bandit problem.  
\end{abstract}

\section{Introduction}
In this paper we study the problem of Online Risk-Averse Stochastic Optimization which generalizes Online Convex Optimization (OCO) when the loss functions are sampled i.i.d from an unknown distribution. During the last decade OCO has received a lot of attention due to its many applications and tight relations with problems such as Universal Portfolios \cite{cover1991, kalai2002, helmbold1998}, Online Shortest Path \cite{takimoto2003path}, Online Submodular Minimization \cite{hazan2012submodular}, Convex Optimization \cite{hazan2010optimal, ben2015oracle}, Game Theory \cite{cesa2006prediction} and many others. Along with OCO came Online Bandit Optimization (OBO) a similar but more challenging line of research, perhaps more realistic in some applications, where the feedback is limited to observing only the function {\em values} of the chosen actions (bandit feedback) instead of the whole functions \cite{flaxman2005online}. The standard goal of OCO and OBO is to develop algorithms such that the standard average regret
\begin{align*}
\frac{1}{T}\sum_{t=1}^T f_t(x_t) -\frac{1}{T} \min_{x\in X} \sum_{t=1}^T f_t(x)
\end{align*}
vanishes as quickly as possible. In other words, we want our average loss to be as close as possible to the best loss if we had known all the functions in advance and committed to one action. Here the sequence of convex functions  $\{f_t\}_{t=1}^T$ may be chosen by an adversary and the regret minimizing algorithm chooses action  $x_{t+1}$, in some bounded convex set $X$ by using only the information available at time $t$. This means that in the OCO setting the algorithm may use $\{x_1,...,x_t\}$ and $\{f_1(\cdot),...f_t(\cdot)\}$, and in the OBO setting it may only use $\{x_1,...,x_t\}$ and $\{f_1(x_1),...f_t(x_t)\}$. Due to recent breakthroughs \cite{bubeck2016kernel, hazan2016optimal} we now have efficient algorithms (that meet lower bounds in terms of the number of rounds $\Omega(\frac{1}{\sqrt{T}})$ up to logarithmic factors) for both problems, OBO and OCO. While the set up of OCO and OBO is very powerful because it allows for the loss functions to be chosen adversarially, in some applications such as medicine and finance this may not be enough.

Let us consider an example in clinical trials. Suppose there are $T$ patients with some rare disease and we have at our disposal a new drug that has the potential to cure the disease if we prescribe the right dose. Since we do not know what the right dose is, we must learn it as we treat each patient. In other words, we will choose a dose, observe the reaction of a patient and chose a new dose for the next patient. The previous problem can of course be be abstracted as an OBO problem, where each function $f_t(\cdot)$ encodes how patient $t$ will react to the dose we prescribe $x_t$. Here, the assumption that $f_t$ is chosen adversarially may not be very realistic and perhaps it makes more sense to assume that $f_t$ is drawn randomly from some family of functions. An algorithm that guarantees that the standard average regret vanishes can be seen as an algorithm that is choosing the optimal dose for the average patient, something that is non-trivial to do. Unfortunately, such guarantee completely ignores what may happen to patients that do not look like the average patient. It could be that the optimal dose for the average patient has really negative effects on $5\%$ of the patients. In this case, a dose that is slightly less effective on the average patient but does not harm the unlucky $5\%$ may be more desirable. Thus, the goal of this paper is to provide algorithms for OCO and OBO that {\em explicitly incorporate risk}. By ``risk" we mean the possibility of really negative outcomes, as it is used in the Economics and Operations Research communities.

Another area where an explicit consideration of risk must be taken into account is finance. For example, in \cite{even2006risk} the authors show that in the online portfolio problem, risk neutral guarantees such as performing as well as the best constant rebalanced portfolio (i.e. minimizing standard average regret) may not perform well in practice. They show through experiments on the S$\&$P500 that the simple strategy that maintains uniform weights on all the stocks outperforms that which seeks to perform as well as the best stock (regardless of its theoretical guarantees). To explicitly incorporate risk into the setting of OCO and OBO we will use a coherent risk measure called Conditional Value at Risk ($CVaR$) \cite{rockafellar_cvar}, sometimes also called Expected Shortfall, which is widely used in the financial industry. 
After the financial crisis of 2008, the Basel Committee on Banking Supervision created the Third Basel Accord (Basel III), a set of regulatory measures to strengthen the regulation, supervision and risk management of the banking sector \cite{basel_III}. In this accord one of the main points was to migrate from quantitative risk measures such as Value at Risk to Conditional Value at Risk since it better captures tail risk.

It should be clear from the previous examples that generally speaking, human decision makers are risk-averse. They prefer consistent sequences of rewards instead of highly variable sequences with slightly better rewards. Because of the previous, we want to develop algorithms that explicitly incorporate risk which have strong theoretical guarantees. 

Our main contributions are the following. First, we develop and analyze two algorithms for the online stochastic convex bandit problem that explicitly incorporate the risk aversion of the decision maker (as measured by the $CVaR$). On our way we develop a finite-time concentration result for the $CVaR$. Second, we extend our results to the case where the decision maker uses more general risk measures to measure risk by using the Kusuoka representation theorem. 

\section{Related Work}
Risk aversion has received very little attention in the online learning setting. The few existing work all focuses on the case where {\em the number of actions is finite}. For the  stochastic multi-armed bandit problem, \cite{sani2012risk} provide algorithms that ensure the mean-variance of the sequence of rewards generated by the algorithm is not too far from the mean-variance of the rewards generated by the best arm. In \cite{vakili2013deterministic} the same problem is studied and the authors provide tighter upper and lower bounds. In \cite{maillard2013robust} the author considers a different risk measure, the cumulant  generative function, and provide similar guarantees for a slightly modified definition of regret. In \cite{galichet2013exploration} the authors consider the $CVaR$ as measure of risk aversion and provide algorithms that achieve sublinear regret. The notion of regret they use is different from the one we will use as they do not look at the risk of the sequence of rewards obtained by the algorithms, but instead they seek to perform as well as the arm that minimizes $CVaR$ (i.e.,  ``pseudo regret'' as we called). The pseudo regret bound they prove, although optimal with respect to $T$ scales linearly in the number of arms. By using a discretization approach in our setting together with their algorithm would yield an algorithm with pseudo regret that depends exponentially in the dimension of the problem with exponential running time. The previous is of course undesirable, therefore different tools must be used. In \cite{yu2013sample} the authors study the related problem of best arm identification where the goal is to identify the arm with the best risk measure. They consider Value at Risk, $CVaR$, and Mean-Variance as risk measures. 
In \cite{even2006risk} the authors consider risk aversion in the experts problem. This setting is similar to the multi-armed bandit problem with the difference that the rewards are assigned adversarially, and at each time step all the rewards are visible to the player. In particular they seek to build algorithms such that the mean variance (or Sharpe ratio) of the sequence of rewards generated by the algorithm are as close as possible to that of the best expert. They show negative results for this problem however they provide algorithms that perform well for ``localized" versions of the risk measures they consider.

To the best of our knowledge, all existing work that explicitly incorporates risk aversion under the assumptions of stochastic rewards and bandit feedback is restricted to the multi-armed bandit model. This paper is the first to consider an infinite number of arms and incorporate risk aversion under bandit feedback. In \cite{even2006risk}, where risk aversion in the experts problem is studied, one can think of instead of choosing an expert at every round one chooses a probability distribution over the experts. While the set of probability distributions over the experts is a convex set, this is a very specialized case (linear functional and simplex feasible set). Moreover, the authors assume full information feedback and adversarial rewards, which  are very different from our setup.

\section{Preliminaries}
This section is devoted to preliminaries. In particular we review relevant concepts and  technical results essential to develop the proposed algorithms.

\subsection{Notation}
Let $||\cdot||$ be the $l_2$ norm unless otherwise stated. By default all vectors are column vectors, a vector with entries $x_1,...,x_n$ is written as $x = [x_1;...;x_n] = [x_1,...,x_n]^\top$ where $\top$ denotes the transpose. For a random variable $X$, $X\sim P$ means that $X$ is distributed according to distribution $P$. We let $\nabla g(x)$ be any element in the subdifferential of $g$ at $x$. Whenever we write $\nabla f(x,\xi)$ we mean $\nabla_x f(x,\xi)$. Throughout the paper we will use $O$ notation to hide constant factors. We use $\tilde{O}$ notation to hide constant factors and poly-logarithmic factors of the number of rounds $T$, the inverse risk level $\frac{1}{\alpha}$ and the dimension of the problem $d$.

\subsection{One-Point Gradient Estimation}
Consider function $f:\mathbb{R}^d \rightarrow \mathbb{R}$ which is $G$-Lipschitz continuous. Define its smoothened version
 \begin{align*}
 \hat{f}^\delta(x) := \mathbb{E}_{v \sim \mathbb{B}}[f(x+\delta v)]
 \end{align*}
 where $\mathbb{B}$ is the uniform distribution over the unit ball of appropriate dimension. From now on we omit superscript $\delta$ and write $\hat{f}(x)$. Define random quantity 
 \begin{equation}\label{one_point_estimate}
 g = \frac{d}{\delta}f(x+\delta u)u 
 \end{equation}
 with $u \sim \mathbb{S}$ where $\mathbb{S}$ is the uniform distribution over the unit sphere. We have the following
  
\begin{lemma}\label{smooth_f}
\cite{hazan2016introduction}[Ch.2] $\hat{f}$ satisfies the following:
 \begin{enumerate}
 \item If $f$ is $\alpha$-strongly convex then so is $\hat{f}$
 \item $|f(x)- \hat{f}(x)| \leq \delta G$
 \item $\mathbb{E}[g] = \nabla \hat{f}(x)$
 \end{enumerate}
 \end{lemma}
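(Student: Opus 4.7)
The plan is to handle the three claims in order, the first two being essentially one-line arguments and the third (the gradient identity) containing the real content, which I would approach via the divergence theorem.

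For part 1, I would observe that $\hat{f}$ is a convex combination (in fact, an expectation under a probability measure) of the translates $f(\,\cdot\, + \delta v)$. Each translate is $\alpha$-strongly convex whenever $f$ is, since strong convexity is invariant under rigid translation of the argument, and strong convexity with a fixed modulus is preserved under expectations over any parameter. So $\hat{f}$ is $\alpha$-strongly convex. For part 2, I would apply Jensen and then the Lipschitz hypothesis:
\begin{equation*}
|f(x)-\hat{f}(x)| = \bigl|\mathbb{E}_{v\sim\mathbb{B}}[f(x)-f(x+\delta v)]\bigr| \le \mathbb{E}_{v\sim\mathbb{B}}\bigl[G\,\delta\|v\|\bigr] \le G\delta,
\end{equation*}
using $\|v\|\le 1$ almost surely under $\mathbb{B}$.

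For part 3, I would first rewrite $\hat{f}$ as a volume integral via the change of variables $w=\delta v$:
\begin{equation*}
\hat{f}(x) = \frac{1}{\mathrm{vol}(\delta \mathbb{B}_d)} \int_{\delta \mathbb{B}_d} f(x+w)\,dw.
\end{equation*}
Differentiating in $x$ and applying the divergence (Stokes) theorem to the vector field $f(x+\,\cdot\,)e_i$ on the ball of radius $\delta$ (component by component), I get
\begin{equation*}
\nabla \hat{f}(x) = \frac{1}{\mathrm{vol}(\delta \mathbb{B}_d)} \int_{\delta \mathbb{S}_{d-1}} f(x+u)\,\frac{u}{\|u\|}\,dS(u).
\end{equation*}
Plugging in the standard surface-to-volume ratio $\mathrm{vol}(\delta \mathbb{S}_{d-1})/\mathrm{vol}(\delta \mathbb{B}_d) = d/\delta$ and reparametrizing $u=\delta \tilde{u}$ so that $\tilde{u}\sim \mathbb{S}$ is uniform on the unit sphere, this becomes
\begin{equation*}
\nabla \hat{f}(x) = \frac{d}{\delta}\,\mathbb{E}_{\tilde{u}\sim\mathbb{S}}\bigl[f(x+\delta \tilde{u})\,\tilde{u}\bigr] = \mathbb{E}[g],
\end{equation*}
which is the claim.

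The one subtlety I expect to be the main obstacle is that $f$ is only assumed Lipschitz (hence convex functions are a fortiori only differentiable almost everywhere), so the pointwise application of Stokes needs justification. I would resolve this by a standard mollification argument: convolve $f$ with a smooth compactly supported approximation to the identity to obtain $f_\epsilon\in C^\infty$ with $f_\epsilon\to f$ uniformly on compacts and $|f_\epsilon(x)-f(x)|=O(\epsilon)$, carry out the divergence-theorem computation for $\hat{f}_\epsilon$, and pass to the limit $\epsilon\to 0$ using uniform convergence (both sides of the displayed identity are continuous in $f$ with respect to uniform convergence on the ball of radius $\delta$ around $x$). Everything else is routine.
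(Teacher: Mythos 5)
Your proposal is correct and follows the same route as the source the paper cites for this lemma (the paper itself gives no proof, deferring to \cite{hazan2016introduction}): preservation of strong convexity under translation and expectation, Jensen plus Lipschitzness for the approximation bound, and the divergence/Stokes theorem with the surface-to-volume ratio $d/\delta$ for the gradient identity, which is the classical Flaxman--Kalai--McMahan argument. The mollification step you flag to handle non-differentiability of $f$ is the right fix and is exactly how the standard treatment justifies the pointwise application of Stokes.
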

 That is, the smoothened version of $f$ is convex as well, it is not too far from $f$, and by sampling from the unit sphere we can obtain an unbiased estimate of its gradient. 

\subsection{Conditional Value at Risk}
In \cite{rockafellar_cvar} the authors define the $\alpha$-Value at Risk of random variable $X$ as
\begin{align*}
VaR_\alpha[X] := \inf\{t : P(X \leq  t) \geq1-\alpha\}.
\end{align*}
Using the above definition they define Conditional Value at Risk ($CVaR$, sometimes also called Expected Shortfall) as
\begin{align}
C_{\alpha}[X] := CVaR_{\alpha}[X] := \frac{1}{\alpha} \int_{1-\alpha}^{\alpha} VaR_{1-\tau}[X]d\tau.
\end{align}
Moreover, when the random variable has c.d.f. $H(x)$ continuous at $x=VaR_\alpha[X]$ it holds that
\begin{align}
C_{\alpha}[X] = \mathbb{E}[X | X\geq VaR_{\alpha}[X]].
\end{align}
Below we state some well known results that will be used later. The proofs for the next two lemmas can be found in \cite{shapiro2009lectures}.

 \begin{lemma} 
 \begin{align}
 C_\alpha[X] = \min_{z\in \mathbb{R}} z +\frac{1}{\alpha} \mathbb{E}[X-z]_+,
 \end{align}
where $[a]_+ := \max\{a,0\}$. In fact, if $0\leq X \leq 1$ with probability 1, the condition $z \in \mathbb{R}$ can be replaced with $z \in [0,1]$.
\end{lemma}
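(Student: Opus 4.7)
The plan is to follow the classical Rockafellar--Uryasev argument. Let
\[ \phi(z) := z + \frac{1}{\alpha}\,\mathbb{E}[X-z]_+. \]
For each outcome of $X$ the map $z \mapsto [X-z]_+$ is convex, so $\phi$ is convex on $\mathbb{R}$ and its infimum is characterized by the first-order inclusion $0 \in \partial\phi(z^*)$. First I would interchange subdifferential and expectation (a routine application of dominated convergence, using that $[X-z]_+$ has derivative in $\{-1,0\}$) to obtain
\[ \partial\phi(z) \;=\; \Bigl[\,1 - \tfrac{1}{\alpha}P(X \ge z),\;\; 1 - \tfrac{1}{\alpha}P(X > z)\,\Bigr]. \]
Hence $0 \in \partial\phi(z^*)$ is equivalent to $P(X > z^*) \le \alpha \le P(X \ge z^*)$, which by the definition $VaR_\alpha[X] = \inf\{t : P(X \le t) \ge 1-\alpha\}$ given in the excerpt is exactly the condition $z^* = VaR_\alpha[X]$.

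Next I would evaluate $\phi$ at $z^* = VaR_\alpha[X]$ and match the result with the definition of $C_\alpha[X]$. Let $F$ be the c.d.f.\ of $X$ and $q_u := \inf\{t : F(t) \ge u\}$, so that $VaR_{1-u}[X] = q_u$ and $q_{1-\alpha} = z^*$. Using the quantile representation $\mathbb{E}[g(X)] = \int_0^1 g(q_u)\,du$ with $g(x)=[x-z^*]_+$, and the fact that $q$ is nondecreasing so $q_u \ge z^*$ exactly for $u \ge 1-\alpha$ (a.e.), one gets
\[ \mathbb{E}[X - z^*]_+ \;=\; \int_{1-\alpha}^{1} (q_u - z^*)\,du. \]
Plugging this into $\phi(z^*)$ and cancelling the $z^*$ terms yields
\[ \phi(z^*) \;=\; \frac{1}{\alpha}\int_{1-\alpha}^{1} q_u\,du \;=\; \frac{1}{\alpha}\int_{1-\alpha}^{1} VaR_{1-u}[X]\,du, \]
which is exactly $C_\alpha[X]$ (up to the change of variable $\tau = 1-u$ matching the notation of the excerpt).

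For the restriction to $z \in [0,1]$ when $0 \le X \le 1$ a.s., I would use the slopes of $\phi$ outside $[0,1]$. For $z < 0$ we have $P(X \ge z) = 1$, so every element of $\partial\phi(z)$ is $\le 1 - 1/\alpha \le 0$, hence $\phi$ is nonincreasing on $(-\infty,0]$; for $z > 1$ we have $P(X > z) = 0$, so $\partial\phi(z) \ni 1 > 0$ and $\phi$ is strictly increasing on $[1,\infty)$. Hence the infimum is attained in $[0,1]$.

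The only delicate point, and what I expect to be the main obstacle, is handling atoms of $X$ at $VaR_\alpha[X]$: the first-order condition is genuinely set-valued and $F$ need not be continuous at $z^*$. The quantile-function route above sidesteps this cleanly since $q$ is always nondecreasing and the integral representation of $\mathbb{E}[X-z]_+$ holds regardless of atoms; everything else is routine convex analysis.
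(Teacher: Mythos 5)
Your proof is correct and is essentially the standard Rockafellar--Uryasev argument that the paper itself defers to (it cites Shapiro et al.\ rather than proving the lemma): convexity of $\phi$, the subdifferential characterization of the minimizers, the quantile-integral evaluation at $VaR_\alpha[X]$, and the sign of the slopes of $\phi$ outside $[0,1]$ for the bounded case. The one small imprecision is that the first-order condition $P(X>z^*)\le\alpha\le P(X\ge z^*)$ characterizes the entire interval of $(1-\alpha)$-quantiles rather than forcing $z^*=VaR_\alpha[X]$; since $VaR_\alpha[X]$ is the left endpoint of that interval and hence a minimizer, evaluating $\phi$ there still yields the minimum, so nothing breaks.
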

 
 \begin{lemma}
 \label{cvarf}
Let $\xi$ be a random variable supported in $\Xi$ with distribution $P$, let $X \subset \mathbb{R^d}$ be a convex and compact and let $f:X \times \Xi \rightarrow \mathbb{R}$ be convex in $x$ for every $\xi$. Define $F = f(x,\xi)$. Then
 \begin{align*}\label{cvar_as_min}
 C_{\alpha}[F](x):=CVaR_\alpha[F](x) = \min_{z} z + \frac{1}{\alpha}\mathbb{E}_{\xi}[f(x,\xi)-z]_+
 \end{align*}
 and $C_\alpha[F](x)$ is a convex function of $x$. In fact, if $f(\cdot,\xi)$ is $\beta$-strongly convex for every $\xi \in \Xi$, then so is $C_{\alpha}[F](x).$ 
 \end{lemma}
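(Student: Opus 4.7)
The first identity, $C_\alpha[F](x)=\min_z z+\frac{1}{\alpha}\mathbb{E}_\xi[f(x,\xi)-z]_+$, follows directly from the previous lemma by a pointwise application: for each fixed $x\in X$, $F(x)=f(x,\xi)$ is a real-valued random variable in $\xi$, so the scalar variational identity applies verbatim and yields the displayed formula with no additional work.

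To prove convexity of $x\mapsto C_\alpha[F](x)$, I would work with the joint objective $h(x,z):=z+\frac{1}{\alpha}\mathbb{E}_\xi[f(x,\xi)-z]_+$ and show that $h$ is jointly convex in $(x,z)$; partial minimization over $z$ then produces a convex function of $x$ by the standard rule. Joint convexity follows from composition rules: for each $\xi$ the map $(x,z)\mapsto f(x,\xi)-z$ is convex (convex in $x$, affine in $z$, hence jointly convex); composing with the convex nondecreasing scalar function $[\,\cdot\,]_+$ preserves joint convexity; and expectation in $\xi$ together with the affine term $z$ likewise preserves it.

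For the strong-convexity claim, the primal formula is awkward because the flat region of $[\,\cdot\,]_+$ wipes out any quadratic slack one tries to carry through the composition, so $\beta$-strong convexity does not visibly descend to the partial minimum in $z$. I would instead invoke the dual (robust) representation of CVaR,
\begin{equation*}
C_\alpha[F](x) \;=\; \sup_{q\in\mathcal{Q}} \mathbb{E}_\xi\bigl[q(\xi)\,f(x,\xi)\bigr], \qquad \mathcal{Q}=\bigl\{q:\Xi\to[0,1/\alpha]\mid \mathbb{E}_P[q]=1\bigr\},
\end{equation*}
which is a standard consequence of Fenchel duality applied to the primal formula. For each fixed $q\in\mathcal{Q}$, integrating the $\beta$-strong-convexity inequality of $f(\cdot,\xi)$ against the nonnegative weight $q$ and using $\mathbb{E}_P[q]=1$ shows that $x\mapsto \mathbb{E}_\xi[q(\xi)f(x,\xi)]$ is itself $\beta$-strongly convex. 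Taking a pointwise supremum over $q\in\mathcal{Q}$ of $\beta$-strongly convex functions preserves $\beta$-strong convexity, since the defining inequality passes through $\sup_q$ with the same modulus. The main obstacle is precisely this transfer of strong convexity through the CVaR formula, and switching from the primal minimization to the dual supremum is the key trick that sidesteps the loss of modulus caused by $[\,\cdot\,]_+$.
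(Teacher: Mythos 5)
Your proof is correct. The paper does not prove this lemma itself---it defers to \cite{shapiro2009lectures}---but your argument is the standard one: the variational identity is a pointwise-in-$x$ application of the preceding scalar lemma, convexity follows by partial minimization over $z$ of the jointly convex objective, and transferring the strong-convexity modulus via the risk-envelope (dual) representation $C_\alpha[F](x)=\sup_{q\in\mathcal{Q}}\mathbb{E}_\xi[q(\xi)f(x,\xi)]$ is exactly the device the paper itself uses later (Lemma \ref{C_alpha_Lipschitz}) to transfer Lipschitz continuity. Your key observations are sound: the primal formula does obscure the modulus because of the flat part of $[\cdot]_+$, each $x\mapsto\mathbb{E}_\xi[q(\xi)f(x,\xi)]$ is $\beta$-strongly convex since $q\geq 0$ and $\mathbb{E}_P[q]=1$, and a pointwise supremum of $\beta$-strongly convex functions is $\beta$-strongly convex (subtract $\tfrac{\beta}{2}\|\cdot\|^2$ and use that a supremum of convex functions is convex).
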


\section{Problem Setup} \label{setup}
In this section we formally define the setup of our problem. Let $\xi$ be a random variable supported in $\Xi$ with unknown distribution $P$. Let $X \subset \mathbb{R}^d$ be a convex and compact set with diameter $D_X$ that contains the origin. Let $f:X \times \Xi \rightarrow \mathbb{R}$ be a convex function in the first argument for every $\xi \in \Xi$. Let $f$ satisfy $||\nabla f(x,\xi)||\leq G$ for every $x\in X$ and every $\xi \in \Xi$. We define random function $F(x)= f(x,\xi)$ in the sense that for every $x\in X$, $F(x)$ is a random variable. We also assume that for every $x\in X$, $0\leq F(x) \leq 1$ with probability 1.
 
A risk-averse player will make decisions in a {\em stochastic environment} for $T$ time steps. In every time step $t=1,...,T$ the player chooses action $\tilde{x}_t \in X$, and nature obtains sample $\xi_t$ from $P$. Then, the player incurs and observes only the loss incurred by its action $f(\tilde{x}_t,\xi_t)$. If the player were risk neutral then a reasonable goal would be to design an algorithm that obtains (in expectation) vanishing standard average regret, that is
\begin{equation*}
\mathbb{E}[\frac{1}{T} \sum_{t=1}^T f(\tilde{x}_t,\xi_t) - \frac{1}{T} \min_{x\in X} \sum_{t=1}^T f(x,\xi_t)] = o(1).
\end{equation*}
Where the expectation is taken with respect to the random draw of functions and the internal randomization of the algorithm. Such is the standard goal of OCO and OBO, and as mentioned in the introduction, there already exist polynomial time algorithms that achieve the optimal lower bound of $\Omega(1/\sqrt{T})$ (up to logarithmic factors) even when the functions $f$ are chosen by an adversary instead of from some distribution.

In our setting, since the player is risk averse, the notion of  average regret is not appropriate. In this section we assume that the player uses the Conditional Value at Risk $C_\alpha[\cdot] = CVaR_\alpha[\cdot]$ for some $\alpha \in(0,1]$ to measure risk (when $\alpha = 1$, $C_\alpha[\cdot] = \mathbb{E}[\cdot]$ i.e. the player becomes risk neutral). With this in mind, the following two quantities become interesting, namely pseudo-$CVaR$-regret defined as
\begin{equation}\label{eq:ps_cvar_reg}
\bar{\mathcal{R}}_T := \frac{1}{T}\sum_{t=1}^T C_\alpha[F](\tilde{x}_t) - \frac{1}{T} \min_{x\in X} \sum_{t=1}^T C_\alpha[F](x)
\end{equation}
and $CVaR$-regret defined as 
 \begin{equation*}
 \mathcal{R}_T := C_\alpha[\{f_t(\tilde{x}_t)\}_{t=1}^T] - \min_{x\in X}C_\alpha[\{f_t(x)\}_{t=1}^T],
 \end{equation*}
where we make more explicit what we mean by $C_\alpha[\{f_t(x_t)\}_{t=1}^T]$ in the next paragraph. 
In this setup, a risk averse player may be concerned with two types of risk, the risk of the individual losses it incurs and the overall risk of playing the game. The player that is concerned about the risk of the individual losses, should be pleased with an algorithm that obtains vanishing $\bar{\mathcal{R}}_T$, this would ensure that the average risk of the losses it incurs is not too far from that of the best point in the set. 

On the other hand, the player that is concerned about the overall risk of playing the game may desire a different guarantee. Notice that the sequence of losses that the player incurs $\{f_t(\tilde{x}_t)\}_{t=1}^T$ defines an empirical distribution where every realization $f_t(\tilde{x}_t)$ occurs with probability $\frac{1}{T}$ and as such we can compute its risk $C_\alpha[\{f_t(\tilde{x}_t)\}_{t=1}^T]$. It is then natural for the player to desire a sequence of losses that has risk as close as possible to the minimum risk sequence of losses (where the sequence is generated by playing only one action). The quantity $\mathcal{R}_T$ makes the previous statement precise.

 A reader familiar with the OBO literature may notice that (\ref{eq:ps_cvar_reg}) already looks like a quantity for which running Online Gradient Descent without a Gradient may yield vanishing regret. Unfortunately, at every step all we observe is $f_t(\tilde{x}_t)$ and not $C_\alpha[F](\tilde{x}_t)$. To obtain a reasonable (not too noisy) evaluation of $C_\alpha[F](\cdot)$ the same $x$ must be played for several rounds. It is possible to design algorithms that follow this idea, however, since we were able to develop better algorithms for the same problem we do not further discuss the details of this somewhat naive approach. 
 
 \section{A Finite-Time Concentration Result for the $CVaR$}\label{sect_cvar}
Before we present the algorithms we must derive a finite-time concentration result for the $CVaR$. This result will be heavily used to prove sublinear regret bounds for both algorithms. In \cite{shapiro2009lectures} the authors present an asymptotic result. Unfortunately, since our goal is to achieve finite-time bounds we could not use it and had to prove our own result. To the best of our knowledge this is the first finite time concentration result for the $CVaR$. 

\begin{theorem}
\label{concentration_cvar}
Suppose $0\leq f(x,\xi)\leq1$  for every $x\in X$ and every $\xi \in \Xi$ . For any $x \in X$, let the N-sample estimate of $CVaR_\alpha[F](x)$ be  $\widehat{CVaR_\alpha[F](x)} := \min_{z \in Z } z + \frac{1}{\alpha N}\sum_{n=1}^N [f(x,\xi_n) - z]_+$. Where $Z:=[0,1]$. It holds that with probability at least $1-\delta$,
\begin{align*}
|CVaR_\alpha[F](x) - \widehat{CVaR_\alpha[F](x)} | \leq O( \sqrt{\frac{ \ln(N / \delta)}{\alpha^2 N}}  ). 
\end{align*} 
\end{theorem}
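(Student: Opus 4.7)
The plan is to reduce the two-sided deviation bound for $CVaR$ to a uniform deviation bound for the inner minimand, and then combine Hoeffding's inequality with an $\epsilon$-net argument on $z \in [0,1]$. Fix $x \in X$ and define, for $z \in [0,1]$,
\begin{equation*}
h(z) := z + \frac{1}{\alpha}\mathbb{E}[F(x)-z]_+, \qquad \hat{h}_N(z) := z + \frac{1}{\alpha N}\sum_{n=1}^N [f(x,\xi_n)-z]_+,
\end{equation*}
so that $C_\alpha[F](x) = \min_{z\in[0,1]} h(z)$ and $\widehat{C_\alpha[F](x)} = \min_{z\in[0,1]}\hat{h}_N(z)$. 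The standard inequality $|\min_z h(z) - \min_z \hat{h}_N(z)| \leq \sup_{z\in[0,1]}|h(z) - \hat{h}_N(z)|$ reduces the goal to a uniform concentration statement about $\hat{h}_N(z) - h(z)$ on $[0,1]$.

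For any fixed $z$, the random variable $[f(x,\xi)-z]_+$ lies in $[0,1]$ since $0 \leq f(x,\xi) \leq 1$, so Hoeffding's inequality gives $|\hat{h}_N(z) - h(z)| \leq \frac{1}{\alpha}\sqrt{\ln(2/\delta')/(2N)}$ with probability at least $1-\delta'$. To extend this pointwise bound to all of $[0,1]$, I would use Lipschitz continuity of both functions in $z$: the derivative of $z \mapsto z + \frac{1}{\alpha}[a-z]_+$ is bounded in magnitude by $1 + 1/\alpha$ uniformly in $a$, so $h$ and $\hat{h}_N$ are both $(1 + 1/\alpha)$-Lipschitz. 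Place a uniform grid $z_1,\ldots,z_M$ on $[0,1]$ with spacing $\epsilon = 1/N$ (so $M \leq N+1$) and apply the pointwise bound with $\delta' = \delta/M$, yielding, via a union bound, that simultaneously for all $i$, $|\hat{h}_N(z_i) - h(z_i)| \leq \frac{1}{\alpha}\sqrt{\ln(2M/\delta)/(2N)}$ with probability at least $1-\delta$.

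Finally, for arbitrary $z \in [0,1]$ pick its nearest grid point $z_i$; the triangle inequality and the Lipschitz bounds give
\begin{equation*}
|\hat{h}_N(z) - h(z)| \leq 2(1 + 1/\alpha)\epsilon + \frac{1}{\alpha}\sqrt{\frac{\ln(2(N+1)/\delta)}{2N}} = O\!\left(\sqrt{\frac{\ln(N/\delta)}{\alpha^2 N}}\right),
\end{equation*}
where the choice $\epsilon = 1/N$ ensures the discretization error is absorbed into the Hoeffding rate. Combined with the max-difference reduction from the first paragraph, this delivers the stated bound.

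The main obstacle I anticipate is simply bookkeeping of the $\alpha$-dependence: one needs the Hoeffding step to contribute the $1/\alpha$ factor outside the square root (from the coefficient in $\hat{h}_N - h$) while the grid's Lipschitz correction must not introduce a worse dependence. Choosing the grid spacing as $\epsilon = 1/N$ (rather than something like $\epsilon \sim 1/\sqrt{N}$) is what keeps the discretization term $O(1/(\alpha N))$, strictly dominated by the $O(\sqrt{\ln(N/\delta)/(\alpha^2 N)})$ Hoeffding term; every other ingredient is a routine covering-plus-concentration argument.
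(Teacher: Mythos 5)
Your proof is correct and reaches the stated bound. The first step --- passing from $|\min_z h(z) - \min_z \hat{h}_N(z)|$ to $\sup_{z\in[0,1]}|h(z)-\hat{h}_N(z)|$ --- is exactly what the paper does (it spells the same inequality out via the empirical minimizer $\bar z$). Where you diverge is in how the uniform deviation over $z$ is established: the paper invokes its Lemma~\ref{unif_convergence}, a general $d$-dimensional uniform-convergence result for Lipschitz function classes quoted from \cite{shalevstochastic}, instantiated with the one-dimensional parameter $z$ and $L,R = O(1/\alpha)$; you instead give a self-contained argument via Hoeffding's inequality at each point of a $1/N$-grid, a union bound over the $N+1$ grid points, and the $(1+1/\alpha)$-Lipschitz continuity of $z \mapsto z + \frac{1}{\alpha}[a-z]_+$ to interpolate. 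Your route is more elementary and avoids the somewhat delicate inversion of the tail bound carried out in the proof of Lemma~\ref{unif_convergence}; it also makes the $\alpha$-dependence completely transparent, and your choice $\epsilon = 1/N$ correctly keeps the discretization term $O(1/(\alpha N))$ subordinate to the $O(\alpha^{-1}\sqrt{\ln(N/\delta)/N})$ Hoeffding term. What the paper's choice buys in exchange is reuse: the same Lemma~\ref{unif_convergence} (and its in-expectation companion, Lemma~\ref{lemma_unif_conv_expectation}) is applied elsewhere over the $d$-dimensional set $X$ to control the concentration error terms, so the authors get both the high-probability and expectation versions of the CVaR concentration from one machine, whereas your argument would need a separate (though equally routine) integration-of-the-tail step to produce the expectation bound used in Theorem~\ref{thm:regret_algo1}.
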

While the previous result holds with high probability it is also possible to derive from it a result that holds in expectation. 

To prove such a  result we had to use a finite time concentration result for Lipschitz functions from \cite{shalevstochastic} applied to the sequence of functions $\{z + \frac{1}{\alpha}[f(x,\xi_t)-z]_+ \}_{t=1}^T$. After this, some extra work had to be done transform this guarantee into one that holds for the $CVAR$. A formal proof of the theorem can be found in the appendix.
  
\section{Algorithm 1} \label{algorithm_1}
In this section we provide an algorithm that obtains vanishing regret while playing an action only once. The key to the algorithm is to look at functions $\mathcal{L}_t(x,z):= z+\frac{1}{\alpha}[f(x,\xi_t)-z]_+$ which by Lemma \ref{cvarf} are closely related to $C_\alpha[F](x)$. Although with one sample we can not evaluate (accurately enough) $C_\alpha[F](\cdot)$, we can evaluate $\mathcal{L}_t$. This observation is important because it will allow us to build one-point gradient estimators of the smoothened function $\hat{\mathcal{L}_t}$ as it is done in \cite{flaxman2005online}. These one-point gradient estimators will allow us to perform a descent step. This idea allows us to obtain sublinear pseudo-regret. The rest of the analysis consists of using the bound on the pseudo-regret to bound the regret.

 \begin{algorithm}[tbh]
   \caption{}
   \label{alg:algo_1}
\begin{algorithmic}
   \STATE {\bfseries Input:} $X \subset \mathbb{R}^d$, $x_1 \in X$, $z_1 \in Z:=[0,1]$ step size $\eta$, $\delta$
   \FOR{$t=1,...,T$}
   \STATE Sample $u \sim \mathbb{S}^{d+1}$
   \STATE Let $u^1 = [u_1;...;u_d]$ and $u^2 = u_{d+1}$
   \STATE Play $\tilde{x}_t:=x_t + \delta u^1$, incur and observe loss $f_t(\tilde{x}_t)$ 
   \STATE Let $ \tilde{z}_t = z_t + \delta u^2 $
   \STATE Let $g^1_t := \frac{(d+1)}{\delta} (\tilde{z}_t+ \alpha^{-1} [f_t(\tilde{x}_t) - \tilde{z}_t ]_+) u^1 $
   \STATE Let $g^2_t := \frac{(d+1)}{\delta} (\tilde{z}_t+ \alpha^{-1} [f_t(\tilde{x}_t) - \tilde{z}_t ]_+) u^2 $\\
    \STATE Update $x_{t+1} \leftarrow \Pi_{X_\delta} (x_t - \eta g^1_t)$\\
    \STATE Update $z_{t+1} \leftarrow \Pi_{Z_\delta} (z_t - \eta g^2_t)$\\
   \ENDFOR
\end{algorithmic}
\end{algorithm}

Here $\mathbb{S}^d$ denotes the uniform distribution over the $d$-dimensional unit sphere, $X_\delta := \{x : \frac{1}{1-\delta}x \in X \}$ and $\Pi_{X}[\cdot]$ denotes the  $||\cdot||_2$ projection onto convex set X.

We have the following two main results. 
\begin{theorem}\label{thm:pseudo_algo1} Using $\eta = \frac{\alpha D_{\mathcal{L}} }{(d+1) T^{3/4}} $ and $\delta = \frac{1}{T^{1/4}}$ Algorithm 1 guarantees:
\begin{align*}
 \mathbb{E}[\bar{\mathcal{R}}_T]  \leq O(\frac{d}{\alpha T^{1/4}}).
 \end{align*}
Where the expectation is taken over the random draw of functions and the internal randomization of the algorithm. $D_\mathcal{L}$ is specified in the appendix. 
\end{theorem}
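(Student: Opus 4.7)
The plan is to recognize Algorithm~1 as a single instance of FKM-style bandit projected gradient descent in the joint variable $y := (x,z) \in Y := X \times Z$ on the losses $\mathcal{L}_t(y) := z + \alpha^{-1}[f(x,\xi_t) - z]_+$. First I would verify that each $\mathcal{L}_t$ is jointly convex in $y$ (the sum of a linear term in $z$ and the composition of the convex nondecreasing map $[\,\cdot\,]_+$ with the jointly convex argument $f(x,\xi_t) - z$), bounded by $D_{\mathcal{L}} := 1 + 1/\alpha$, and jointly Lipschitz with constant $L_{\mathcal{L}} = O(G/\alpha)$ (since $|\partial_z \mathcal{L}_t| \leq 1/\alpha$ and $\|\nabla_x \mathcal{L}_t\| \leq G/\alpha$). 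With $u \sim \mathbb{S}^{d+1}$ and $\tilde{y}_t := y_t + \delta u$, the vector $g_t = \frac{d+1}{\delta}\mathcal{L}_t(\tilde{y}_t)\, u$ constructed in the algorithm is exactly the one-point gradient estimator of Lemma~\ref{smooth_f} applied to $\mathcal{L}_t$, so $\mathbb{E}[g_t \mid y_t] = \nabla\hat{\mathcal{L}}_t(y_t)$ and $|\hat{\mathcal{L}}_t(y) - \mathcal{L}_t(y)| \leq \delta L_{\mathcal{L}}$. The $x$- and $z$-updates in Algorithm~1 are then simply the coordinate projections of the joint step $y_{t+1} = \Pi_{Y_\delta}(y_t - \eta g_t)$ onto $Y_\delta := X_\delta \times Z_\delta$.

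I would then run the standard Zinkevich/FKM analysis of OGD with unbiased noisy gradients on the smoothed convex functions $\hat{\mathcal{L}}_t$: for any $y^\star \in Y_\delta$, convexity of $\hat{\mathcal{L}}_t$ together with the norm bound $\|g_t\| \leq (d+1)D_{\mathcal{L}}/\delta$ yields
\begin{equation*}
\mathbb{E}\sum_{t=1}^T \bigl[\hat{\mathcal{L}}_t(y_t) - \hat{\mathcal{L}}_t(y^\star)\bigr] \leq \frac{D_Y^2}{2\eta} + \frac{\eta T (d+1)^2 D_{\mathcal{L}}^2}{2\delta^2}.
\end{equation*}
Replacing $\hat{\mathcal{L}}_t(y_t)$ by the actually-paid $\mathcal{L}_t(\tilde{y}_t)$ and $\hat{\mathcal{L}}_t(y^\star)$ by $\mathcal{L}_t(y^\star)$ costs an additional $O(\delta L_{\mathcal{L}} T)$ via the smoothing bound and Lipschitzness of $\mathcal{L}_t$.

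The final step is to translate this $\mathcal{L}$-pseudo-regret into the $CVaR$-pseudo-regret $\bar{\mathcal{R}}_T$. On the loss side, $\tilde{y}_t$ is independent of $\xi_t$, so Lemma~\ref{cvarf} gives
\begin{equation*}
\mathbb{E}_{\xi_t}\!\left[\mathcal{L}_t(\tilde{x}_t,\tilde{z}_t) \mid \tilde{x}_t,\tilde{z}_t\right] = \tilde{z}_t + \alpha^{-1}\mathbb{E}[f(\tilde{x}_t,\xi) - \tilde{z}_t]_+ \geq C_\alpha[F](\tilde{x}_t).
\end{equation*}
On the comparator side, let $x^\star \in \arg\min_{x \in X} C_\alpha[F](x)$, pick $z^\star \in [0,1]$ achieving the minimum in Lemma~\ref{cvarf} at $x^\star$, and let $y^\star$ be the $\delta$-shrinkage of $(x^\star, z^\star)$ into $Y_\delta$; since $C_\alpha[F]$ inherits an $O(G/\alpha)$ Lipschitz constant from $\mathcal{L}_t$ through partial minimization, the shrinkage costs $O(\delta L_{\mathcal{L}} D_X)$, so $\mathbb{E}[\mathcal{L}_t(y^\star)] \leq \min_{x \in X} C_\alpha[F](x) + O(\delta L_{\mathcal{L}} D_X)$. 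Combining everything,
\begin{equation*}
T\, \mathbb{E}[\bar{\mathcal{R}}_T] \;\leq\; \frac{D_Y^2}{2\eta} + \frac{\eta T (d+1)^2 D_{\mathcal{L}}^2}{2\delta^2} + O(\delta L_{\mathcal{L}} D_X\, T),
\end{equation*}
and the stated choices $\delta = T^{-1/4}$ and $\eta = \alpha D_{\mathcal{L}}/((d+1) T^{3/4})$ balance the three terms to $O((d/\alpha)\, T^{3/4})$, yielding the claimed bound $\mathbb{E}[\bar{\mathcal{R}}_T] = O(d/(\alpha T^{1/4}))$.

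The main obstacle is keeping the directions of all the approximation inequalities simultaneously correct: the Jensen-style inequality through the inner min must be applied on the loss side, while equality at the matched $(x^\star, z^\star)$ is used on the comparator side, and the shrinkage/smoothing biases must be allocated consistently. A secondary subtlety is ensuring $D_{\mathcal{L}}$ and $L_{\mathcal{L}}$ each scale only as $1/\alpha$ (not worse), so that the final dependence is $O(d/\alpha)$ rather than $O(d/\alpha^2)$ as a naive bookkeeping might suggest.
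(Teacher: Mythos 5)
Your proposal is correct and follows essentially the same route as the paper: joint FKM-style bandit gradient descent on $y=(x,z)$ over the losses $\mathcal{L}_t$, smoothing plus shrinkage of the feasible set, the one-sided inequality $\mathbb{E}_{\xi}[\mathcal{L}_t(x,z)]\geq C_\alpha[F](x)$ on the played points, and equality (via Lemma~\ref{cvarf}) at the matched comparator $(x^\star,z^\star)$ — this is exactly the content of Lemma~\ref{algo1_L} and the paper's proof of Theorem~\ref{thm:pseudo_algo1}. The only cosmetic discrepancy is that you use $D_{\mathcal{L}}$ for the function-value bound $1+1/\alpha$ whereas the appendix reserves it for the diameter $D_X+1$, but your bookkeeping of the $1/\alpha$ factors and the balancing of the three error terms matches the paper's.
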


\begin{theorem}\label{thm:regret_algo1}
Let $f(x,\xi)$ be strongly convex with parameter $\beta>0$. Algorithm 1 guarantees
\begin{align*}
\mathbb{E} [�\mathcal{R}_T ]\leq \tilde{O} (\frac{d^{1/2}}{\alpha^{3/2} \beta^{1/2} T^{1/8}} ).
\end{align*}  
Where the expectation is taken over the random draw of functions and the internal randomization of the algorithm.
\end{theorem}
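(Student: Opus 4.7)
The plan is to bound $\mathbb{E}[\mathcal{R}_T]$ by leveraging the already-proved pseudo-regret bound (Theorem \ref{thm:pseudo_algo1}) together with strong convexity and a Lipschitz-stability property of the empirical $CVaR$. Let $x^{*}\in X$ minimize $C_\alpha[F](\cdot)$. Since $f(\cdot,\xi)$ is $\beta$-strongly convex for every $\xi$, Lemma \ref{cvarf} implies that $C_\alpha[F](\cdot)$ is $\beta$-strongly convex as well, so
\begin{align*}
\bar{\mathcal{R}}_T \;\geq\; \frac{\beta}{2T}\sum_{t=1}^T \|\tilde{x}_t - x^{*}\|^2.
\end{align*}
Combining with Theorem \ref{thm:pseudo_algo1} and Jensen's inequality yields
\begin{align*}
\mathbb{E}\Big[\tfrac{1}{T}\sum_{t=1}^T \|\tilde{x}_t - x^{*}\|\Big] \;\leq\; O\!\left(\tfrac{d^{1/2}}{\alpha^{1/2}\beta^{1/2}\, T^{1/8}}\right).
\end{align*}

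Next I would exploit the variational form $C_\alpha[\{a_t\}] = \min_{z\in Z} z + \tfrac{1}{\alpha T}\sum_t [a_t - z]_+$. Because $[\cdot]_+$ is $1$-Lipschitz and $f(\cdot,\xi)$ is $G$-Lipschitz, comparing the realized loss sequence $\{f_t(\tilde{x}_t)\}$ to the counterfactual sequence $\{f_t(x^{*})\}$ evaluated against the same samples $\xi_t$ gives
\begin{align*}
\big|\, C_\alpha[\{f_t(\tilde{x}_t)\}_{t=1}^T] - C_\alpha[\{f_t(x^{*})\}_{t=1}^T] \,\big| \;\leq\; \frac{G}{\alpha T}\sum_{t=1}^T \|\tilde{x}_t - x^{*}\|,
\end{align*}
whose expectation is at most $O(d^{1/2}/(\alpha^{3/2}\beta^{1/2}\, T^{1/8}))$ by step 1; this is the dominant term. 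It then remains to compare $C_\alpha[\{f_t(x^{*})\}]$ with $\min_{x\in X} C_\alpha[\{f_t(x)\}]$. For this I would upgrade Theorem \ref{concentration_cvar} to a uniform concentration bound over $X$: both $C_\alpha[F](x)$ and its $T$-sample empirical version are $(G/\alpha)$-Lipschitz in $x$ (immediate from the variational form), so cover $X$ with an $\epsilon$-net of cardinality $(3D_X/\epsilon)^d$, apply Theorem \ref{concentration_cvar} at each center with failure probability $\delta/N_\epsilon$, and union-bound. Taking $\epsilon \asymp 1/\sqrt{T}$ delivers
\begin{align*}
\sup_{x\in X} \big|\, C_\alpha[\{f_t(x)\}_{t=1}^T] - C_\alpha[F](x) \,\big| \;\leq\; \tilde{O}\!\left(\tfrac{\sqrt{d}}{\alpha\sqrt{T}}\right)
\end{align*}
with high probability, which simultaneously controls $C_\alpha[\{f_t(x^{*})\}] - C_\alpha[F](x^{*})$ and $C_\alpha[F](x^{*}) - \min_x C_\alpha[\{f_t(x)\}]$ (the latter using $C_\alpha[F](x^{*}) \leq C_\alpha[F](x)$ for any $x$). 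The high-probability estimate is turned into an expectation bound by absorbing the $\delta$-failure event, using that all quantities are bounded by $1/\alpha$.

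The main obstacle is the lossy step from pseudo-regret to a usable distance bound: strong convexity only gives an $\ell_2$-average bound on $\|\tilde{x}_t - x^{*}\|$, and the passage to $\ell_1$ via Jensen costs a square root. This is precisely why the final rate degrades from the $T^{-1/4}$ of pseudo-regret to $T^{-1/8}$, while the uniform-concentration contribution $\tilde{O}(\sqrt{d}/(\alpha\sqrt{T}))$ is of strictly lower order and is absorbed into the $\tilde{O}$. Everything else is routine manipulation of the variational form combined with the already-proved concentration result, and the three pieces sum to the claimed $\tilde{O}(d^{1/2}/(\alpha^{3/2}\beta^{1/2}\, T^{1/8}))$ bound on $\mathbb{E}[\mathcal{R}_T]$.
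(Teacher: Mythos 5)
Your proposal is correct and follows essentially the same route as the paper: the same decomposition into a term comparing the realized loss sequence to the counterfactual sequence at $x^{*}$ (controlled via the variational form of the empirical $CVaR$, the Lipschitzness of $f$, strong convexity of $C_\alpha[F]$ with the first-order optimality condition at $x^*$, Cauchy--Schwarz/Jensen, and the pseudo-regret bound of Theorem \ref{thm:pseudo_algo1}), plus a concentration-error term bounded by $\tilde{O}(\sqrt{d}/(\alpha\sqrt{T}))$ via uniform concentration of the empirical $CVaR$ over $X$. The only cosmetic difference is that you build the uniform bound from Theorem \ref{concentration_cvar} by an explicit $\epsilon$-net, whereas the paper invokes the uniform-convergence lemma of \cite{shalevstochastic} (Lemma \ref{lemma_unif_conv_expectation}) directly; both yield the same lower-order contribution.
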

The proofs of these theorems can be found in the appendix. 

\section{Algorithm 2}\label{algorithm_2}
Algorithm 1, while it is intuitive and easy to implement, does not achieve the optimal pseudo-regret bound of $\frac{1}{\sqrt{T}}$. In this section, we adapt an algorithm from \cite{agarwal2011stochastic} that achieves the optimal regret bound (up to logarithmic factors), unfortunately its dependency on $d$ is less than ideal. We consider the cases $d=1$ and $d>1$ separately. 

\subsection{The 1-Dimensional Case}
For simplicity, in this section we assume that $X=[0,1]$ and that $f(\cdot,\xi)$ is $1$-Lipschitz continuous for every $\xi \in \Xi$. This implies that $C_\alpha[F](\cdot)$ is also $1$-Lipschitz continuous (see Lemma \ref{C_alpha_Lipschitz} in the appendix). We let $LB_{\gamma_i}(x)$ and $UB_{\gamma_i}(x)$ denote the $C_{\alpha}[F](\cdot)$ lower and upper bounds of the confidence intervals (CI's) of width $\gamma_i$ at point $x$. That is, sample point $x$ $\frac{\ln(T/(\alpha \gamma))}{\gamma_i^2 \alpha^2}$ times, compute the empirical $CVaR_\alpha$, $\hat{C}_\alpha[F](x)$ and let $UB_{\gamma_i}(x) := \hat{C}_\alpha[F](x) + \gamma_i $ and  $LB_{\gamma_i}(x) := \hat{C}_\alpha[F](x) - \gamma_i $. 

\begin{algorithm*}[tbh]
   \caption{$(d=1)$}
   \label{alg:algo_2}
\begin{algorithmic}
   \STATE {\bfseries Input:} Input: $X\in [0,1]$, total number of time-steps $T$
   \STATE Let $l_1:=0, r_1:=1$   
   \FOR{epoch $\tau=1,2,...$}
   \STATE Let $w_\tau := r_{\tau}- l_{\tau}$
   \STATE Let $x_{l}:= l_{\tau} + w_{\tau}/4, x_c = l_{\tau} + w_{\tau}/2, x_r := l_{\tau}+3w_{\tau}/4 $
   \FOR{round $i=1,2,...:$}
   \STATE Let $\gamma_i = 2^{-i}$
   \STATE For each $x \in \{x_{l}, x_{c}, x_{r}\}$ play $x$ $\frac{\ln(T/(\alpha \gamma))}{\gamma_i^2 \alpha^2}$ times and build CI's: ${[\hat{C}_\alpha[F](x_k)]-\gamma_i, \hat{C}_\alpha[F](x_k)+\gamma_i]}$ for $k\in\{l,c,r\}$
   \IF{$\max\{LB_{\gamma_i}(x_l), LB_{\gamma_i}(x_r)\} \geq \min\{UB_{\gamma_i}(x_l), UB_{\gamma_i}(x_r)\}+\gamma_i$ (Case 1)}
   	\IF{$LB_{\gamma_i}(x_l)\geq LB_{\gamma_i}(x_r)$}
   		\STATE set $l_{\tau +1}:=x_l$ and $r_{\tau+1}:= r_{\tau}$ 
   	\ELSE 
   		\STATE set $l_{\tau +1}:=l_{\tau}$ and $r_{\tau+1}:= x_r$
   	\ENDIF
   	\STATE Continue to epoch $\tau +1$
   \ELSIF{$\max\{LB_{\gamma_i}(x_l), LB_{\gamma_i}(x_r)\} \geq UB_{\gamma_i}(x_c) + \gamma_i $  (Case 2)}
    	\IF{$LB_{\gamma_i}(x_l)\geq LB_{\gamma_i}(x_r)$}
		\STATE set $l_{\tau +1}:=x_l$ and $r_{\tau+1}:= r_{\tau}$
	\ELSE 
		\STATE set $l_{\tau +1}:=l_{\tau}$ and $r_{\tau+1}:= x_r$
	\ENDIF
	\STATE Continue to epoch $\tau +1$
   \ENDIF{ (Case 3)} 
   \ENDFOR
    \ENDFOR
\end{algorithmic}
\end{algorithm*}

The algorithm proceeds in epochs and rounds. In epoch $\tau$ the algorithm works with region $[l_{\tau},r_{\tau}]$. In this region we will be playing three points $x_l, x_c, x_r$ ($x_c$ is the center point) for several rounds $i=1,2,...$ . In each round $i$ the algorithm will play $\frac{\ln(T/(\alpha \gamma))}{\alpha^2 \gamma_i^2}$ times the aforementioned points and build CI's for $C_{\alpha}[F]$. Roughly speaking, the reason why the algorithm works is because in every round we are 1) either playing points such that we are not suffering too much pseudo-regret or 2) we are quickly identifying a subregion of the working region which only contains ``bad points" and discarding it.  Every time 2) occurs we are shrinking the working region by a constant factor, this will guarantee that after not too many rounds we are only working with a small feasible region.

For convenience  we denote $h(x):= C_{\alpha}[F](x)$ and $x^*:=\text{argmin}_{x\in X} h(x)$. Notice that the minimizer need not be unique in which case we choose one arbitrarily. At the end of a round one of the following occurs:

\textbf{Case 1.} The CI's around $h(x_l)$ and $h(x_r)$ are sufficiently separated. If this is the case, then by convexity we can discard one fourth of the working feasible region: either the one to the left of $x_l$ or the one to the right of $x_r$ .

 \textbf{Case 2.} If Case 1 does not occur, the algorithm checks if the CI around $h(x_c)$ is sufficiently below at least one of the CI's around $h(x_l)$ or $h(x_r)$. If this is the case then we can discard one fourth of the working feasible region.
 
\textbf{Case 3.} If neither Case 1 or Case 2 occurs then we can be sure that the function is flat in the working feasible region (as measured by $\gamma$) and thus we are not incurring a very high pseudo-regret.

The main results of this section are the following. 
\begin{theorem}
\label{thm:first_theorem_algo2_1d}
With probability at least $1-\frac{1}{T}$, Algorithm 2 (1-D) guarantees
\begin{align*}
\bar{\mathcal{R}}_T \leq O(\frac{ \ln(T)}{\sqrt{T} \alpha} \ln(\frac{\alpha T}{\ln(T)})). 
\end{align*}
\end{theorem}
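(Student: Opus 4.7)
The plan is to first control the confidence intervals via Theorem~\ref{concentration_cvar}, then use convexity to establish (a) that $x^*$ always lies in the current working interval $[l_\tau,r_\tau]$ and (b) that every probe's excess loss is $O(\gamma_i)$ in round $i$, and finally to sum with a Cauchy--Schwarz on the per-epoch sample budgets.

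\textbf{Good event and invariant.} I invoke Theorem~\ref{concentration_cvar} with confidence parameter $\Theta(1/T^2)$ at every CI-building event. There are at most $T$ such events (each consumes at least one play), and the sample size $N_i = \ln(T/(\alpha\gamma_i))/(\alpha^2\gamma_i^2)$ is exactly what Theorem~\ref{concentration_cvar} requires for a CI of half-width $\gamma_i$ at this level, up to constants. A union bound yields an event $\mathcal{E}$, occurring with probability at least $1-1/T$, on which $LB_{\gamma_i}(x) \leq h(x) \leq UB_{\gamma_i}(x)$ for every interval built; the rest of the argument runs on $\mathcal{E}$. I then prove by induction on $\tau$ that $x^* \in [l_\tau,r_\tau]$. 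In Case~1 with WLOG $LB_{\gamma_i}(x_l) \geq LB_{\gamma_i}(x_r)$, the separation inequality together with CI validity gives $h(x_l) \geq h(x_r)+\gamma_i$; convexity of $h$ rules out $x^* \leq x_l$ (otherwise $h$ would be non-decreasing past $x^*$, forcing $h(x_l) \leq h(x_r)$), so $x^* > x_l = l_{\tau+1}$. Case~2 is analogous, comparing $x_c$ to $x_l$ or $x_r$ and using $x_c \in (x_l,x_r)$.

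\textbf{Per-round flatness.} The key structural claim is that in every round $i$ of every epoch $\tau$, $h(x)-h(x^*) = O(\gamma_i)$ for each probe $x \in \{x_l,x_c,x_r\}$. The case $i=1$ is immediate from $h \in [0,1]$ and $\gamma_1=1/2$. For $i \geq 2$, Case~3 must have fired at round $i-1$ (otherwise the epoch would have ended there), so the failures of Cases~1 and 2 with CI half-width $\gamma_{i-1}$, together with CI validity on $\mathcal{E}$, yield $|h(x_k)-h(x_{k'})| = O(\gamma_{i-1})$ for all $k,k' \in \{l,c,r\}$. The rest is convexity: using the invariant $x^*\in[l_\tau,r_\tau]$, I split into $x^*\in[l_\tau,x_l)$, $x^*\in[x_l,x_r]$, or $x^*\in(x_r,r_\tau]$. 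In the first sub-case, write $x_l = \lambda x^* + (1-\lambda)x_c$ with $\lambda = (x_c-x_l)/(x_c-x^*) \in [1/2,1]$ (thanks to $x_c-x_l=w_\tau/4$ and $x_c-x^*\leq w_\tau/2$); convexity gives $h(x_l) \leq \lambda h(x^*)+(1-\lambda)h(x_c)$, and rearranging with probe flatness gives $h(x^*) \geq h(x_l)-O(\gamma_{i-1})$. The middle sub-case uses $h(x^*) \leq \lambda h(x_l)+(1-\lambda)h(x_r) \leq h(x_l)+O(\gamma_{i-1})$, and the third is symmetric to the first.

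\textbf{Summing and main obstacle.} The pseudo-regret accumulated during round $i$ of epoch $\tau$ is $3N_i \cdot O(\gamma_i) = O(\ln(T/(\alpha\gamma_i))/(\alpha^2\gamma_i))$. Since $\gamma_i=2^{-i}$, the sum over $i=1,\ldots,i_\tau$ is geometric and dominated by the last term, giving per-epoch pseudo-regret $O(\ln T/(\alpha^2\gamma_{i_\tau}))$. On the other hand, the total plays in epoch $\tau$ satisfy $T_\tau = \Theta(\ln T/(\alpha^2\gamma_{i_\tau}^2))$, so $\gamma_{i_\tau} = \Omega(\sqrt{\ln T}/(\alpha\sqrt{T_\tau}))$ and the per-epoch bound becomes $O(\sqrt{T_\tau\ln T}/\alpha)$. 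Widths shrink by a factor $3/4$ per epoch, so there are $E=O(\log T)$ epochs; applying Cauchy--Schwarz to $\sum_\tau\sqrt{T_\tau}$ with $\sum_\tau T_\tau \leq T$ yields total pseudo-regret $O(\sqrt{ET\ln T}/\alpha)$. Dividing by $T$ and retaining the $\ln(\alpha T/\ln T)$ factor hidden inside $N_i$ recovers the stated bound. The delicate step of the whole proof is the probe-flatness lemma: extracting a clean $O(\gamma_{i-1})$ bound from convexity when $x^*$ lies outside $[x_l,x_r]$ requires the convex-combination coefficient $\lambda$ to stay bounded below, and this is precisely why the algorithm places $x_l,x_r$ at the quartiles of $[l_\tau,r_\tau]$—the spacing $x_c-x_l = w_\tau/4$ forces $\lambda \geq 1/2$ and keeps the constants tight.
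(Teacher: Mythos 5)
Your overall architecture (good event by union bound, the invariant $x^*\in[l_\tau,r_\tau]$, the probe-flatness lemma via convexity and the quartile spacing, and the geometric summation within an epoch) is the same as the paper's, and your per-epoch accounting via Cauchy--Schwarz on $\sum_\tau\sqrt{T_\tau}$ is a legitimate (in fact slightly sharper) alternative to the paper's uniform bound $\gamma_{i_\tau}\geq\gamma_{\min}$. However, there is one genuine gap: the claim ``widths shrink by a factor $3/4$ per epoch, so there are $E=O(\log T)$ epochs'' does not follow from shrinkage alone. Geometric shrinkage gives no upper bound on the number of epochs unless you also have a \emph{lower} bound on the width of any epoch the algorithm can complete; without that, nothing you have proved rules out the interval shrinking indefinitely while epochs keep terminating. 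Your Cauchy--Schwarz step needs this $E$, so the bound is not closed. The paper closes it by proving a strictly stronger invariant than yours: Lemma~\ref{lemma_never_throw_x} shows that when an epoch ends in round $i$, \emph{every} point $x$ with $h(x)\leq h(x^*)+\gamma_i$ is retained, not merely $x^*$. Since the per-point sample count $N_i$ cannot exceed $T$, every round has $\gamma_i\geq\gamma_{\min}:=(\alpha^2T/(\kappa\ln T))^{-1/2}$, so by $1$-Lipschitzness of $h$ the entire interval $[x^*-\gamma_{\min},x^*+\gamma_{\min}]$ survives every epoch; hence $w_\tau\geq 2\gamma_{\min}$ and $(3/4)^\tau\geq 2\gamma_{\min}$ gives $\tau\leq O(\log_{4/3}(\alpha^2T/\ln T))$. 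Your weaker invariant (only $x^*$ is retained) cannot produce this width floor, so you either need to upgrade it to the paper's version or argue separately (e.g., via Lipschitzness, that Cases 1 and 2 can only fire when $\gamma_i\leq w_\tau/2$, which again forces $w_\tau\geq 2\gamma_{\min}$ for any terminating epoch).

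A smaller issue: in the middle sub-case of your flatness lemma ($x^*\in[x_l,x_r]$), the inequality $h(x^*)\leq\lambda h(x_l)+(1-\lambda)h(x_r)$ bounds $h(x^*)$ from \emph{above}, which is the wrong direction --- you need $h(x^*)\geq\min_k h(x_k)-O(\gamma_{i-1})$. The correct argument is the same trick as your first sub-case applied to whichever probe lies on the far side of $x^*$ (e.g., if $x^*\in[x_l,x_c]$, write $x_c$ as a convex combination of $x^*$ and $x_r$ with coefficient at least $1/2$). This is how the paper handles it, splitting only on $x^*\leq x_c$ versus $x^*>x_c$. This slip is cosmetic; the epoch-count gap is the substantive one.
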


\begin{theorem}
\label{thm:second_theorem_algo2_1d}
Let $f(\cdot,\xi)$ be strongly convex with parameter $\beta >0$ for all $\xi \in \Xi$. With probability at least $1 - \frac{3}{T}$, Algorithm 2 (1-D) guarantees
\begin{align*}
\mathcal{R}_T \leq \tilde{O}(\frac{1}{\alpha^{3/2} \beta^{1/2} T^{1/4}}).
\end{align*}
\end{theorem}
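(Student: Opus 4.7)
The plan is to reduce the CVaR-regret $\mathcal{R}_T$ to the pseudo-CVaR-regret $\bar{\mathcal{R}}_T$ that Theorem~\ref{thm:first_theorem_algo2_1d} already controls. Strong convexity converts a small pseudo-regret into a small cumulative distance between the played points and the true minimizer $x^* := \arg\min_{x \in X} h(x)$, where $h := C_\alpha[F]$; combined with the $1$-Lipschitz continuity of $f(\cdot,\xi)$ and a variational bound on empirical CVaR, this controls the difference $C_\alpha(\{f_t(\tilde{x}_t)\}) - C_\alpha(\{f_t(x^*)\})$. A uniform concentration argument based on Theorem~\ref{concentration_cvar} then handles the remaining gap between $C_\alpha(\{f_t(x^*)\})$ and $\min_{x\in X} C_\alpha(\{f_t(x)\})$.

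First, by Lemma~\ref{cvarf}, $h$ is $\beta$-strongly convex, so $h(\tilde{x}_t) - h(x^*) \geq \tfrac{\beta}{2}\|\tilde{x}_t - x^*\|^2$. Summing over $t$ and invoking Theorem~\ref{thm:first_theorem_algo2_1d} yields $\sum_t \|\tilde{x}_t - x^*\|^2 \leq \tfrac{2T\bar{\mathcal{R}}_T}{\beta} \leq \tilde{O}(\sqrt{T}/(\alpha\beta))$ on an event of probability $\geq 1-\tfrac{1}{T}$, and Cauchy--Schwarz then gives $\sum_t \|\tilde{x}_t - x^*\| \leq \tilde{O}(T^{3/4}/\sqrt{\alpha\beta})$.

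Second, decompose $\mathcal{R}_T = [C_\alpha(\{f_t(\tilde{x}_t)\}) - C_\alpha(\{f_t(x^*)\})] + [C_\alpha(\{f_t(x^*)\}) - \min_x C_\alpha(\{f_t(x)\})]$. For the first bracket, let $z^* := \arg\min_{z} z + \tfrac{1}{\alpha T}\sum_t [f_t(x^*) - z]_+$; plugging $z^*$ into the variational expression for $C_\alpha(\{f_t(\tilde{x}_t)\})$ and using $[a+b]_+ \leq [a]_+ + |b|$ together with $|f_t(\tilde{x}_t) - f_t(x^*)| \leq \|\tilde{x}_t - x^*\|$ gives $C_\alpha(\{f_t(\tilde{x}_t)\}) - C_\alpha(\{f_t(x^*)\}) \leq \tfrac{1}{\alpha T}\sum_t \|\tilde{x}_t - x^*\| \leq \tilde{O}(1/(\alpha^{3/2}\beta^{1/2} T^{1/4}))$, which matches the target rate. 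For the second bracket, build an $\epsilon$-net of $[0,1]$ of size $O(1/\epsilon)$ with $\epsilon = 1/T$, apply Theorem~\ref{concentration_cvar} at each net point and union-bound to obtain $|C_\alpha(\{f_t(x)\}) - C_\alpha[F](x)| \leq \tilde{O}(1/(\alpha\sqrt{T}))$ uniformly over the net with probability $\geq 1-\tfrac{1}{T}$; extend to all of $[0,1]$ via the $1$-Lipschitz continuity of both $C_\alpha[F](\cdot)$ (Lemma~\ref{C_alpha_Lipschitz} in the appendix) and $C_\alpha(\{f_t(\cdot)\})$ (same coherent-risk-measure argument applied to the empirical distribution). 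Since $h(x^*) = \min_x h(x)$, this yields $C_\alpha(\{f_t(x^*)\}) - \min_x C_\alpha(\{f_t(x)\}) \leq \tilde{O}(1/(\alpha\sqrt{T}))$, a lower-order term.

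The main obstacle is the uniform concentration step: Theorem~\ref{concentration_cvar} is pointwise, but $\hat{x}^* := \arg\min_x C_\alpha(\{f_t(x)\})$ is a data-dependent function of the same samples $\{\xi_t\}_{t=1}^T$, so naive pointwise concentration at $\hat{x}^*$ fails. The Lipschitz/$\epsilon$-net device circumvents this by paying a $\log(1/\epsilon)$ union-bound cost (absorbed into $\tilde{O}$) and an $O(\epsilon)$ discretization error (absorbed using the $1$-Lipschitz continuity of both the true and the empirical $C_\alpha$), with the relatively coarse $T^{-1/4}$ target rate leaving ample slack. A final union bound over the three high-probability events (pseudo-regret bound, uniform concentration over the net, and pointwise concentration at $x^*$) delivers the theorem with total failure probability at most $3/T$.
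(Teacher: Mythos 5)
Your proposal is correct and follows essentially the same route as the paper: the same decomposition of $\mathcal{R}_T$ into a term controlled via the variational CVaR formula, $1$-Lipschitzness, Cauchy--Schwarz, strong convexity and the pseudo-regret bound of Theorem~\ref{thm:first_theorem_algo2_1d}, plus a concentration-error term, with the same $3/T$ failure-probability accounting. The only cosmetic difference is in the uniform concentration step for the data-dependent empirical minimizer: you build an explicit $\epsilon$-net over $[0,1]$ and union-bound the pointwise bound of Theorem~\ref{concentration_cvar}, whereas the paper invokes its uniform convergence result (Lemma~\ref{unif_convergence}, itself a covering-type bound) applied to the functions $\phi(x,y)=y+\frac{1}{\alpha}[f(x)-y]_+$; both yield the same $\tilde{O}(1/(\alpha\sqrt{T}))$ lower-order term.
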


We follow \cite{agarwal2011stochastic} for the analysis of the algorithm. The main difference in the analysis is that we must build estimates of the $CVaR$ of the random loss at every point instead of building them for the expected loss. Because of this, we have to use our concentration result from Section \ref{sect_cvar}. This directly affects how many times we must choose an action.The detailed analysis of the algorithm and the proofs of the theorems in this section can be found in the appendix. 
  
\subsection{The $d$-Dimensional Case}

Let us first consider the problem of minimizing a convex function over a bounded set with a first-order oracle (i.e. a gradient and function value oracle). For simplicity let us assume that the convex set is a ball. An ellipsoid-type method would work really well in this setup because of the following. By querying the first order oracle at any point (due to convexity) we could identify a subregion of the current feasible region where the function value is worse than the function value at the point we made the query. If we could somehow discard that bad portion of the feasible set, and the size of this bad region is big enough, by iterating the procedure (assuming this can be done) we should end up with a set that only has points close to optimal.

Let us now consider a similar but harder problem of minimizing a convex function over a a bounded set (say a ball) with a zeroth-order oracle (i.e. a function value oracle). In this setup, with one query, we can no longer identify a subregion of the current feasible region where the function values are worse than the function value at the point we made the query. A first approach to tackle this problem is the following. Build a small regular simplex centered at the origin of the ball and query the function at its vertices. Assume the maximal function value occurs at vertex $y'$, then by convexity of the function one can conclude that the cone generated by reflecting the simplex around $y'$ is a region where the function values are bad. Since we have identified a bad region of the feasible set we would like to discard it and keep iterating our method, unfortunately what remains of the ball when we discard the cone is a non-convex set we can not keep iterating the method. To try to fix the previous one could try to find the minimum volume enclosing ellipsoid of the non-convex set and keep iterating. Unfortunately this does not work since the minimum volume enclosing ellipsoid will not have sufficiently small volume \cite{nemirovskii1983problem}. The reason this occurs is that the angle of the cone generated by reflecting the simplex around $y'$ is not wide enough. In \cite{nemirovskii1983problem} the authors fix the previous by constructing a pyramid (with wide enough angle) with $y'$ as its apex and sample the vertices of the pyramid. If we are lucky enough and $y'$ has the maximal function value among all the vertices of the pyramid, we can then discard the cone generated by reflecting the pyramid around $y'$ and enclose that region in the minimum volume ellipsoid. However, if we were not lucky enough and $y'$ did not have the maximal function value then, Nemirovski and Yudin \cite{nemirovskii1983problem}, show that by repeatedly building a new pyramid with apex at the point with maximal function value we will identify a bad region after building not too many pyramids. It is not to hard to see that the previous approach may work even if we have a noisy-zeroth-order oracle, as long as the noise is not too large. The previous approach describes an optimization procedure but by itself it does not guarantee low regret. However, by incorporating center points as done in \cite{agarwal2011stochastic}, sublinear regret can be achieved. Due to a lack of space the algorithm and its analysis can be found in the appendix.
The main results from this section are the following. 

\begin{theorem}\label{thm:pseudo_regret_algo2_full_dim}
Algorithm 2 ($d$-D) run with parameters $c_1\geq 64, c_2\leq 1/32$ and 
\begin{align*}
\Delta_{\tau}(\gamma) = \big(\frac{6c_1d^4}{c_2^2} +3 \big) \gamma, \quad \bar{\Delta}_{\tau}(\gamma) = \big( \frac{6c_1d^4}{c_2^2}+5\big)\gamma,
\end{align*}
guarantees that with probability at least $1-\frac{1}{T}$
\begin{align*}
\bar{\mathcal{R}}_T \leq \tilde{O}(\frac{d^{16}}{\alpha^2 \sqrt{T}}).
\end{align*}
\end{theorem}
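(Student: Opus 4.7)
The plan is to adapt the analysis of the bandit convex optimization algorithm of \cite{agarwal2011stochastic} to the CVaR setting. The crucial substitution is to replace their Hoeffding-type concentration for empirical means by our finite-time CVaR concentration (Theorem~\ref{concentration_cvar}). Throughout the argument let $h(x) := C_\alpha[F](x)$ and $x^* \in \mathrm{argmin}_{x \in X} h(x)$; recall from Lemma~\ref{cvarf} that $h$ is convex.

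\emph{Clean event.} I would first condition on the event $\mathcal{E}$ that every confidence interval built during the run of the algorithm actually contains the true value of $h$ at the corresponding sampled point. For a target half-width $\gamma$, Theorem~\ref{concentration_cvar} says that $\tilde{O}(1/(\alpha^2 \gamma^2))$ samples suffice at a single point with failure probability at most $1/T^2$. A union bound over the $\mathrm{poly}(d,T)$ intervals constructed along the run then yields $\Pr[\mathcal{E}] \geq 1 - 1/T$, and every subsequent statement is made on $\mathcal{E}$.

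\emph{Epoch-level volume reduction.} At the start of epoch $\tau$ the algorithm maintains an ellipsoid $E_\tau \ni x^*$. Within the epoch it samples a center point $y_\tau$ of $E_\tau$ together with the vertices of a sequence of Nemirovski--Yudin pyramids at successively finer scales $\gamma_i = 2^{-i}$. Convexity of $h$ together with the pyramid reflection argument of \cite{nemirovskii1983problem} guarantees that whenever the CI around some apex $y'$ separates from those of the remaining vertices by $\Delta_\tau(\gamma_i)$, one can discard a cone of certifiably suboptimal points and enclose what remains of $E_\tau$ inside an ellipsoid $E_{\tau+1}$ with $\mathrm{vol}(E_{\tau+1}) \leq (1 - c/d)\,\mathrm{vol}(E_\tau)$ for a constant $c$ depending only on $c_1, c_2$. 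Iterating this contraction and noting that further localisation below volume $1/T^d$ is unnecessary bounds the number of epochs by $O(d^2 \log T)$.

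\emph{Per-epoch regret and aggregation.} Regret inside an epoch splits into a center-point contribution and a pyramid contribution. The choice $\Delta_\tau(\gamma) = (6c_1 d^4 / c_2^2 + 3)\gamma$, together with the fact that $x^* \in E_\tau$ and convexity of $h$, yields $h(y_\tau) - h(x^*) \leq O(d^4 \gamma / c_2^2)$ for each center-point sample. Each pyramid vertex, in contrast, costs at most the diameter of $X$ (i.e. $O(1)$) but is played only $\tilde{O}(1/(\alpha^2 \gamma_i^2))$ times per scale. Summing center-point contributions over all rounds (there are $\tilde{O}(T)$ of them) and pyramid contributions over the $O(d^2 \log T)$ epochs and $O(\log T)$ scales per epoch, then optimising $\gamma = \tilde{\Theta}(d^{k}/(\alpha\sqrt{T}))$ for the correct $k$, gives $\bar{\mathcal{R}}_T \leq \tilde{O}(d^{16}/(\alpha^2 \sqrt{T}))$. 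The main obstacle, and the source of the large exponent of $d$, is the careful bookkeeping of polynomial-in-$d$ factors coming from the pyramid construction and the enclosing-ellipsoid step of \cite{nemirovskii1983problem}: each of the $O(d^2 \log T)$ epochs may require several pyramid rebuilds, each pyramid has $\Theta(d)$ vertices each sampled $\tilde{O}(1/(\alpha^2 \gamma^2))$ times, and the enclosing-ellipsoid bound contributes further $\mathrm{poly}(d)$ losses that propagate through the definitions of $\Delta_\tau(\gamma), \bar{\Delta}_\tau(\gamma)$. Tracking these factors cleanly through the induction on epochs is the delicate part of the argument, and is what ultimately produces the $d^{16}$ in the final bound.
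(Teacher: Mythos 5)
Your high-level architecture matches the paper's: condition on a clean event obtained from Theorem \ref{concentration_cvar} plus a union bound, run Nemirovski--Yudin pyramids with cone-cutting, show a per-epoch volume contraction of $\exp(-\frac{1}{4(d+1)})$ so that the number of epochs is $O(d^2\log T)$, and multiply a per-epoch regret bound by the epoch count. The problem is in your per-round regret accounting, and it is not cosmetic. You bound the instantaneous regret of each pyramid vertex by the diameter of $X$, i.e.\ $O(1)$, and multiply by the $\tilde{O}(1/(\alpha^2\gamma_i^2))$ plays per vertex at scale $\gamma_i$. Since the algorithm halves $\gamma_i$ adaptively down to roughly $\gamma_{\min}=\tilde{\Theta}(1/(\alpha\sqrt{T}))$ (the scale at which the per-point sample count hits the budget $T$), this product is $\tilde{O}(T)$ for a single vertex at the finest scale --- linear regret --- so this bookkeeping cannot produce a $\sqrt{T}$ bound. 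There is also no free $\gamma$ to ``optimise'': the scale is not a tunable parameter of the algorithm, and the analysis must instead bound the regret of a round at level $\gamma$ by $\tilde{O}(\mathrm{poly}(d)/(\alpha^2\gamma))$ and then invoke $\gamma\geq\gamma_{\min}$.

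The missing core of the argument is the chain of lemmas showing that \emph{every} point played while the CI level is $\hat{\gamma}$ --- simplex vertices, pyramid apexes, base vertices and centers, not just the center point --- has instantaneous regret $O(\mathrm{poly}(d)\,\hat{\gamma})$, so that (regret per play) times (number of plays) is $\tilde{O}(\mathrm{poly}(d)/(\alpha^2\hat{\gamma}))=\tilde{O}(\mathrm{poly}(d)\sqrt{T}/\alpha^2)$ per epoch. Concretely this requires: (i) observing that whenever the algorithm samples at level $\hat{\gamma}$ it previously passed through Case 2a at level $2\hat{\gamma}$, so convexity of $h$ together with the inscribed-ball radius $r_\tau c_2^2/(2d^4)$ of the pyramid bounds the variation of $h$ over the pyramid by $O(d^{11}\hat{\gamma})$ (handling $x^*\in\Pi$ and $x^*\notin\Pi$ separately); (ii) bounding the number of Case 1a pyramid rebuilds per round by $O(d^2\ln d/c_2^2)$ via the geometric fact that successive apexes contract into the initial simplex; and (iii) the hat-raising step for Case 2b, which you omit entirely but which is needed to certify that the discarded cone contains only $\gamma_i$-suboptimal points when the apex is not separated from \textsc{bottom} yet the center is. These lemmas are precisely what generate the $d^{16}$ (roughly $d^{14}$ per epoch times $d^2\log T$ epochs); without them the claimed bound does not follow from your outline.
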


\begin{theorem}\label{thm:regret_algo2_full_dim}
Let $f(\cdot ,\xi)$ be strongly convex with parameter $\beta>0$ for any $\xi \in \Xi$, Algorithm 2 ($d$-D) run with the same parameters as in Theorem \ref{thm:pseudo_regret_algo2_full_dim} guarantees that with probability at least $1-\frac{3}{T}$
\begin{align*}
\mathcal{R}_T \leq \tilde{O} (\frac{d^8 }{\alpha^3 \beta^{1/2} T^{1/4}}).
\end{align*}
\end{theorem}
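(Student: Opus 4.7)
The plan is to reduce the bound on the $CVaR$-regret $\mathcal{R}_T$ to the pseudo-$CVaR$-regret bound already proved in Theorem \ref{thm:pseudo_regret_algo2_full_dim}, exploiting strong convexity to convert the pseudo-regret into proximity of the played actions to the optimum, and using the finite-time concentration result of Theorem \ref{concentration_cvar} to control the unavoidable gap between the empirical and population $CVaR$. Letting $x^* := \arg\min_{x\in X} C_\alpha[F](x)$ and $\hat{x}$ be any minimizer of $C_\alpha[\{f_t(x)\}_{t=1}^T]$ over $X$, I would start from the decomposition
\begin{align*}
\mathcal{R}_T \;=\; \underbrace{C_\alpha[\{f_t(\tilde{x}_t)\}_{t=1}^T] - C_\alpha[\{f_t(x^*)\}_{t=1}^T]}_{(\mathrm{I})} \;+\; \underbrace{C_\alpha[\{f_t(x^*)\}_{t=1}^T] - C_\alpha[\{f_t(\hat{x})\}_{t=1}^T]}_{(\mathrm{II})},
\end{align*}
and bound $(\mathrm{I})$ and $(\mathrm{II})$ separately.

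For $(\mathrm{I})$, Lemma \ref{cvarf} says $C_\alpha[F](\cdot)$ inherits the $\beta$-strong convexity of $f(\cdot,\xi)$, so $\|\tilde{x}_t - x^*\|^2 \leq (2/\beta)(C_\alpha[F](\tilde{x}_t) - C_\alpha[F](x^*))$. Summing over $t$ and applying Cauchy--Schwarz gives $\tfrac{1}{T}\sum_t \|\tilde{x}_t - x^*\| \leq \sqrt{2\bar{\mathcal{R}}_T/\beta}$. Since each $f(\cdot,\xi)$ is $G$-Lipschitz and $[\cdot]_+$ is $1$-Lipschitz, for any fixed $z$ the map $x \mapsto z + (\alpha T)^{-1}\sum_t [f_t(x)-z]_+$ is $(G/\alpha)$-Lipschitz, and taking the minimum over $z$ preserves Lipschitz continuity, so $C_\alpha[\{f_t(\cdot)\}_{t=1}^T]$ itself is $(G/\alpha)$-Lipschitz. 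This yields $|(\mathrm{I})| \leq (G/(\alpha T))\sum_t \|\tilde{x}_t - x^*\| \leq (G/\alpha)\sqrt{2\bar{\mathcal{R}}_T/\beta}$, and inserting the $\tilde{O}(d^{16}/(\alpha^2\sqrt{T}))$ bound of Theorem \ref{thm:pseudo_regret_algo2_full_dim} produces a contribution of order $\tilde{O}(d^8/(\alpha^2 \sqrt{\beta}\,T^{1/4}))$, which lies within the claimed rate (since $\alpha\in(0,1]$ implies $1/\alpha^2 \leq 1/\alpha^3$).

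For $(\mathrm{II})$ I would telescope through the population $CVaR$ at $x^*$ and $\hat{x}$; using $C_\alpha[F](x^*) \leq C_\alpha[F](\hat{x})$ by the definition of $x^*$, one gets $(\mathrm{II}) \leq 2\sup_{x\in X}|C_\alpha[F](x) - C_\alpha[\{f_t(x)\}_{t=1}^T]|$. Since $\hat{x}$ is sample-dependent, a pointwise application of Theorem \ref{concentration_cvar} is insufficient; I would upgrade it to a uniform bound via an $\epsilon$-net of $X$ of cardinality $(3D_X/\epsilon)^d$, applying Theorem \ref{concentration_cvar} with a union bound at the net and extending off the net by the $(G/\alpha)$-Lipschitz continuity of both $C_\alpha[F]$ and $C_\alpha[\{f_t(\cdot)\}_{t=1}^T]$. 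Choosing $\epsilon$ polynomially small in $T$ produces $\sup_x|C_\alpha[F](x) - C_\alpha[\{f_t(x)\}_{t=1}^T]| \leq \tilde{O}(\sqrt{d}/(\alpha\sqrt{T}))$ with probability at least $1-1/T$, a contribution dominated by that from $(\mathrm{I})$ for the given parameter ranges.

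Combining the two bounds via a union bound over the $1-1/T$ failure event of Theorem \ref{thm:pseudo_regret_algo2_full_dim}, the uniform concentration event, and any auxiliary concentration used internally in the algorithm yields $\mathcal{R}_T \leq \tilde{O}(d^8/(\alpha^3 \beta^{1/2} T^{1/4}))$ with probability at least $1-3/T$. The main obstacle is the uniform concentration inside $(\mathrm{II})$: Theorem \ref{concentration_cvar} is only pointwise, $\hat{x}$ depends on the samples, and the $\epsilon$-net argument must be tuned so that the $d\log(D_X/\epsilon)$ cost of the union bound is absorbed inside $\tilde{O}$ without inflating the polynomial dependence on $d$ inherited from the pseudo-regret bound. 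Once that uniform bound is in place, the remaining manipulations are routine strong-convexity and Lipschitz estimates.
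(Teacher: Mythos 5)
Your proposal is correct and follows essentially the same route as the paper: split $\mathcal{R}_T$ into a tracking term bounded via the variational formula for the empirical $CVaR$, Lipschitzness, Cauchy--Schwarz, strong convexity of $C_\alpha[F]$, and the pseudo-regret bound of Theorem \ref{thm:pseudo_regret_algo2_full_dim}, plus a concentration error bounded by uniform convergence of the empirical $CVaR$, finished with a union bound over the three events. The only differences are cosmetic: the paper obtains the uniform (over the data-dependent minimizer $\hat{x}$) concentration by invoking Lemma \ref{unif_convergence} on $\phi(x,y)=y+\frac{1}{\alpha}[f(x)-y]_+$ rather than building an explicit $\epsilon$-net, and the per-$t$ bound on your term $(\mathrm{I})$ is cleanest when derived directly from $\min_z z+\frac{1}{\alpha T}\sum_t[\,\cdot\,-z]_+$ applied to the two sequences (as the paper does) rather than from Lipschitzness of $x\mapsto C_\alpha[\{f_t(x)\}_{t=1}^T]$ at a single common $x$.
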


\section{Extension to More General Risk Measures}

In Sections \ref{algorithm_1} and \ref{algorithm_2} we developed regret minimization algorithms suitable for decision makers who are risk averse, where the notion of risk was measured using the $CVaR_\alpha$. In this section we extend our results to more general risk measures.  
We slightly modify the setup from Section \ref{setup}. Now, we assume $\xi$ is a discrete random variable supported in $\Xi$ with $|\Xi| = N$. That is, there are $N$ scenarios. Moreover we assume that each scenario has the same probability of occurring. Let $X \subset \mathbb{R}^d$ be a convex and compact set. Let $f:X \times \Xi \rightarrow \mathbb{R}$ be a convex function in the first argument for every $\xi \in \Xi$. Let $f$ satisfy $||\nabla f(x,\xi) || \leq G$ for every $\xi \in \Xi $ and every $x \in X$. Additionally, we assume $0\leq f(x,\xi )\leq 1$ for every $x\in X$ and every $\xi \in \Xi$. We consider some law invariant, coherent and comonotone risk measure $\rho(\cdot)$ (see next subsection). Our goal now is to obtain vanishing pseudo-$\rho$-regret
\begin{align*}
\bar{\mathcal{R}}_T^\rho := \frac{1}{T} \sum_{t=1}^T \rho[F](x_t) - \frac{1}{T} \min_{x\in X} \sum_{t=1}^T \rho[F](x),
\end{align*} 
and $\rho$-regret
\begin{align*}
\mathcal{R}_T^\rho := \rho[\{f_t(x_t)\}_{t=1}^T] - \min_{x\in X} \rho[\{f_t(x_t)\}_{t=1}^T] .
\end{align*}
In this section we will show that by using the Kusuoka Representation Theorem along with the ideas we developed earlier we can obtain vanishing $\bar{\mathcal{R}}_T^\rho$ and $\mathcal{R}_T^\rho$.

\subsection{Kusuoka Representation of Risk Measures }
Before presenting the algorithms we present some necessary definitions and well known results.

\begin{definition}
A risk measure $\rho: \mathcal{X}(\Omega, 2^\Omega, P) \rightarrow \mathbb{R}$ is coherent if  for every $X_1, X_2 \in \mathcal{X}$ it is:
 \begin{itemize}
 \item Normalized, $\rho(0) = 0.$
 \item Monotone, $X_1 \leq X_2 \implies \rho(X_1) \leq \rho(X_2).$
 \item Superadditive, $\rho(X_1) + \rho(X_2) \leq \rho(X_1 + X_2).$
 \item Positive homogenous, $\rho(\lambda X_1)=\lambda \rho(X_1), \forall \lambda> 0.$
 \item Translation invariant, $\rho(X_1 + c) = \rho(X_1) + c.$
 \end{itemize} 
\end{definition}
 Moreover, we say $\rho $ is {\em law invariant} if $\rho(X_1)$ depends only on the distribution of $X_1$. Additionally, we say $\rho$ is {\em comonotone additive} if $\rho(X_1 + X_2) = \rho(X_1) + \rho(X_2)$. 
 
It is well known \cite{acerbi2002coherence} that $CVaR$ is a coherent risk measure. Indeed many risk measures can be expressed as functions of $CVaR$  \cite{pichler2012uniqueness}. We present a special case of the Kusuoka representation theorem that will be useful later. 

\begin{lemma}
\cite{noyan2015kusuoka} Consider  a finite probability space $(\Omega, 2^\Omega, P)$, with $\Omega = \{\omega_1,...,\omega_N\}$, and $P(\omega_n) = \frac{1}{N}$ for all $n=1,..., N$. Then, a mapping $\rho: \mathcal{X}(\Omega, 2^\Omega, P) \rightarrow \mathbb{R}$ is a law invariant coherent and comonotone additive risk measure if and only if it has a Kusuoka representation of the form 
 \begin{equation}\label{kusuoka_rho}
 \rho(X) =  \sum_{n=1}^N \mu_n CVaR_{\frac{n}{N}}(X), \quad \forall X \in \mathcal{X}
 \end{equation}
 where $\mu \in [0,1]^N$ and $||\mu||_1 = 1$. 
 \end{lemma}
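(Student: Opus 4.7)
The sufficiency direction is routine: for each $n$, $CVaR_{n/N}$ is law invariant (it depends only on the distribution, being the mean of the top $n$ order statistics), coherent (classical), and comonotone additive (if $X_1,X_2$ are non-decreasing functions of a common random variable, then their $VaR$'s at every level add, hence so do their $CVaR$'s after integration). All three properties persist under non-negative convex combinations, so any $\rho = \sum_n \mu_n CVaR_{n/N}$ with $\mu \ge 0$ and $\|\mu\|_1=1$ inherits them.

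For the necessity direction, my plan is as follows. Law invariance together with the uniform-atomic structure of $P$ means $\rho(X)$ depends only on the ordered values $x_{(1)}\le\cdots\le x_{(N)}$ of $X$. For a fixed $X$, define for $n=1,\ldots,N$ the rank indicator $e_n$, which equals $1$ on the $n$ outcomes where $X$ attains its top $n$ values (breaking ties by a fixed rule) and $0$ elsewhere. Each $e_n$ is a non-decreasing function of $X$, so the family $\{X,e_1,\ldots,e_N\}$ is pairwise comonotone, and one has the telescoping decomposition
\[
X \;=\; x_{(1)}\cdot\mathbf{1} \;+\; \sum_{k=2}^{N}\big(x_{(k)}-x_{(k-1)}\big)\, e_{N-k+1}.
\]
Applying comonotone additivity, positive homogeneity (the scalar differences are non-negative), and translation invariance (on the constant term $x_{(1)}\mathbf{1}$) yields
\[
\rho(X) \;=\; x_{(1)} \;+\; \sum_{k=2}^{N}\big(x_{(k)}-x_{(k-1)}\big)\,\rho(e_{N-k+1}).
\]
A direct computation gives $CVaR_{m/N}(e_n) = \min(1, n/m)$, so the right-hand side of the target identity is also an explicit linear functional of $(x_{(1)},\ldots,x_{(N)})$. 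Matching coefficients of each $x_{(j)}$ in the two expansions produces a triangular linear system that determines the $\mu_n$ uniquely in terms of $\{\rho(e_n)\}_{n=1}^{N}$; up to indexing, $\mu_n$ turns out to be a second difference of the sequence $n\mapsto n\,\rho(e_n)$.

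\textbf{Main obstacle.} The heart of the argument is showing that the recovered $\mu_n$ are non-negative and sum to one. I would define the discrete distortion $g(n/N) := \rho(e_n)$, well-defined by law invariance; then $\mu_n$ is (up to signs and indexing) a second difference of $g$. Non-negativity of $\mu_n$ thus reduces to concavity of $g$, which is the standard consequence of the coherence axioms --- superadditivity combined with comonotone additivity applied to carefully chosen comonotone pairs of rank indicators. The normalization $\sum_n\mu_n=1$ then falls out of the derived identity applied to $X=\mathbf{1}$: translation invariance with $\rho(0)=0$ forces $\rho(\mathbf{1})=1$, while $CVaR_{n/N}(\mathbf{1})=1$ for every $n$, so matching the left and right sides gives $\sum_n \mu_n = 1$.
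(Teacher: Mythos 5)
The paper does not prove this lemma at all --- it is imported verbatim from \cite{noyan2015kusuoka} --- so there is no in-paper argument to compare against; you are supplying a proof where the authors supply a citation. Your plan is essentially the standard finite-space proof made explicit: sufficiency by closure of the three properties under convex combination, and necessity by writing $X$ as a comonotone telescoping sum of rank indicators $e_n$, reducing $\rho$ to the ``distortion'' values $a_n:=\rho(e_n)$, and matching coefficients against $CVaR_{m/N}(X)=\frac{1}{m}\sum_{j>N-m}x_{(j)}$. The decomposition, the value $CVaR_{m/N}(e_n)=\min(1,n/m)$, the triangular system, and the normalization via $X=\mathbf{1}$ all check out. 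Two caveats. First, and most importantly: your concavity step for $g(n/N)=a_n$ needs \emph{sub}additivity, $\rho(X_1+X_2)\le\rho(X_1)+\rho(X_2)$. The correct argument picks $B,C$ with $|B|=|C|=n$, $B\cup C=A_{n+1}$, $B\cap C=A_{n-1}$ (where $A_n$ is the set of the top $n$ outcomes), uses comonotone additivity on the nested pair to get $a_{n+1}+a_{n-1}=\rho(\mathbf{1}_B+\mathbf{1}_C)$, and then subadditivity plus law invariance to bound this by $2a_n$. With the paper's literal ``superadditive'' axiom the inequality reverses, $g$ comes out convex, and the lemma is actually false (e.g.\ $\rho=\min$ satisfies all the stated axioms but is not a nonnegative mixture of $CVaR$'s); this is a sign typo in the paper's definition of coherence, since $CVaR$ of losses is subadditive, but you should state explicitly which direction you are using because it is load-bearing. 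Second, a minor indexing slip: solving the triangular system gives $\mu_n/n=(a_n-a_{n-1})-(a_{n+1}-a_n)$, i.e.\ $\mu_n=-n(a_{n+1}-2a_n+a_{n-1})$ (with $a_0=0$, $a_N=1$), which is $n$ times the negated second difference of $a_n$, not a second difference of $n\,a_n$; this does not affect the structure of the argument, and nonnegativity still reduces exactly to concavity of $a_n$ as you intended.
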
 
 \cite{pichler2012uniqueness} give examples on how the Kusuoka representation theorem can be used, in particular how to write the following risk measures as mixtures of $CVaR$'s. We refer the reader to their paper for the details. 
\begin{itemize}
\item $\rho(Z) :=  \inf_{t\in \mathbb{R}} \{t + c|| [Z-t]_+ ||_p\}, \quad \forall Z\in \mathcal{L}^p(\omega, \mathcal{F}, P)$ with $c>1$ and $1<p<\infty$. 
\item  $\rho(Z) := \mathbb{E}[Z] + \lambda || [Z - \mathbb{E}[Z] ]_+||$ for $p\geq 1$ and $0\leq \lambda \leq 1$. 
\end{itemize}

\subsection{Algorithms}

We define for every $t=1,...,T$, function $\mathcal{G}_t(x,z): X \times Z \rightarrow \mathbb{R}$, with $Z:= [0,1]^N$, as
\begin{align*}
\mathcal{G}_t(x,z):= \sum_{n=1}^N \mu_{n} ( z_{n} + \frac{1}{n/N} [f_t(x)-z_{n}]_+)
\end{align*}
for some $\mu \in [0,1]^N, \mu \geq 0, ||\mu||_1=1$. For convenience we write $\mathcal{L}_{n}^t(x,z) := z_{n} + \frac{1}{n/N} [f_t(x)-z_{n}]_+ $ for $n=1,...,N$. Notice that for any $x\in X$, after taking expectation with respect to $\xi$ and plugging the minimizer of every individual term $\mathcal{L}_n^t$ we end up with the Kusuoka representation of a law invariant, coherent and commonotone risk measure. Let $\mu$ be the vector corresponding to the Kusuoka representation of our risk measure of interest $\rho$ (see Equation \eqref{kusuoka_rho}). Algorithm 3, a generalization of Algorithm 1 that uses functions $\mathcal{G}_t$ instead of $\mathcal{L}_t$ can be found in the appendix. We have the following guarantees for Algorithm 3.

\begin{theorem}\label{thm:first_thm_Kusuoka}
Algorithm 3 with $\eta = O(\frac{1}{d N^{3/2} T^{3/4}})$ and $\delta = O(\frac{N^{1/2}}{T^{1/4}})$guarantees
\begin{align*}
\mathbb{E}[\bar{\mathcal{R}}^{\rho}_T] \leq O (\frac{d N^{3/2}}{T^{1/4}}),
\end{align*}
where the expectation is taken over the random draw of functions and the internal randomization of the algorithm.
\end{theorem}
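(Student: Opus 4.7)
\textbf{Proof proposal for Theorem \ref{thm:first_thm_Kusuoka}.}

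The plan is to mirror the analysis used for Algorithm 1 (Theorem \ref{thm:pseudo_algo1}), but applied to the joint function $\mathcal{G}_t(x,z)$ on the lifted domain $X\times Z \subset \mathbb{R}^{d+N}$ instead of $\mathcal{L}_t(x,z)$ on $X\times [0,1]$. First I would record the three properties of $\mathcal{G}_t$ that feed into the one-point estimator analysis: (i) for every realization of $\xi_t$, $\mathcal{G}_t(x,z)=\sum_{n=1}^N \mu_n \mathcal{L}_n^t(x,z)$ is convex in $(x,z)$ as a convex combination of the convex functions $\mathcal{L}_n^t$ (each is convex by Lemma \ref{cvarf} written out pointwise in $\xi$); (ii) since $0\leq f_t(x)\leq 1$ and $z_n\in[0,1]$ with coefficient $1/(n/N)\leq N$, one has $|\mathcal{L}_n^t|\leq 1+N$, hence $|\mathcal{G}_t|\leq N+1=O(N)$; (iii) a direct subgradient computation gives $\|\nabla_x \mathcal{L}_n^t\|\leq NG$ and $|\partial_{z_n}\mathcal{L}_n^t|\leq N$, so $\mathcal{G}_t$ is $O(N)$-Lipschitz on $X\times Z$. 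The diameter of the lifted domain is $D_{X\times Z}=\sqrt{D_X^2+N}=O(\sqrt{N})$.

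Next I would translate pseudo-$\rho$-regret into pseudo-regret on $\mathcal{G}_t$. By the Kusuoka representation \eqref{kusuoka_rho} combined with the variational formula for $CVaR$ (Lemma \ref{cvarf}), for every $x\in X$
\begin{equation*}
\rho[F](x)=\min_{z\in Z}\mathbb{E}_{\xi}[\mathcal{G}(x,z;\xi)],\qquad \text{minimizer }z_n^*(x)=VaR_{n/N}[F](x)\in[0,1].
\end{equation*}
Consequently $\rho[F](\tilde{x}_t)\leq \mathbb{E}_{\xi_t}[\mathcal{G}_t(\tilde{x}_t,\tilde{z}_t)\mid \tilde{x}_t,\tilde{z}_t]$ for every $t$, and for $x^*\in\arg\min_{x\in X}\rho[F](x)$ with the corresponding $z^*=z^*(x^*)\in Z$ one has $\mathbb{E}[\mathcal{G}_t(x^*,z^*)]=\rho[F](x^*)$. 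Taking expectations and summing,
\begin{equation*}
\mathbb{E}\Big[\sum_{t=1}^T \rho[F](\tilde{x}_t)-T\rho[F](x^*)\Big]\leq \mathbb{E}\Big[\sum_{t=1}^T \mathcal{G}_t(\tilde{x}_t,\tilde{z}_t)-\min_{(x,z)\in X\times Z}\sum_{t=1}^T \mathcal{G}_t(x,z)\Big],
\end{equation*}
so it suffices to bound the right-hand side, i.e.\ the standard pseudo-regret of the one-point OGD scheme applied to $\mathcal{G}_t$.

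Then I would run the same analysis that underlies Theorem \ref{thm:pseudo_algo1}: smooth $\mathcal{G}_t$ via Lemma \ref{smooth_f} in $d+N$ dimensions to get $\hat{\mathcal{G}}_t$, observe that the algorithm's update uses an unbiased estimator of $\nabla \hat{\mathcal{G}}_t(x_t,z_t)$ of squared norm at most $((d+N)/\delta)^2(N+1)^2$, and apply Zinkevich-style OGD on the projected domain $X_\delta\times Z_\delta$. This yields a bound of the form
\begin{equation*}
\mathbb{E}\Big[\sum_t \mathcal{G}_t(\tilde{x}_t,\tilde{z}_t)-\min_{(x,z)}\sum_t \mathcal{G}_t(x,z)\Big]\leq \frac{D_{X\times Z}^2}{2\eta}+\frac{\eta T(d+N)^2(N+1)^2}{2\delta^2}+3\delta L T,
\end{equation*}
with $L=O(N)$ (the last term absorbing the $|f-\hat{f}|\leq \delta L$ smoothing gap and the boundary shrinkage). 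Plugging $D_{X\times Z}^2=O(N)$, $L=O(N)$ and the stated choices $\eta=\Theta(1/(dN^{3/2}T^{3/4}))$, $\delta=\Theta(N^{1/2}/T^{1/4})$ balances the three terms at the order $O(dN^{3/2}T^{3/4})$; dividing by $T$ gives the advertised $O(dN^{3/2}/T^{1/4})$.

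The routine but error-prone step will be the bookkeeping of the dependence on $N$: the bound $B$ on $\mathcal{G}_t$, the Lipschitz constant $L$, and the domain diameter $D$ each carry factors of $N$ or $\sqrt{N}$, and one must verify that the claimed choices of $\eta$ and $\delta$ indeed balance all three terms so that the $(d+N)^2$ in the estimator variance collapses to the stated $d$-factor (this happens only because we have absorbed $N$-factors into the other terms; the proof will implicitly use $d+N=O(dN)$ or a similar loose bound, matching the tilde-$O$ dependence in the theorem). Everything else is a direct transcription of the Flaxman/Hazan analysis already used for Algorithm 1.
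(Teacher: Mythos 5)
Your overall route is the same as the paper's: reduce pseudo-$\rho$-regret to the bandit-OGD pseudo-regret of $\mathcal{G}_t$ on the lifted domain via the Kusuoka representation and the variational formula $\rho[F](x)=\min_{z}\mathbb{E}_\xi[\mathcal{G}(x,z)]$ (so $\mathbb{E}_\xi[\mathcal{G}_t(x,z)]\geq\rho[F](x)$ pointwise and equality holds at the comparator), then run the Flaxman-style one-point-estimator analysis. That reduction step is exactly what the paper does, and your bound on the estimator's squared norm, $((d+N)/\delta)^2(N+1)^2$, matches the paper's.

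The gap is in the $N$-bookkeeping you flagged as ``routine but error-prone,'' and it is not merely cosmetic: with the plain Euclidean geometry on $X\times[0,1]^N$ your three terms do \emph{not} balance at $O(dN^{3/2}T^{3/4})$. Concretely, $D_{X\times Z}^2=O(N)$ makes the first term $\frac{D^2}{2\eta}=O(N\cdot dN^{3/2}T^{3/4})=O(dN^{5/2}T^{3/4})$, an extra factor of $N$; similarly, if the boundary-shrinkage error is charged as $\delta\,D_{X\times Z}\,L\,T$ with $D_{X\times Z}=O(\sqrt{N})$ and $L=O(N)$, it comes out as $O(N^2T^{3/4})$ rather than $O(N^{3/2}T^{3/4})$. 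The paper avoids both losses by exploiting that the $z$-block of $\nabla\mathcal{G}$ carries the weights $\mu_n$ with $\sum_n\mu_n=1$: the OGD telescoping is done coordinate-wise in $z$ so the effective squared diameter is $\sum_n\mu_n\|z_n-z_n^*\|^2\leq 1$ instead of $N$ (Lemma \ref{ogd_on_mathcal_G}), and the projection of $y^*$ onto the shrunk set is measured in the mixed norm $\|x\|_2+\|z\|_\infty$ with dual $\max\{\|x\|_2,\|z\|_1\}$ (Lemma \ref{lemma_dual_norm}), under which the domain diameter is $D_X+2=O(1)$ while $\|\nabla\mathcal{G}\|_*$ is still $O(N)$. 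Without one of these devices (or an equivalent weighted-norm argument), your displayed intermediate bound only yields $O(dN^{5/2}/T^{1/4})$, not the claimed $O(dN^{3/2}/T^{1/4})$.
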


\begin{theorem} \label{thm:second_thm_Kusuoka}
Let $f(\cdot,\xi)$ be strongly convex with parameter $\beta >0$ for all $\xi \in \Xi$. Algorithm 3, run with the same parameters as in Theorem \ref{thm:first_thm_Kusuoka}, guarantees 
\begin{align*}
\mathbb{E}[ \mathcal{R}^{\rho}_T] \leq O( \frac{d^{1/2} N^{7/4}}{\beta^{1/2} T^{1/8}}),
\end{align*}
where the expectation is taken over the random draw of functions and the internal randomization of the algorithm.
\end{theorem}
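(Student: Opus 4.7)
The plan is to reduce the $\rho$-regret to the pseudo-$\rho$-regret (which is controlled by Theorem \ref{thm:first_thm_Kusuoka}) plus lower-order concentration terms, exploiting strong convexity to convert an average-suboptimality bound into a control of $\sum_t \|x_t - x^*\|^2$. First, I would introduce $x^*_{\mathrm{pop}} := \arg\min_{x\in X} \rho[F](x)$ and $x^*_T := \arg\min_{x\in X} \rho[\{f_t(x)\}_{t=1}^T]$ and split
\begin{equation*}
\mathcal{R}^{\rho}_T = \bigl(\rho[\{f_t(\tilde{x}_t)\}] - \rho[\{f_t(x^*_{\mathrm{pop}})\}]\bigr) + \bigl(\rho[\{f_t(x^*_{\mathrm{pop}})\}] - \rho[\{f_t(x^*_T)\}]\bigr).
\end{equation*}
The second bracket is $\geq 0$ by optimality of $x^*_T$, and I will show it is of lower order via concentration; the first bracket is the one that is converted into the main-term bound.

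For the first bracket, apply the Kusuoka representation $\rho = \sum_n \mu_n C_{n/N}$ and, for each $n$, use the variational form $CVaR_\alpha[\{y_t\}_{t=1}^T] = \min_z\{z + \tfrac{1}{\alpha T}\sum_t [y_t - z]_+\}$. Plugging the optimizer $z^*_n$ of $CVaR_{n/N}[\{f_t(x^*_{\mathrm{pop}})\}]$ into the expression for $CVaR_{n/N}[\{f_t(\tilde x_t)\}]$ (which only inflates it) and using that $a\mapsto [a-z]_+$ is $1$-Lipschitz yields
\begin{equation*}
CVaR_{n/N}[\{f_t(\tilde{x}_t)\}] - CVaR_{n/N}[\{f_t(x^*_{\mathrm{pop}})\}] \leq \tfrac{N}{n T}\sum_{t=1}^T |f_t(\tilde{x}_t) - f_t(x^*_{\mathrm{pop}})| \leq \tfrac{GN}{n T}\sum_{t=1}^T \|\tilde x_t - x^*_{\mathrm{pop}}\|.
\end{equation*}
Summing with weights $\mu_n$ and using $\sum_n \mu_n\, N/n \leq N$ gives the first bracket $\leq \tfrac{GN}{T}\sum_t \|\tilde x_t - x^*_{\mathrm{pop}}\|$, and Cauchy--Schwarz upgrades this to $\leq GN\sqrt{\tfrac{1}{T}\sum_t \|\tilde x_t - x^*_{\mathrm{pop}}\|^2}$.

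Next, I would invoke strong convexity. By Lemma \ref{cvarf} each $C_{n/N}[F]$ inherits $\beta$-strong convexity from $f(\cdot,\xi)$, so $\rho[F] = \sum_n \mu_n C_{n/N}[F]$ is $\beta$-strongly convex as a convex combination. Hence $\rho[F](\tilde x_t) - \rho[F](x^*_{\mathrm{pop}}) \geq \tfrac{\beta}{2}\|\tilde x_t - x^*_{\mathrm{pop}}\|^2$, which after averaging over $t$ gives $\tfrac{1}{T}\sum_t \|\tilde x_t - x^*_{\mathrm{pop}}\|^2 \leq \tfrac{2}{\beta}\bar{\mathcal{R}}_T^{\rho}$. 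Substituting and taking expectation (with Jensen on the square root) produces
\begin{equation*}
\mathbb{E}\bigl[\rho[\{f_t(\tilde x_t)\}] - \rho[\{f_t(x^*_{\mathrm{pop}})\}]\bigr] \leq GN\sqrt{\tfrac{2\,\mathbb{E}[\bar{\mathcal{R}}_T^\rho]}{\beta}} = O\!\left(\tfrac{d^{1/2}N^{7/4}}{\beta^{1/2}T^{1/8}}\right),
\end{equation*}
using the pseudo-regret bound of Theorem \ref{thm:first_thm_Kusuoka}.

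Finally, for the second bracket I would use Theorem \ref{concentration_cvar} twice, writing
\begin{equation*}
\rho[\{f_t(x^*_{\mathrm{pop}})\}] - \rho[\{f_t(x^*_T)\}] \leq \bigl|\rho[\{f_t(x^*_{\mathrm{pop}})\}] - \rho[F](x^*_{\mathrm{pop}})\bigr| + \bigl|\rho[F](x^*_T) - \rho[\{f_t(x^*_T)\}]\bigr|,
\end{equation*}
since $\rho[F](x^*_{\mathrm{pop}}) \leq \rho[F](x^*_T)$. The first is pointwise concentration and yields $\tilde{O}(N/\sqrt{T})$ after the Kusuoka sum (worst index $n=1$ contributes $N/\sqrt{T}$). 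The second requires concentration at the data-dependent point $x^*_T$; a covering-number argument over an $\epsilon$-net of $X$ (Lipschitzness of $\rho[F]$ and of the empirical $\rho$ lets one take $\epsilon = 1/T$) combined with Theorem \ref{concentration_cvar} and a union bound yields $\tilde{O}(N\sqrt{d/T})$, which is of lower order than the main $T^{-1/8}$ term and gets absorbed by the $\tilde O(\cdot)$. The main obstacle is precisely this last step: carefully handling the data-dependent minimizer so that the uniform-concentration slack sits inside the $\tilde O$, while also verifying that the $\sum_n \mu_n N/n$ factor is correctly bounded by $N$ (rather than, e.g., an unbounded harmonic sum). Once this is in place, combining the two brackets gives the claimed $\tilde{O}(d^{1/2}N^{7/4}/(\beta^{1/2}T^{1/8}))$ bound.
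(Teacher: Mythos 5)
Your proposal is correct and follows essentially the same route as the paper: the same split around $x^*_{\mathrm{pop}}$, the same variational/Lipschitz argument reducing the first bracket to $\frac{N}{T}\sum_t|f_t(\tilde{x}_t)-f_t(x^*_{\mathrm{pop}})|$, Cauchy--Schwarz plus $\beta$-strong convexity of $\rho[F]$ to invoke the pseudo-regret bound of Theorem \ref{thm:first_thm_Kusuoka}, and uniform concentration to absorb the data-dependent minimizer in the second bracket (the paper cites its uniform-convergence lemma, which is itself a covering-type bound, rather than re-deriving the $\epsilon$-net argument as you do). Your explicit handling of the $\sum_n \mu_n N/n \le N$ factor matches what the paper leaves implicit.
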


To obtain a better dependence on the number of rounds $T$, Algorithm 2 (in both cases, $d=1$ and $d>1$) can be modified to solve this more general problem. The only modification is that we will sample $\tilde{O}(\frac{N^2 \ln (\sqrt{N} T)}{\gamma})$ times a point to build a $\gamma$-CI for $\rho[F](x)$ for any $x\in X$. Let this modification of Algorithm 2 be Algorithm 4. We have the following guarantees. 

\begin{theorem}\label{thm:third_thm_Kusuoka}
Algorithm 4 run with the right parameters guarantees that with probability at least $1-\frac{1}{T}$
\begin{align*}
\bar{\mathcal{R}}^{\rho}_T \leq \tilde{O} (\frac{N^2 d^{16}}{\sqrt{T}}).
\end{align*}
\end{theorem}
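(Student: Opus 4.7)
The plan is to reduce the general risk measure $\rho$ to a weighted combination of $CVaR$'s via the Kusuoka representation, and then re-run the Algorithm 2 ($d$-D) analysis with the concentration bounds for each $CVaR$ component upgraded to a concentration bound for $\rho$. Concretely, writing $\rho[F](x) = \sum_{n=1}^N \mu_n CVaR_{n/N}[F](x)$ with $\mu_n \geq 0, \sum_n \mu_n = 1$, Lemma~\ref{cvarf} immediately gives us that $\rho[F](\cdot)$ is a convex function on $X$ taking values in $[0,1]$ (since each $CVaR_{n/N}[F](\cdot) \in [0,1]$), and inherits the $G$-Lipschitz property from its components. This is exactly the structural input that the ellipsoid-based algorithm needs.

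The next step is to establish the sample-complexity statement claimed before Algorithm 4: that $\tilde{O}(N^2 \ln(\sqrt{N} T)/\gamma^2)$ samples suffice to produce a $\gamma$-confidence interval for $\rho[F](x)$ at any queried point $x$. For this, I would apply Theorem~\ref{concentration_cvar} separately to each $CVaR_{n/N}[F](x)$: to get an estimation error of at most $\gamma$ for the $n$-th term, Theorem~\ref{concentration_cvar} requires $m_n = \tilde{O}\big(\ln(m_n/\delta')/((n/N)^2 \gamma^2)\big)$ samples, which in the worst case $n=1$ becomes $\tilde{O}(N^2/\gamma^2)$. Taking a union bound over all $N$ components with failure probability $\delta' = \delta/N$, the estimator $\hat{\rho}[F](x) := \sum_n \mu_n \widehat{CVaR_{n/N}[F](x)}$ satisfies
\begin{align*}
\big|\hat{\rho}[F](x) - \rho[F](x)\big| \;\leq\; \sum_{n=1}^N \mu_n \cdot \gamma \;=\; \gamma
\end{align*}
with probability at least $1-\delta$, using the same sample for all $N$ component estimators so the total number of pulls is a single $\tilde{O}(N^2/\gamma^2)$.

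With this concentration tool in hand, the analysis of Algorithm 4 is a direct port of the analysis of Algorithm 2 ($d$-D) used in Theorem~\ref{thm:pseudo_regret_algo2_full_dim}. In that analysis, whenever the epoch/round construction needs a $\gamma_i$-confidence interval for $C_\alpha[F]$ at a queried point it pays $\tilde{O}(1/(\alpha^2 \gamma_i^2))$ samples; we now pay $\tilde{O}(N^2/\gamma_i^2)$ instead. Propagating this change through the per-epoch pseudo-regret accounting (Cases 1--3 and the ellipsoid/pyramid geometric shrinkage) replaces the $1/\alpha^2$ factor in Theorem~\ref{thm:pseudo_regret_algo2_full_dim} with an $N^2$ factor, yielding the claimed $\tilde{O}(N^2 d^{16}/\sqrt{T})$ pseudo-$\rho$-regret bound with probability at least $1-1/T$ (after fixing the parameters $c_1, c_2, \Delta_\tau, \bar{\Delta}_\tau$ exactly as in Theorem~\ref{thm:pseudo_regret_algo2_full_dim}, and rescaling the high-probability parameter $\delta$ inside the CI's by an extra factor polynomial in $N,T$ absorbed into the $\tilde{O}$).

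The main obstacle I anticipate is bookkeeping rather than new ideas: one must check carefully that every convexity, Lipschitz, boundedness, and ``function-value-separation'' step in the Nemirovski--Yudin style argument used by Algorithm 2 ($d$-D) still goes through when the target function is $\rho[F]$ instead of $C_\alpha[F]$. Convexity and the $[0,1]$ range are immediate from Kusuoka; Lipschitzness of $\rho[F]$ follows since $\rho$ is a convex combination, and the separation thresholds in the three cases only depend on CI half-widths, which are now uniform $\gamma_i$ as before. Once these are verified, the geometric ellipsoid-shrinkage argument from \cite{agarwal2011stochastic} applies verbatim, and the only quantitative change is the extra $N^2$ factor from the sampling cost, which gives the theorem.
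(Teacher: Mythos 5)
Your proposal is correct and follows essentially the same route as the paper: decompose $\rho$ via the Kusuoka representation, build a $\gamma$-CI for $\hat{\rho}[F](x)=\sum_n \mu_n \widehat{CVaR_{n/N}[F](x)}$ by applying Theorem~\ref{concentration_cvar} to each component with failure probability $1/(NT^2)$ and union bounding (so the weighted errors sum to $\gamma$ at sample cost $\tilde{O}(N^2/\gamma^2)$), then rerun the Algorithm 2 ($d$-D) analysis with $1/\alpha^2$ replaced by $N^2$ in the sampling cost. Your observation that a single shared sample of size $\tilde{O}(N^2/\gamma^2)$ suffices for all $N$ component estimators is a minor (and if anything slightly cleaner) variation on the paper's accounting, which instead sums the per-component sample sizes $O(N^2/(n^2\gamma^2))$.
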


\begin{theorem}\label{thm:fourth_thm_Kusuoka}
Let $f(\cdot ,\xi)$ be strongly convex with parameter $\beta>0$ for all $\xi \in \Xi$, Algorithm 4 run with the right parameters guarantees that with probability at least $1-\frac{3}{T}$
\begin{align*}
\mathcal{R}^{\rho}_T \leq \tilde{O}(\frac{N^3 d^8}{\beta^{1/2} T^{1/4}}).
\end{align*}
\end{theorem}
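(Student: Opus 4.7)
The plan is to adapt the proof of Theorem~\ref{thm:regret_algo2_full_dim} (the single-$CVaR$ strongly convex regret bound for the $d$-dimensional Algorithm 2) to the general risk measure $\rho$, using the Kusuoka representation \eqref{kusuoka_rho} to reduce $\rho$ to a weighted combination of $CVaR$ terms. First, by Lemma~\ref{cvarf}, if $f(\cdot,\xi)$ is $\beta$-strongly convex for each $\xi$ then every $CVaR_{n/N}[F](\cdot)$ is $\beta$-strongly convex, and since $\|\mu\|_1 = 1$ the mixture $\rho[F](\cdot) = \sum_{n=1}^N \mu_n CVaR_{n/N}[F](\cdot)$ is itself $\beta$-strongly convex. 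Let $x^\ast := \arg\min_{x\in X}\rho[F](x)$.

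Next, apply Theorem~\ref{thm:third_thm_Kusuoka} to obtain the pseudo-regret bound $\bar{\mathcal{R}}_T^\rho \leq \tilde{O}(N^2 d^{16}/\sqrt{T})$ on an event $\mathcal{E}_1$ of probability at least $1-1/T$. On $\mathcal{E}_1$, strong convexity yields $\frac{1}{T}\sum_t \|\tilde{x}_t - x^\ast\|^2 \leq (2/\beta)\,\bar{\mathcal{R}}_T^\rho$. Since $\min_x \rho[\{f_t(x)\}_t] \leq \rho[\{f_t(x^\ast)\}_t]$, it suffices to bound $\rho[\{f_t(\tilde{x}_t)\}_t] - \rho[\{f_t(x^\ast)\}_t]$. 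Using the variational form $CVaR_{n/N}[\{a_t\}] = \min_{z\in[0,1]} z + \tfrac{N}{nT}\sum_t [a_t-z]_+$, the $1$-Lipschitz continuity of $[\cdot]_+$, and the $G$-Lipschitz continuity of $f(\cdot,\xi)$, one obtains for each level
\begin{align*}
CVaR_{n/N}[\{f_t(\tilde{x}_t)\}] - CVaR_{n/N}[\{f_t(x^\ast)\}] \leq \tfrac{NG}{nT}\sum_t \|\tilde{x}_t - x^\ast\|.
\end{align*}
Summing with weights $\mu_n$ (the worst-case level $n=1$ contributes the factor $N$, bounded via $\sum_n \mu_n (N/n) \leq N$) and applying Jensen's inequality yields, on $\mathcal{E}_1$,
\begin{align*}
\mathcal{R}_T^\rho \leq NG\sqrt{(2/\beta)\,\bar{\mathcal{R}}_T^\rho}.
\end{align*}

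To match the claimed rate $\tilde{O}(N^3 d^8/(\sqrt{\beta}\,T^{1/4}))$, one additionally tracks (a) the $N$-factors hidden in Algorithm 4's $\gamma$-CI sample complexity of $\tilde{O}(N^2\ln(\sqrt{N}T)/\gamma^2)$ per query, which propagate through the pseudo-regret constants, and (b) an $N$-fold union bound over Theorem~\ref{concentration_cvar} at each Kusuoka level $n/N$ in order to control concentration of $\rho[\{f_t(x^\ast)\}_t]$ around $\rho[F](x^\ast)$. Combining $\mathcal{E}_1$ with these $N$ concentration events via a union bound gives overall success probability at least $1-3/T$ and produces the stated bound. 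The main obstacle is the careful threading of the $N/n$ Lipschitz factors (arising from the smallest Kusuoka level $n=1$) through both the pseudo-regret transfer and the concentration estimates, while preserving the same polynomial-in-$d$ dependence as in the single-$CVaR$ strongly convex bound of Theorem~\ref{thm:regret_algo2_full_dim}.
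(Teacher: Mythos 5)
Your proposal follows essentially the same route as the paper's proof: decompose the regret through $x^*=\arg\min_x \rho[F](x)$, convert the value gap to $\frac{N}{T}\sum_t\|\tilde x_t-x^*\|$ via the Lipschitzness of the Kusuoka levels (with $\sum_n \mu_n (N/n)\le N$), apply Cauchy--Schwarz and the $\beta$-strong convexity of $\rho[F]$ to invoke the pseudo-regret bound of Theorem~\ref{thm:third_thm_Kusuoka}, and union-bound with the concentration of the empirical $\rho$ to reach probability $1-\frac{3}{T}$. One correction: the step ``since $\min_x\rho[\{f_t(x)\}]\le\rho[\{f_t(x^*)\}]$ it suffices to bound $\rho[\{f_t(\tilde x_t)\}]-\rho[\{f_t(x^*)\}]$'' is backwards --- that inequality makes the residual $\rho[\{f_t(x^*)\}]-\min_x\rho[\{f_t(x)\}]$ nonnegative, so it cannot be discarded; it is exactly the concentration error you later control in item (b), and controlling it requires the uniform-in-$x$ concentration of Lemma~\ref{unif_convergence} (to handle the random empirical minimizer $\bar x$), not only pointwise concentration of $\rho[\{f_t(x^*)\}]$ around $\rho[F](x^*)$.
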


The proofs of these theorems can be found in the appendix.

\bibliography{mybib}
\bibliographystyle{abbrv} 
\newpage
\begin{appendices}


\section{More Preliminaries}\label{app_more_preliminaries}

\subsection{Convexity and Lipschitz Continuity}
Let $X\subseteq \mathbb{R}^d$ be a convex set, that is, for any $x,y\in X$ and any $\lambda \in [0,1]$, $\lambda x + (1-\lambda) y \in X$. We say $f:X\rightarrow \mathbb{R}$ is a convex function if for any $\lambda \in[0,1]$ and for any $x,y\in X$ 
\begin{align*}
 \lambda f(x) + (1-\lambda) f(y)  \geq f(\lambda x + (1-\lambda) y). 
\end{align*} 
An equivalent definition of convexity is the following \cite{nesterov2013introductory}. $f$ is convex if and only if 
\begin{align*}
f(x) \geq f(y) + \nabla f(y)^\top (x-y) \quad \forall x,y \in X.
\end{align*}
Here $\nabla f(y)$ denotes any element in the subdifferential  of $f$ at $y$.\\ 
We say $f:X\rightarrow \mathbb{R}$ is strongly convex with parameter $\beta > 0$ if and only if
\begin{align*}
f(x) \geq f(y) + \nabla f(y)^\top (x-y) + \frac{\beta}{2}||x-y||^2\quad \forall x,y\in X . 
\end{align*} 
We say $f$ is $G$-Lipschitz continuous with respect to a norm $||\cdot||$ if for every $x,y\in X$,
$|f(x)-f(y)| \leq G||x-y||$. 
\begin{lemma}\label{shalev_lipschitz}
\cite{shalev2012online} [Ch. 2]Let $f:X\rightarrow \mathbb{R}$ be a convex function. Then, $f$ is $G$-Lipschitz over $X$ with respect to a norm $||\cdot||$ if and only if for all $x \in X$ and for all $\nabla f(x) \in \partial f(x)$ we have that $||\nabla f(x) ||_* \leq G$, where $||\cdot||_*$ denotes the dual norm. 
\end{lemma}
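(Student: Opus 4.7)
The statement is the classical duality between Lipschitz continuity of a convex function and dual-norm boundedness of its subgradients. The plan is to prove the two implications separately, using the subgradient inequality $f(y)\ge f(x)+\nabla f(x)^\top(y-x)$ as the sole bridge, together with the basic pairing inequality $v^\top w\le \|v\|_*\|w\|$ (the definition of the dual norm).

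For the direction ``bounded subgradients implies Lipschitz'', I would fix $x,y\in X$ and pick any $\nabla f(x)\in\partial f(x)$. The convexity inequality gives $f(x)-f(y)\le \nabla f(x)^\top(x-y)$, and combining this with $\nabla f(x)^\top(x-y)\le \|\nabla f(x)\|_*\|x-y\|\le G\|x-y\|$ yields one side of the Lipschitz bound. Applying the same argument with the roles of $x$ and $y$ interchanged (using any subgradient at $y$) gives $f(y)-f(x)\le G\|x-y\|$, and hence $|f(x)-f(y)|\le G\|x-y\|$.

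For the converse, assume $f$ is $G$-Lipschitz on $X$, and fix $x$ in the relative interior of $X$ and $\nabla f(x)\in\partial f(x)$. I would pick a unit vector $v$ (i.e.\ $\|v\|=1$) attaining the dual-norm supremum, so that $\nabla f(x)^\top v=\|\nabla f(x)\|_*$; such a $v$ exists because in finite dimensions the unit ball is compact and the linear functional $w\mapsto \nabla f(x)^\top w$ is continuous. For $t>0$ small enough that $y:=x+tv\in X$, the subgradient inequality combined with Lipschitz continuity yields
$$t\|\nabla f(x)\|_* \;=\; \nabla f(x)^\top(y-x) \;\le\; f(y)-f(x) \;\le\; G\|y-x\| \;=\; tG,$$
and dividing through by $t$ gives $\|\nabla f(x)\|_*\le G$.

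The main obstacle is the boundary case: if $x$ lies on the boundary of $X$, it may be that no direction $v$ attaining the dual-norm sup keeps $x+tv$ inside $X$ for any $t>0$. The clean fix is to approximate $x$ by a sequence of interior points $x_n\to x$, pick subgradients $\nabla f(x_n)\in\partial f(x_n)$ with $\|\nabla f(x_n)\|_*\le G$ by the interior argument just given, and extract a convergent subsequence (by boundedness in finite dimensions) whose limit lies in $\partial f(x)$ by outer semicontinuity of the subdifferential; since the unit ball of $\|\cdot\|_*$ is closed, the limit also has dual norm at most $G$. Once one observes that the claim is really ``every subgradient has dual norm $\le G$'', this last limiting step is the only delicate point in an otherwise elementary argument.
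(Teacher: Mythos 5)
The paper does not prove this lemma at all --- it is quoted verbatim from \cite{shalev2012online} (Ch.~2) and used as a black box --- so there is no internal proof to compare against; I can only assess your argument on its own terms. Your first direction (bounded dual norm of subgradients $\Rightarrow$ $G$-Lipschitz) is correct and complete: the subgradient inequality plus the pairing bound $v^\top w\le\|v\|_*\|w\|$, applied symmetrically at $x$ and $y$, is exactly the standard argument, and this is in fact the only direction the paper ever invokes (e.g.\ in the Lipschitz bound for $\mathcal{L}_t$ and for $\mathcal{G}_t$).

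The converse direction has a genuine gap at the boundary, and you have half-noticed it without resolving it. Your interior argument is fine: at an interior $x$, \emph{every} $g\in\partial f(x)$ satisfies $\|g\|_*\le G$, because for any such $g$ you may move in the maximizing direction $v$. But your boundary patch --- approximate $x$ by interior points, extract a convergent subsequence of subgradients, use outer semicontinuity --- only produces \emph{one} element of $\partial f(x)$ with dual norm at most $G$. The claim quantifies over \emph{all} subgradients, and the limiting construction says nothing about the rest of $\partial f(x)$. This is not a repairable oversight in the argument: with the subdifferential taken relative to $X$, the statement is simply false at boundary points. Take $X=[0,1]$, $f\equiv 0$, which is $0$-Lipschitz; at $x=1$ every $g\ge 0$ satisfies $f(y)\ge f(1)+g\,(y-1)$ for all $y\in X$, so $\partial f(1)=[0,\infty)$ is unbounded. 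The lemma must therefore be read with $X$ open (or with $\partial f$ meaning the subdifferential of an extension of $f$ to a neighborhood of $X$), in which case your interior argument already covers every point and the delicate limiting step is unnecessary. A minor further caveat: even at a \emph{relative} interior point of a lower-dimensional $X$, the maximizing direction $v$ may leave the affine hull of $X$, so ``relative interior'' is not enough --- you need genuine interior (or to restrict subgradients to the affine hull). I would state the openness hypothesis explicitly, prove the converse only for interior points, and drop the subsequence argument.
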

Throughout this paper, whenever we say $f$ is $G$-Lipschitz we mean $f$ is $G$-Lipschitz with respect to $||\cdot||_2$ unless otherwise stated. 

\subsection {From OCO to to Bandit Feedback}
 We present a result from that allows us to transform regret bounds from OCO into expected regret bounds for Online Bandit Optimization.
 
\begin{lemma}\label{reduction_OBO}
\cite{hazan2016introduction}[Ch. 6] Let $u$ be a fixed point in $X$. Let $f_1, ..., f_T : X \rightarrow \mathbb{R}$ be a sequence of differentiable functions. Let $\mathcal{A}$ be a first order algorithm that ensures $\text{Regret}_T(\mathcal{A}) \leq B_{\mathcal{A}}(\nabla f_1(x_1),...,\nabla f_T(x_T)) $ in the full information setting. Define $\{x_t\}$ as: $x_1 \leftarrow \mathcal{A}(\emptyset)$ , $x_t\leftarrow \mathcal{A}(g_1,...,g_{t-1})$ where each $g_t$ satisfies:
 \begin{align*}
 \mathbb{E}[g_t| x_1, f_1, ..., x_t, f_t] = \nabla f_t(x_t) 
 \end{align*}
 Then, for every $u \in X$:
 \begin{align*}
 \mathbb{E}[\sum_{t=1}^T f_t(x_t)] - \sum_{t=1}^T f_t(u ) \leq \mathbb{E}[B_{\mathcal{A}}(g_1,...g_T)]
 \end{align*}
\end{lemma}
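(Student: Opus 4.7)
The plan is to use the standard ``linearization trick'' that reduces bandit feedback to full-information OCO on a surrogate sequence of linear functions. The key observation is that a first-order algorithm $\mathcal{A}$ is a deterministic mapping from sequences of (sub)gradient vectors to iterates; it does not ``see'' the underlying functions, only what it is told are their gradients. So feeding it the random vectors $g_1, \ldots, g_T$ is indistinguishable, from $\mathcal{A}$'s point of view, from running $\mathcal{A}$ on the sequence of linear functions
\begin{equation*}
h_t(x) := g_t^\top x,
\end{equation*}
for which $\nabla h_t \equiv g_t$ at every point, and in particular at $x_t$. Hence the full-information regret guarantee of $\mathcal{A}$, applied pathwise on the realized sequence $g_1, \ldots, g_T$, yields
\begin{equation*}
\sum_{t=1}^T h_t(x_t) - \sum_{t=1}^T h_t(u) \leq B_\mathcal{A}\bigl(g_1, \ldots, g_T\bigr) \qquad \text{for every fixed } u \in X.
\end{equation*}

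Next I would use convexity of each $f_t$ (together with differentiability, which is assumed in the statement) to get the first-order lower bound
\begin{equation*}
f_t(x_t) - f_t(u) \leq \nabla f_t(x_t)^\top (x_t - u).
\end{equation*}
Summing over $t$ and taking expectations gives
\begin{equation*}
\mathbb{E}\Bigl[\sum_{t=1}^T f_t(x_t)\Bigr] - \sum_{t=1}^T f_t(u) \leq \sum_{t=1}^T \mathbb{E}\bigl[\nabla f_t(x_t)^\top (x_t - u)\bigr].
\end{equation*}

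The crucial identity comes from the unbiasedness assumption. Let $\mathcal{F}_t := \sigma(x_1, f_1, \ldots, x_t, f_t)$. Since $x_t$ and $f_t$ are $\mathcal{F}_t$-measurable and $u$ is deterministic, the tower property yields
\begin{equation*}
\mathbb{E}\bigl[g_t^\top (x_t - u)\bigr] = \mathbb{E}\bigl[\mathbb{E}[g_t \mid \mathcal{F}_t]^\top (x_t - u)\bigr] = \mathbb{E}\bigl[\nabla f_t(x_t)^\top (x_t - u)\bigr].
\end{equation*}
Substituting back and identifying $g_t^\top(x_t - u) = h_t(x_t) - h_t(u)$, then invoking the pathwise full-information regret bound established above, we obtain
\begin{equation*}
\mathbb{E}\Bigl[\sum_{t=1}^T f_t(x_t)\Bigr] - \sum_{t=1}^T f_t(u) \leq \mathbb{E}\Bigl[\sum_{t=1}^T h_t(x_t) - h_t(u)\Bigr] \leq \mathbb{E}\bigl[B_\mathcal{A}(g_1, \ldots, g_T)\bigr],
\end{equation*}
which is the desired inequality.

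There is no real obstacle here beyond careful bookkeeping: the only subtle point is the formalization of ``$\mathcal{A}$ is a first-order algorithm,'' which one must state as ``$\mathcal{A}$'s iterate $x_t$ is determined by the vectors it has been fed through round $t-1$,'' so that running $\mathcal{A}$ on $(g_1, \ldots, g_{t-1})$ and running $\mathcal{A}$ in the full-information setting on linear functions $h_s$ with $\nabla h_s = g_s$ produce the exact same iterate $x_t$. Once this is acknowledged, the pathwise regret bound applies verbatim to the linear sequence $\{h_t\}$, and the convexity-plus-conditional-expectation argument finishes the proof.
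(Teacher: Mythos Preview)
The paper does not prove this lemma; it simply cites \cite{hazan2016introduction}, Chapter~6, and uses the result as a black box. Your argument is exactly the standard proof given in that reference: linearize via $h_t(x)=g_t^\top x$, apply the first-order regret bound pathwise to the $h_t$'s, then use convexity and the tower property with the unbiasedness assumption to pass back to the $f_t$'s. The proof is correct and there is nothing to compare against in the paper itself.
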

Moreover, Online Gradient Descent is a first order Algorithm \cite{hazan2016introduction}[Ch. 6]. 

\subsection{Some Useful Concentration Results }
In this section we present results on how quickly random functions uniformly concentrate around their mean. 
  
\begin{lemma}\cite{shalevstochastic}[Theorem 5] Let $\hat{F}(x) = \frac{1}{N}\sum_{n=1}^N f(x,\xi_n)$ where $f(\cdot, \xi)$ is $L$-Lipschitz with function values bounded by $R$ and the set where it is defined has diameter $B$. Let $F(x):=\mathbb{E}_{\xi}[f(x,\xi)]$. Then
\begin{equation} \label{eq:unif_conv_shalev}
P(\sup_{x\in X }|F(x)-\hat{F}(x)| \geq \epsilon) \leq O(d^2 (\frac{LB}{\epsilon})^d \exp(-\frac{N\epsilon^2}{128 L R})). 
\end{equation}
\end{lemma}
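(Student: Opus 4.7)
The plan is to prove this uniform convergence statement by a standard covering-number plus union-bound argument, with care taken to combine Hoeffding's inequality at the net points with the Lipschitz extension to the whole domain $X$.

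First I would construct an $\rho$-net $\{x_1, \ldots, x_M\} \subset X$ with respect to $\|\cdot\|_2$, where $\rho$ will be chosen later to balance the two sources of error. Because $X$ has Euclidean diameter $B$, a volume argument gives a net of cardinality $M \leq (cB/\rho)^d$ for some absolute constant $c$. Picking $\rho = \epsilon/(4L)$ makes the Lipschitz extension error at most $\epsilon/2$: for any $x \in X$ there is a net point $x_i$ with $\|x - x_i\| \leq \rho$, and since both $F$ and each $\hat F$ inherit $L$-Lipschitz continuity from $f(\cdot,\xi)$,
\begin{equation*}
|F(x) - \hat F(x)| \; \leq \; |F(x) - F(x_i)| + |F(x_i) - \hat F(x_i)| + |\hat F(x_i) - \hat F(x)| \; \leq \; 2L\rho + |F(x_i) - \hat F(x_i)|.
\end{equation*}

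Second, I would control the deviation at each fixed net point via Hoeffding's inequality. Since $f(x_i,\xi_n) \in [0,R]$ for each $n$, the bounded-differences form of Hoeffding yields
\begin{equation*}
P\!\left(|F(x_i) - \hat F(x_i)| \geq \epsilon/2\right) \; \leq \; 2\exp\!\left(-\frac{c' N \epsilon^2}{R^2}\right),
\end{equation*}
for a universal $c'$. A union bound over the $M \leq (cB/\rho)^d = (4cLB/\epsilon)^d$ net points then produces
\begin{equation*}
P\!\left(\sup_{x \in X} |F(x) - \hat F(x)| \geq \epsilon\right) \; \leq \; 2\left(\frac{4cLB}{\epsilon}\right)^d \exp\!\left(-\frac{c' N \epsilon^2}{R^2}\right),
\end{equation*}
which already has the $(LB/\epsilon)^d$ covering factor in the claimed form.

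Third, to reach the exact form of the stated bound, I would absorb constants into the $O(\cdot)$ notation and fold any slack into the exponent: replacing $R^2$ by $LR$ (up to constants) in the exponent is valid whenever $L$ and $R$ are on comparable scales or when the statement is loosened by the constant $128$, and the polynomial prefactor $d^2$ can be produced either by a peeling/stratification over dyadic scales of the cover or by a slightly coarser bookkeeping that pulls a $d$-dependent combinatorial factor outside the exponent. The main obstacle is therefore not conceptual but bookkeeping: carefully tracing the constants through Hoeffding and through the volumetric covering bound so that the final form matches $O\!\left(d^2 (LB/\epsilon)^d \exp(-N\epsilon^2/(128LR))\right)$. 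Since the statement is already in $O$-notation, the cleanest route is to prove the looser-constant version displayed above and then check that it implies the stated inequality by suitably weakening the exponent and inflating the prefactor.
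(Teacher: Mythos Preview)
The paper does not prove this lemma; it is quoted verbatim as Theorem~5 of \cite{shalevstochastic} and used as a black box. So there is no ``paper's proof'' to compare against, and the relevant question is only whether your sketch stands on its own.

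Your covering-plus-Hoeffding outline is the standard route and the first two steps are fine. The genuine gap is your third step. Hoeffding on random variables bounded in an interval of length $R$ gives an exponent of order $-N\epsilon^2/R^2$, not $-N\epsilon^2/(LR)$. Your proposed fix---``replacing $R^2$ by $LR$ is valid whenever $L$ and $R$ are on comparable scales or when the statement is loosened by the constant $128$''---is not an argument: if $R \gg L$ then $\exp(-N\epsilon^2/(128LR))$ is \emph{strictly smaller} than anything of the form $\exp(-cN\epsilon^2/R^2)$, so your bound does not imply the stated one no matter how you adjust constants. Getting $LR$ in the denominator requires a different concentration step (for instance, a symmetrization/Rademacher argument where the contraction brings in the Lipschitz constant $L$, or a version of Hoeffding applied after rescaling by the Lipschitz structure), not a post-hoc constant adjustment. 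Likewise, the $d^2$ prefactor is not produced by anything you wrote; ``peeling over dyadic scales'' is gestured at but not carried out, and a single-scale net gives only the $(LB/\epsilon)^d$ factor. If you want a self-contained proof matching the exact stated form, you should either reproduce the argument from the cited reference or prove a version with your own constants and then explicitly verify that it dominates the stated bound for all admissible $L,R,B,d,N,\epsilon$---which, as noted, it does not.
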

This result implies the following two lemmas. 

\begin{lemma}
\label{unif_convergence}
 With probability at least $1-\delta$, for any $x\in X$, over a sample size $N$ 
\begin{equation*} \label{eq:unif_conv_prob}
|F(x)-\hat{F}(x)| \leq \tilde{O} (\sqrt{\frac{LRd \ln(\frac{1}{\delta})}{N}}).
\end{equation*}
\end{lemma}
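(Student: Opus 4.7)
The plan is to derive Lemma \ref{unif_convergence} as a direct corollary of the preceding uniform-deviation bound (Equation \ref{eq:unif_conv_shalev}) by inverting the tail bound to solve for the deviation level $\epsilon$ that makes the failure probability at most $\delta$. In other words, I want to set
\begin{equation*}
O\!\left(d^2 \left(\frac{LB}{\epsilon}\right)^d \exp\!\left(-\frac{N\epsilon^2}{128LR}\right)\right) = \delta
\end{equation*}
and solve for $\epsilon$ in closed form up to logarithmic factors.

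First I would take logarithms of both sides to obtain
\begin{equation*}
\frac{N\epsilon^2}{128 LR} = \ln(1/\delta) + 2\ln d + d\ln(LB/\epsilon) + O(1).
\end{equation*}
The term $d\ln(LB/\epsilon)$ depends on $\epsilon$, so strictly speaking this is an implicit equation. I would handle it by a standard bootstrapping argument: first plug in the trivial upper bound $\epsilon \leq LB$ (or $\epsilon \geq 1/N$, whichever is needed) to convert $d\ln(LB/\epsilon)$ into a quantity that is polylogarithmic in $N, d, L, B$, and then absorb it into the $\tilde{O}$ notation. After this substitution the right-hand side is, up to polylogarithmic factors in $N, d, L, B, 1/\delta$, of order $LR \cdot d\ln(1/\delta)$, giving
\begin{equation*}
\epsilon \;=\; \tilde{O}\!\left(\sqrt{\frac{LRd\,\ln(1/\delta)}{N}}\right).
\end{equation*}

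Finally, I would translate the uniform bound into the stated ``for any $x \in X$'' form: since the event in Equation \ref{eq:unif_conv_shalev} is the supremum over $X$, on the complementary event (of probability at least $1-\delta$) the deviation $|F(x)-\hat F(x)|$ is bounded by $\epsilon$ for every $x \in X$ simultaneously, which is stronger than what the lemma claims.

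The only mildly delicate step is the bootstrapping of the $d\ln(LB/\epsilon)$ term, since a naive substitution can introduce terms scaling like $\ln N$ or $\ln(LB \sqrt{N})$; however these are all polylogarithmic and hence swallowed by $\tilde O$, so no genuine obstacle arises. Everything else is bookkeeping: no new concentration machinery is needed beyond the Shalev-Shwartz result already cited.
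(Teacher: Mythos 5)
Your proposal is correct and follows essentially the same route as the paper: the paper also sets the right-hand side of Equation \eqref{eq:unif_conv_shalev} equal to $\delta$, solves for $\epsilon$, and handles the implicit $d\ln(LB/\epsilon)$ term by bootstrapping a crude bound on $\ln(1/\epsilon)$ and absorbing the resulting polylogarithmic factors into the $\tilde{O}$. No substantive differences.
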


\begin{proof} Setting the right hand side of (\ref{eq:unif_conv_shalev}) equal to $\delta$ and solving for $\epsilon$ gives
\begin{align*}
\epsilon = \sqrt{\frac{128LR[2\ln(\frac{d}{\sqrt{\delta}}) + d \ln(LB) + d\ln(\frac{1}{\epsilon})]}{N}}
\end{align*}
Since we must bound $\epsilon$ by above, we now bound $\ln(\frac{1}{\epsilon})$. Using the previous equality we have
\begin{align*}
\ln(\frac{1}{\epsilon}) = \frac{1}{2} \ln(\frac{N}{128LR[2\ln(\frac{d}{\sqrt{\delta}}) + d \ln(\frac{LB}{\epsilon}) ]})
\end{align*}
since $\ln(\frac{LB}{\epsilon})$ is large and in the denominator, we have
\begin{align*}
\ln(\frac{1}{\epsilon}) \leq \frac{1}{2} \ln(\frac{N}{256LR\ln(\frac{d}{\sqrt{\delta}}) })
\end{align*}
this implies\footnote{Throughout the paper we let $\kappa$ be some universal constant that may change from line to line.}
\begin{align*}
\epsilon &\leq \sqrt{\frac{128LR[2\ln(\frac{d}{\sqrt{\delta}}) + d \ln(LB) + d\frac{1}{2} \ln(\frac{N}{256LR\ln(\frac{d}{\sqrt{\delta}}) }) ]}{N}}\\
&= \sqrt{\frac{\kappa LRd \ln(\frac{dLBN}{\sqrt{\delta} 256LR \ln(\frac{d}{\delta})})}{N}}\\
&= \tilde{O} (\sqrt{\frac{LRd \ln(\frac{1}{\delta})}{N}}) 
\end{align*}
\end{proof}

\begin{lemma}
\label{lemma_unif_conv_expectation}
\begin{equation*}\label{eq: unif_conv_expectation}
\mathbb{E}[\sup_{x\in X} |F(x)-\hat{F}(x)|] \leq \tilde{O} (\frac{\sqrt{LRd}}{\sqrt{N}}) 
\end{equation*}
\end{lemma}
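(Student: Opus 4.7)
My plan is to convert the high-probability uniform-deviation bound of Lemma~\ref{unif_convergence} into an expectation bound via the standard truncation trick. Let $Y := \sup_{x \in X}|F(x)-\hat F(x)|$. Because $0 \leq f(\cdot,\xi) \leq R$ by assumption, both $F(x)$ and $\hat F(x)$ lie in $[0,R]$ for every $x$, so $Y \leq R$ almost surely. Hence for any threshold $\epsilon > 0$,
\[
\mathbb{E}[Y] \;=\; \mathbb{E}\bigl[Y\,\mathbf{1}\{Y\le \epsilon\}\bigr] + \mathbb{E}\bigl[Y\,\mathbf{1}\{Y>\epsilon\}\bigr] \;\le\; \epsilon + R \cdot P(Y > \epsilon).
\]

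Next I would apply Lemma~\ref{unif_convergence} with confidence parameter $\delta := 1/N$ to conclude that, with probability at least $1-1/N$,
\[
Y \;\le\; \tilde{O}\!\left(\sqrt{\tfrac{LRd\,\ln N}{N}}\right).
\]
Choosing $\epsilon$ to be exactly this right-hand side and plugging it into the truncation inequality above yields
\[
\mathbb{E}[Y] \;\le\; \tilde{O}\!\left(\sqrt{\tfrac{LRd\,\ln N}{N}}\right) + \tfrac{R}{N} \;=\; \tilde{O}\!\left(\tfrac{\sqrt{LRd}}{\sqrt N}\right),
\]
since the $R/N$ term is of strictly lower order than $\sqrt{LRd/N}$ once $N$ is larger than a constant multiple of $LRd$, and the logarithmic $\ln N$ factor is hidden by the $\tilde{O}$ notation.

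\textbf{Expected difficulty.} There is essentially no obstacle here; the statement is an immediate corollary of Lemma~\ref{unif_convergence} combined with the boundedness of the integrand. The only subtleties to check are (i) that $Y$ is almost surely bounded so that the truncation is legitimate (handled above using $f\in[0,R]$), and (ii) that the residual term $R\cdot P(Y>\epsilon) \le R/N$ really is dominated by the main term, which is transparent from comparing $R/N$ with $\sqrt{LRd/N}$. One could alternatively derive the result directly from Equation~\eqref{eq:unif_conv_shalev} by integrating the tail $P(Y\ge t)\le O(d^{2}(LB/t)^{d}\exp(-Nt^{2}/128LR))$ via the layer-cake formula $\mathbb{E}[Y]=\int_0^R P(Y\ge t)\,dt$ and splitting at the same threshold $\epsilon$, but the truncation route above is cleaner and already plugs into the previous lemma.
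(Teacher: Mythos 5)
Your proof is correct, but it takes a genuinely different route from the paper. The paper works directly with the layer-cake formula $\mathbb{E}[Y]=\int_0^\infty P(Y>t)\,dt$ applied to the raw tail bound \eqref{eq:unif_conv_shalev}: it lower-bounds the exponent $a\epsilon^2+d\ln\epsilon$ by its first-order Taylor expansion at $\epsilon=\sqrt{d/(2a)}$ to obtain a purely exponential tail $\exp(-2\sqrt{2ad}\,\epsilon+\theta)$, and then integrates $\min\{1,\exp(-2\sqrt{2ad}\,\epsilon+\theta)\}$ in closed form. You instead use truncation plus the already-derived high-probability Lemma~\ref{unif_convergence} with $\delta=1/N$, exploiting the almost-sure bound $Y\le O(R)$. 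Your route is shorter and avoids the convexity/tangent-line manipulation of the exponent entirely; the paper's route is self-contained from the raw tail bound and does not need to invoke boundedness of $Y$ (it would survive even if $Y$ were only sub-exponential). One small caveat: your stated sufficient condition for $R\cdot P(Y>\epsilon)\le R/N$ to be lower order is not quite right --- comparing $R/N$ with $\sqrt{LRd/N}$ requires $N\gtrsim R/(Ld)$ rather than $N\gtrsim LRd$ --- but since the claim is an asymptotic $\tilde O(\cdot)$ bound in $N$ this does not affect correctness, and you could in any case take $\delta=N^{-c}$ for larger $c$ to make the residual negligible. Note also that Lemma~\ref{unif_convergence} is stated pointwise ("for any $x\in X$") but its proof establishes the uniform event, so your use of it for $Y=\sup_{x\in X}|F(x)-\hat F(x)|$ is legitimate.
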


\begin{proof}
Recall that for a nonnegative random variable $X$ it holds that $\mathbb{E}[X] =\int_{0}^\infty P(X>t) dt$. We have from (\ref{eq:unif_conv_shalev})
\begin{align*}
P(\sup_{x\in Z}|F(x) - \hat{F}(x)|>\epsilon) &\leq O (d^2 (\frac{LB}{\epsilon})^d \exp(-\frac{N\epsilon^2}{128LR}))\\
&= \exp[-(\frac{N\epsilon^2}{128LR} + d\ln(\epsilon) - 2\ln(d) - d\ln(LB))]
\end{align*}  
Let $\lambda(\epsilon) = a\epsilon^2 + d\ln(\epsilon)$ with $a:= \frac{N}{128LR}$ and notice that when $\epsilon \geq \sqrt{\frac{d}{2a}}$ the second derivative of $\lambda(\cdot)$ is nonnegative and therefore the function is convex in that domain thus we can lower bound it with its first order Taylor approximation at $\sqrt{\frac{d}{2a}}$. 
\begin{align*}
\lambda(\epsilon) \geq 2\sqrt{2ad}\epsilon-2d + \frac{d}{2} + \frac{d}{2}\ln(\frac{d}{2a})
\end{align*} 
Therefore, for $\epsilon\geq \sqrt{\frac{d}{2a}} $ 
\begin{align*}
P(\sup_{x\in Z}|F(x) - \hat{F}(x)|>\epsilon) &\leq  \exp[-(2\sqrt{2ad}\epsilon - 2d + \frac{d}{2} + \frac{d}{2}\ln(\frac{d}{2a})- 2\ln(d) -d\ln(LB) ) ]\\
&\leq \exp[-(2\sqrt{2ad}\epsilon - 2d + \frac{d}{2}\ln(\frac{d}{2a})- 2\ln(d) -d\ln(LB) ) ]\\
&\leq \exp[-(2\sqrt{2ad}\epsilon - 2d - \frac{d}{2}\ln(2a)- 2\ln(d) -d\ln(LB) ) ]\\
& = \exp[-(2\sqrt{2ad}\epsilon)+\theta]
\end{align*}
where $\theta := 2d + \frac{d}{2}\ln(2a)+2\ln(d) + d\ln(LB)$.
We have 
\begin{align*}
\mathbb{E}[\sup_{x\in X}|F(x) - \hat{F}(x)|] &\leq \int_{0}^\infty \min[1,\exp[-(2\sqrt{2ad}\epsilon)+\theta] ] d\epsilon\\
&= \int_{0}^{\epsilon'} d\epsilon+ \int_{\epsilon'}^\infty \exp[-2\sqrt{2ad}\epsilon + \theta] d\epsilon \quad \epsilon' = \frac{\theta}{2\sqrt{2ad}}\\
&= \epsilon' + \frac{\exp[\theta - 2\sqrt{2ad}\epsilon']}{2\sqrt{2ad}}\\
&= \frac{1}{2\sqrt{2ad}}[\theta + 1]\\
&= \frac{\sqrt{128LR}}{2\sqrt{2dN}} [ 2d + \frac{d}{2}\ln(2a)+2\ln(d) + d\ln(LB) +1�]\\
&= \tilde{O} (\frac{\sqrt{LRd}}{\sqrt{N}}) 
\end{align*}
\end{proof}

\subsection{Conditional Value at Risk}

\begin{proof}[Proof of Theorem \ref{concentration_cvar}] For any fixed $x \in X$, we define $\phi(z):= z + \frac{1}{\alpha} E_{\xi \sim P}[f(x,\xi) - z]_+ $ and $\widehat{\phi}(z) = \frac{1}{N} \sum_{n=1}^N z + \frac{1}{\alpha}[f(x,\xi_n)-z]_+ $. By Lemma \ref{unif_convergence} we know that with probability at least $1-\delta$ for all $z\in [0,1]$
\begin{align*}
|\phi(z)-\widehat{\phi}(z)| \leq O(\sqrt{\frac{L R \ln(N/\delta)}{N}})
\end{align*}
and it is easy to see that $L,R$ are both $O(\frac{1}{\alpha})$.

It remains to show that $A:=\{ X_A=\sup_{z}|\phi(z)-\widehat{\phi}(z)| \leq \epsilon \}$ implies $B:=\{X_B=|CVaR_\alpha[F](x) - \widehat{CVaR_\alpha[F](x)} | \leq \epsilon\}$. Indeed, we have that for any $z \in Z$
\begin{align*}
\phi(z)-\epsilon \leq \widehat{\phi}(z)
\end{align*}
Therefore, if $\bar{z} = \arg\min_{z \in Z} \widehat{\phi}(z)$ we have:
\begin{align*}
CVaR_{\alpha}[F](x) - \epsilon \leq \phi(\bar{z}) - \epsilon \leq \widehat{\phi}(\bar{z}) = \widehat{CVaR_{\alpha}}[F](x)
\end{align*}
The other side of the inequality follows by applying the same type of argument to $\widehat{\phi}(z) \leq \phi(z) + \epsilon$.
\end{proof}
\begin{remark}\label{remark_expectation}
We make one last remark about the proof above. We showed that $A \implies B$ therefore $P(B')\leq P(A')$. Since for a nonnegative random variable X we can write $\mathbb{E}[X]=\int P(X>\epsilon) d\epsilon$ we can conclude that $\mathbb{E}[X_B]\leq \mathbb{E}[X_A]$, or which is the same, $\mathbb{E}[|CVaR_\alpha[F](x) - \widehat{CVaR_\alpha[F](x)}|] \leq \mathbb{E}[\sup_{z}|\phi(z)-\widehat{\phi}(z)|]$.
\end{remark} 

\begin{lemma}\label{C_alpha_Lipschitz}
Let $\xi $ be a random variable supported in $\Xi$ with probability distribution $P$. Let $f:X\times \Xi \rightarrow \mathbb{R}$ and assume $0\leq f(x,\xi)\leq 1$ for all $x\in X$ and $\xi \in \Xi$. If $f(\cdot,\xi)$ is $G$-Lipschitz then so is $CVaR_{\alpha}[F](x)$. 
\end{lemma}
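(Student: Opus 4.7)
The plan is to leverage the Rockafellar--Uryasev minimization representation from Lemma 3, namely
\[
C_{\alpha}[F](x) \;=\; \min_{z \in \mathbb{R}} \Bigl\{ z + \tfrac{1}{\alpha}\,\mathbb{E}_{\xi}[f(x,\xi) - z]_+ \Bigr\}.
\]
Since $f(\cdot,\xi)$ is $G$-Lipschitz for every $\xi \in \Xi$, for any $x_1, x_2 \in X$ and every $\xi$ we have the pointwise domination $f(x_1,\xi) \le f(x_2,\xi) + G\|x_1-x_2\|$. The main idea is to push this pointwise inequality through the $[\,\cdot\,]_+$ (which is monotone and $1$-Lipschitz), through the expectation (which preserves inequalities), and then absorb the constant $G\|x_1-x_2\|$ into the $z$ variable by a translation.

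Concretely, I would write
\begin{align*}
C_{\alpha}[F](x_1)
&= \min_{z} \Bigl\{ z + \tfrac{1}{\alpha}\,\mathbb{E}_{\xi}[f(x_1,\xi) - z]_+ \Bigr\} \\
&\le \min_{z} \Bigl\{ z + \tfrac{1}{\alpha}\,\mathbb{E}_{\xi}[\,f(x_2,\xi) + G\|x_1-x_2\| - z\,]_+ \Bigr\},
\end{align*}
and then perform the substitution $z' := z - G\|x_1-x_2\|$ inside the minimization. The $+G\|x_1-x_2\|$ term factors out of the $[\,\cdot\,]_+$ as a shift and reappears additively in front of $z'$, yielding
\[
C_{\alpha}[F](x_1) \;\le\; G\|x_1-x_2\| + \min_{z'} \Bigl\{ z' + \tfrac{1}{\alpha}\,\mathbb{E}_{\xi}[f(x_2,\xi) - z']_+ \Bigr\} \;=\; C_{\alpha}[F](x_2) + G\|x_1-x_2\|.
\]
Swapping the roles of $x_1$ and $x_2$ gives the reverse inequality, which together prove $|C_{\alpha}[F](x_1) - C_{\alpha}[F](x_2)| \le G\|x_1-x_2\|$.

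There is no serious obstacle here; the only subtlety is making sure the change of variables is clean. One can equivalently view the proof as invoking the two coherence properties of $C_\alpha$, monotonicity and translation invariance: the random variable inequality $F(x_1) \le F(x_2) + G\|x_1-x_2\|$ (holding sample-wise) gives $C_{\alpha}[F(x_1)] \le C_{\alpha}[F(x_2) + G\|x_1-x_2\|] = C_{\alpha}[F(x_2)] + G\|x_1-x_2\|$. I prefer the min-formula derivation above because it is self-contained and uses only Lemma 3, which was already quoted in the preliminaries.
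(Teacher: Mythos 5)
Your proof is correct, but it takes a genuinely different route from the paper. You work with the primal Rockafellar--Uryasev representation $C_\alpha[F](x)=\min_z\{z+\tfrac{1}{\alpha}\mathbb{E}_\xi[f(x,\xi)-z]_+\}$: the pointwise bound $f(x_1,\xi)\le f(x_2,\xi)+G\|x_1-x_2\|$ passes through the monotone map $[\cdot]_+$ and the expectation, and the change of variables $z'=z-G\|x_1-x_2\|$ (valid since $z$ ranges over all of $\mathbb{R}$) cleanly extracts the additive constant. The paper instead uses the dual (risk-envelope) representation $C_\alpha[F](x)=\sup_{Q\in\Theta}\mathbb{E}_{Q}[f(x,\xi)]$ from Shapiro et al., notes that each $\mathbb{E}_Q[f(\cdot,\xi)]$ is $G$-Lipschitz as a mixture of $G$-Lipschitz functions, and then applies the standard ``sup of $G$-Lipschitz functions is $G$-Lipschitz'' argument by comparing against the maximizing distribution. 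Your version is more elementary and self-contained: it relies only on Lemma 3, which is already quoted in the preliminaries, whereas the paper's proof imports an external duality theorem and tacitly assumes the supremum over $\Theta$ is attained (it even says ``assuming it exists''), an assumption your argument does not need. The paper's approach has the advantage of generalizing immediately to any coherent risk measure with a known risk envelope, which is in the spirit of the later Kusuoka-representation section; your closing remark that the result also follows in one line from monotonicity plus translation invariance captures essentially the same generality.
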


\begin{proof}
By Theorem 6.4 in \cite{shapiro2009lectures} for any $x\in X$. We have
\begin{align*}
CVaR_\alpha[F](x) = \sup_{\xi \in \Theta} \mathbb{E}_{\xi}[f(x,\xi)]
\end{align*}
where $\Theta$ is some family of probability distributions. 

Since convex combinations of $G$-Lipschitz functions is $G$-Lipschitz we have that for any $x_1 \in X$
\begin{align*}
\mathbb{E}_{\xi \in \Theta_1^*}[f(x_1,\xi)] - \mathbb{E}_{\xi \in \Theta_1^*}[f(x_2,\xi)] \leq G ||x_1-x_2||
\end{align*}
where $\Theta_1^*$ is the probability distribution that maximizes $\mathbb{E}_{\xi \in \Theta}[f(x_1,\xi)]$ (assuming it exists). Since 
\begin{align*}
\mathbb{E}_{\xi \in \Theta_1^*}[f(x_1,\xi)] - \mathbb{E}_{\xi \in \Theta_2^*}[f(x_2,\xi)] \leq \mathbb{E}_{\xi \in \Theta_1^*}[f(x_1,\xi)] - \mathbb{E}_{\xi \in \Theta_1^*}[f(x_2,\xi)]
\end{align*}
by combining the two inequalities we have 
\begin{align*}
CVaR_\alpha[F](x_1) - CVaR_\alpha[F](x_2) \leq G ||x_1-x_2||
\end{align*}
a symmetry argument yields the other side of the inequality, this concludes the proof.
\end{proof}

\begin{lemma}\label{lemma_projection}
Let $X$ be a convex set with diameter $D_{||\cdot||}$ that contains the origin, that is for all $x_1, x_2 \in X$, $ ||x_1-x_2||\leq D_{||\cdot||}$. Let $X_\delta:= \{x: x\in(1-\delta)X\}$. For any $x \in X$ let $x_\delta := \Pi_{X_\delta}(x)$ where the projection is taken with respect to any norm $||\cdot||$. Then
 \begin{align}
 ||x-x_\delta||\leq \delta D_{||\cdot||}
 \end{align}
\end{lemma}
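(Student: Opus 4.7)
The plan is very short because the statement is essentially a one-line geometric observation. The key idea is to exhibit an explicit point in $X_\delta$ whose distance to $x$ is already bounded by $\delta D_{\|\cdot\|}$; the projection can only do better.

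First I would observe that since $0 \in X$ and $X$ is convex, $x \in X$ implies $(1-\delta)x \in (1-\delta)X = X_\delta$, so $(1-\delta)x$ is a feasible candidate for the projection. Next I would use the defining property of the projection, namely $\|x - x_\delta\| = \min_{y \in X_\delta} \|x - y\|$, to conclude
\begin{align*}
\|x - x_\delta\| \;\leq\; \|x - (1-\delta)x\| \;=\; \delta\,\|x\|.
\end{align*}
Finally, since $0 \in X$ and $x \in X$, the assumption that $X$ has diameter $D_{\|\cdot\|}$ gives $\|x\| = \|x - 0\| \leq D_{\|\cdot\|}$, and combining these two bounds yields $\|x - x_\delta\| \leq \delta D_{\|\cdot\|}$, as desired.

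There is essentially no obstacle here: the argument only requires that the origin lies in $X$ (so that the scaled set $X_\delta$ contains the scaled point $(1-\delta)x$) and the definition of projection. The same proof works verbatim for any norm since neither convexity, containment of the origin, nor the definition of diameter is norm-specific.
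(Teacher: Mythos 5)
Your proof is correct and is essentially identical to the paper's: both exhibit $(1-\delta)x$ as a feasible point of $X_\delta$, invoke the minimizing property of the projection to get $\|x-x_\delta\|\leq \delta\|x\|$, and then bound $\|x\|\leq D_{\|\cdot\|}$ using the fact that $X$ contains the origin. No issues.
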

\begin{proof}
Notice $(1-\delta)x \in X_\delta$
 \begin{align*}
 ||x-x_\delta||&\leq ||x-(1-\delta)x|| \quad \text{By definition of $\Pi$}\\
 &\leq \delta ||x||\\
 &\leq \delta D_{||\cdot||} \quad \text{since $X$ contains the origin}
 \end{align*} 
\end{proof}

\begin{lemma}\label{lemma_dual_norm}
Let $x = [x_1, x_2]^\top$. Define $||x|| = ||x_1||_2 + ||x_2||_\infty$. Then 
 \begin{align*}
 ||x||_* = \max\{||x_1||_2,||x_2||_1\}
 \end{align*} 
\end{lemma}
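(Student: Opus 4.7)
The plan is to work directly from the definition of dual norm, namely
\[
\|x\|_* \;=\; \sup_{\|y\|\le 1}\langle x,y\rangle \;=\; \sup_{\|y_1\|_2+\|y_2\|_\infty\le 1}\bigl(\langle x_1,y_1\rangle+\langle x_2,y_2\rangle\bigr),
\]
and to decouple the supremum into an inner optimization over $y_1,y_2$ with fixed norms, followed by an outer optimization that splits the unit budget $\|y_1\|_2+\|y_2\|_\infty\le 1$ between the two blocks.

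First I would parameterize the budget: let $a:=\|y_1\|_2$ and $b:=\|y_2\|_\infty$ with $a,b\ge 0$ and $a+b\le 1$. For each such $(a,b)$, the inner suprema are classical. By Cauchy--Schwarz, $\sup_{\|y_1\|_2=a}\langle x_1,y_1\rangle=a\|x_1\|_2$, with equality attained at $y_1=a\, x_1/\|x_1\|_2$ (or $y_1=0$ if $x_1=0$). By the standard $\ell_\infty$/$\ell_1$ duality, $\sup_{\|y_2\|_\infty=b}\langle x_2,y_2\rangle=b\|x_2\|_1$, with equality attained at $y_2$ whose $i$-th coordinate is $b\cdot\mathrm{sgn}((x_2)_i)$. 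Substituting yields
\[
\|x\|_* \;=\; \sup_{a,b\ge 0,\; a+b\le 1}\bigl(a\|x_1\|_2+b\|x_2\|_1\bigr).
\]

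Next I would note that the objective is linear in $(a,b)$ over the simplex $\{a,b\ge 0,\,a+b\le 1\}$, so the supremum is attained at a vertex, namely at $(1,0)$ or $(0,1)$. Therefore
\[
\|x\|_* \;=\; \max\{\|x_1\|_2,\;\|x_2\|_1\},
\]
which is exactly the claimed formula.

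There is really no substantive obstacle in this argument; the only care needed is to justify that the budget inequality $a+b\le 1$ is saturated at an extreme point (which is immediate from linearity) and to handle the degenerate cases $x_1=0$ or $x_2=0$ (trivial, since one of the two inner suprema is zero and the other reduces to an $\ell_2$ or $\ell_1$ dual pairing). If a two-sided proof is preferred, one can also verify $\|x\|_*\ge \max\{\|x_1\|_2,\|x_2\|_1\}$ by exhibiting the explicit maximizers described above (plugging in $y=(x_1/\|x_1\|_2,0)$ and $y=(0,\mathrm{sgn}(x_2))$ respectively), and the matching upper bound $\|x\|_*\le \max\{\|x_1\|_2,\|x_2\|_1\}$ by writing $\langle x_1,y_1\rangle+\langle x_2,y_2\rangle\le \|x_1\|_2\|y_1\|_2+\|x_2\|_1\|y_2\|_\infty \le \max\{\|x_1\|_2,\|x_2\|_1\}\cdot(\|y_1\|_2+\|y_2\|_\infty)$.
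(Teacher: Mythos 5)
Your proof is correct and follows essentially the same route as the paper's: both start from the definition of the dual norm, reparameterize the constraint by the block norms $a=\|y_1\|_2$, $b=\|y_2\|_\infty$, reduce the inner suprema to $a\|x_1\|_2+b\|x_2\|_1$ via Cauchy--Schwarz and $\ell_\infty/\ell_1$ duality, and then observe the linear objective over $a+b\le 1$ is maximized at a vertex. Your write-up simply fills in the attainment arguments and degenerate cases that the paper leaves implicit.
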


\begin{proof}
By definition of dual norm we have 
 \begin{align*}
 ||x||_* &= \max_{||y||\leq 1} x_1^\top y_1 + x_2^\top y_2\\
 &= \max_{||y_1||_2 + ||y_2||_\infty\leq 1} x^\top y\\
 &= \max_{c_1+c_2\leq 1} c_1 ||x_1||_2 + c_2 ||x_2||_1 \\
 & = \max\{||x_1||_2,||x_2||_1\}
 \end{align*}
\end{proof}

\subsection{Analysis of Algorithm 1}
\begin{lemma}
The function $\mathcal{L}_t(x,z) := z + \frac{1}{\alpha} [f_t(x) -z ]_+$ is jointly convex, $G_{\mathcal{L}}$-Lipschitz continuous with $G_{\mathcal{L}}= \alpha^{-1}(G +1) + 1$, and the diameter of the set where it is defined $D_{\mathcal{L}}\leq D_X + 1$.
\end{lemma}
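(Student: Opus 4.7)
The plan is to verify the three claims (joint convexity, Lipschitz constant, diameter) separately. Throughout, I would work with the product norm $\|(x,z)\| := \|x\|_2 + |z|$ on $X \times Z$, so that the domain set is just $X \times [0,1]$; this is consistent with Lemma \ref{lemma_dual_norm}, whose dual-norm characterization will be needed downstream.

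For joint convexity I would use the standard composition rule. The map $(x,z) \mapsto f_t(x) - z$ is convex in $(x,z)$, since $f_t$ is convex in $x$ and the $-z$ term is linear. The function $[\,\cdot\,]_+ = \max\{\cdot,0\}$ is convex and nondecreasing on $\mathbb{R}$, so $(x,z)\mapsto [f_t(x)-z]_+$ is convex. Multiplying by $\alpha^{-1}>0$ and adding the linear term $z$ preserves convexity, yielding joint convexity of $\mathcal{L}_t$.

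For Lipschitz continuity I would compute $|\mathcal{L}_t(x_1,z_1)-\mathcal{L}_t(x_2,z_2)|$ directly. Using the triangle inequality and the elementary fact that $[\,\cdot\,]_+$ is $1$-Lipschitz, together with the $G$-Lipschitz hypothesis on $f_t$, I get
\begin{align*}
|\mathcal{L}_t(x_1,z_1)-\mathcal{L}_t(x_2,z_2)|
&\le |z_1-z_2| + \tfrac{1}{\alpha}\bigl|[f_t(x_1)-z_1]_+ - [f_t(x_2)-z_2]_+\bigr| \\
&\le |z_1-z_2| + \tfrac{1}{\alpha}\bigl(|f_t(x_1)-f_t(x_2)| + |z_1-z_2|\bigr) \\
&\le \tfrac{G}{\alpha}\|x_1-x_2\|_2 + \bigl(1+\tfrac{1}{\alpha}\bigr)|z_1-z_2|.
\end{align*}
Bounding each of the two coefficients by $G_{\mathcal{L}} = \alpha^{-1}(G+1)+1$ and factoring then gives
$|\mathcal{L}_t(x_1,z_1)-\mathcal{L}_t(x_2,z_2)| \le G_{\mathcal{L}} \bigl(\|x_1-x_2\|_2 + |z_1-z_2|\bigr) = G_{\mathcal{L}}\,\|(x_1,z_1)-(x_2,z_2)\|.$

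For the diameter bound, for any two points $(x_1,z_1),(x_2,z_2)\in X\times[0,1]$ the definition of the product norm and the triangle inequality give $\|(x_1,z_1)-(x_2,z_2)\| = \|x_1-x_2\|_2 + |z_1-z_2| \le D_X + 1$, so $D_{\mathcal{L}}\le D_X+1$. There is no genuine obstacle here: the only subtlety is committing to a consistent product norm so that the Lipschitz constant and the diameter are expressed in matching units, which is why the calculation is essentially a careful bookkeeping exercise.
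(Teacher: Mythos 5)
Your argument is correct in substance, and the convexity and diameter parts coincide with the paper's. For the Lipschitz part you take a genuinely different route: you prove a direct two-point estimate using the $1$-Lipschitzness of $[\cdot]_+$, whereas the paper computes the piecewise subgradient $\nabla\mathcal{L}_t = [\nabla_x\mathcal{L}_t;\nabla_z\mathcal{L}_t]$ (either $[\bar 0;1]$ or $[\alpha^{-1}\nabla f_t;1-\alpha^{-1}]$) and bounds its Euclidean norm by $\alpha^{-1}(G+1)+1$, invoking the gradient-norm characterization of Lipschitz continuity (Lemma \ref{shalev_lipschitz}). Your approach is more elementary and does not need that characterization; the paper's has the advantage of producing the constant directly in the norm that the downstream analysis actually uses. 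That is the one point you should fix: the paper's convention is that ``Lipschitz'' means with respect to $\|\cdot\|_2$ on the joint variable $(x,z)\in\mathbb{R}^{d+1}$, and this is what the Algorithm~1 analysis consumes (Euclidean projections and $\|g_t\|_2^2$ in the OGD bound); Lemma \ref{lemma_dual_norm} and the product norm $\|x\|_2+|z|$ are only needed for the Algorithm~3 analysis of $\mathcal{G}_t$. Lipschitzness with constant $G_{\mathcal{L}}$ in your product norm is a \emph{weaker} statement than in $\|\cdot\|_2$ (it only gives $\sqrt{2}\,G_{\mathcal{L}}$ for the Euclidean version by norm equivalence). Fortunately your intermediate inequality already yields the Euclidean claim with the same constant via Cauchy--Schwarz:
\begin{align*}
\tfrac{G}{\alpha}\|x_1-x_2\|_2 + \bigl(1+\tfrac{1}{\alpha}\bigr)|z_1-z_2|
\leq \sqrt{\bigl(\tfrac{G}{\alpha}\bigr)^2+\bigl(1+\tfrac{1}{\alpha}\bigr)^2}\;\bigl\|(x_1,z_1)-(x_2,z_2)\bigr\|_2
\leq G_{\mathcal{L}}\,\bigl\|(x_1,z_1)-(x_2,z_2)\bigr\|_2,
\end{align*}
so this is a one-line repair rather than a gap. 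The same remark applies to the diameter: the Euclidean diameter of $X\times[0,1]$ is $\sqrt{D_X^2+1}\leq D_X+1$, so your bound survives in the intended norm.
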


\begin{proof}
We first prove convexity. The function $f_t(x) - z$ is jointly convex since both $f_t(x)$ and $-z$ are, and addition preserves convexity. Point-wise supremum over convex functions preserves convexity and since any constant function is convex we have that $[f_t(x)-z]_+$ is convex. Again, using the fact that addition preserves convexity we get the desired claim.

To prove the second part of the claim we notice: 
\begin{align*}
\nabla_x \mathcal{L}_t(x,z) &=
\begin{cases}
\frac{1}{\alpha}\nabla f_t(x) & \text{if } f_t(x)-z > 0 \\
0 & \text{otherwise}
\end{cases}\\
\nabla_z \mathcal{L}_t(x,z) &=
\begin{cases}
1 - \frac{1}{\alpha} & \text{if } f_t(x)-z > 0 \\
1 & \text{otherwise}
\end{cases}
\end{align*}
Let $\nabla \mathcal{L}_t := [\nabla_x \mathcal{L}_t ; \nabla_y \mathcal{L}_t ]$ and recall that a function $f$ is $G$-Lipschitz continuous if and only if $||\nabla f|| \leq G$. We have that
We have that
\begin{align*}
||\mathcal{L}_t|| &\leq \max\{||[\bar{0}; 1]||, ||[\alpha^{-1} \nabla f; 1 + \alpha^{-1} ]||\}\\
&\leq \alpha^{-1}(G +1) + 1=: G_{\mathcal{L}}
\end{align*}
Where the last inequality follows by simple algebra.

The fact that $D_{\mathcal{L}}\leq D_X + 1$ follows from the definition of the diameter of a set. 
\end{proof}
The key to prove Theorem  \ref{thm:pseudo_algo1} is to realize that Algorithm 1 is performing Online Gradient Descent using an estimate of the gradient of the smoothened function $\hat{\mathcal{L}}_t$ as in \cite{flaxman2005online}.

Next we prove a lemma assuming that for every $t=1,...,T$ $\nabla \mathcal{L}_t:=\nabla \mathcal{L}_t(x_t,z_t)$ is revealed and we update according to 
\begin{align}
[x_{t+1},z_{t+1}]^\top \leftarrow \Pi_{X\times Z} ([x_{t},z_{t}]^\top - \eta \nabla \mathcal{L}_t)
\end{align}
That is, we perform Zinkevich's Online gradient Descent (OGD) on functions $\mathcal{L}_t$ \cite{zinkevich2003online}. Due to Lemma \ref{reduction_OBO} we will be able to use this guarantee when we have bandit feedback.

\begin{lemma}
Applying OGD on sequence of functions $\{\mathcal{L}_t\}_{t=1}^T$  guarantees: for every $w=(x,z) \in \mathcal{W}:= X \times Z$. 
\begin{align*}
\sum_{t=1}^T \mathcal{L}_t(w_t) - \sum_{t=1}^T \mathcal{L}_t(w) \leq \frac{D_{\mathcal{L}}}{2\eta} +  \frac{\eta}{2} \sum_{t=1}^T ||\nabla \mathcal{L}_t||^2.
\end{align*}
\end{lemma}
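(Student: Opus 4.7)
The plan is to prove the standard Zinkevich Online Gradient Descent regret bound, specialized to the sequence $\{\mathcal{L}_t\}$ on the joint domain $\mathcal{W} = X \times Z$. The convexity of $\mathcal{L}_t$ in $w=(x,z)$ established in the preceding lemma is what lets us translate distance-to-comparator estimates into a regret bound.

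First I would invoke joint convexity of $\mathcal{L}_t$ to obtain the linearization
\[
\mathcal{L}_t(w_t) - \mathcal{L}_t(w) \;\le\; \nabla \mathcal{L}_t^{\top}(w_t - w),
\]
so it suffices to bound $\sum_t \nabla\mathcal{L}_t^{\top}(w_t - w)$. Next I would track the potential $\Phi_t := \tfrac{1}{2}\|w_t - w\|^2$. Writing $\tilde{w}_{t+1} = w_t - \eta\nabla\mathcal{L}_t$ for the pre-projection iterate and using non-expansiveness of the Euclidean projection onto the convex set $\mathcal{W}$ (note $w\in\mathcal{W}$), I would get
\[
\|w_{t+1}-w\|^2 \;\le\; \|\tilde{w}_{t+1}-w\|^2 \;=\; \|w_t-w\|^2 \;-\; 2\eta\,\nabla\mathcal{L}_t^{\top}(w_t-w) \;+\; \eta^2\|\nabla\mathcal{L}_t\|^2.
\]
Rearranging yields the per-step inequality
\[
\nabla\mathcal{L}_t^{\top}(w_t-w) \;\le\; \frac{\|w_t-w\|^2 - \|w_{t+1}-w\|^2}{2\eta} \;+\; \frac{\eta}{2}\|\nabla\mathcal{L}_t\|^2.
\]

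Then I would sum over $t=1,\dots,T$. The distance terms telescope to $\tfrac{1}{2\eta}(\|w_1-w\|^2 - \|w_{T+1}-w\|^2) \le \tfrac{1}{2\eta}\|w_1-w\|^2$, which is at most $D_{\mathcal{L}}/(2\eta)$ (interpreting $D_{\mathcal{L}}$ as the diameter bound on $\mathcal{W}$ in the stated form; this matches the appendix's convention where the diameter appears without being squared, e.g.\ after choosing $D_{\mathcal{L}}$ to already encode the squared diameter). Combining with the linearization step produces exactly
\[
\sum_{t=1}^T \mathcal{L}_t(w_t) - \sum_{t=1}^T \mathcal{L}_t(w) \;\le\; \frac{D_{\mathcal{L}}}{2\eta} + \frac{\eta}{2}\sum_{t=1}^T \|\nabla\mathcal{L}_t\|^2.
\]

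There is no real obstacle here: the only subtleties are making sure (i) projection non-expansiveness applies to $w\in\mathcal{W}$ (which holds because $\mathcal{W}$ is convex and $w$ is any comparator in $\mathcal{W}$), and (ii) joint convexity of $\mathcal{L}_t$ on $X\times Z$, which the previous lemma already provides. Everything else is the textbook OGD argument.
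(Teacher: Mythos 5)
Your proof is correct and follows essentially the same route as the paper's: linearize by convexity, use non-expansiveness of the projection to get the per-step inequality, and telescope the distance terms. The paper's own proof is exactly Zinkevich's argument, and your remark about $D_{\mathcal{L}}$ standing in for the squared diameter matches the paper's (loose) bookkeeping at the final step.
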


\begin{proof}
We follow Zinkevich's proof. By properties of projections we have:
\begin{align*}
||w_{t+1}-w||^2 & \leq ||w_{t} - \eta \nabla \mathcal{L}_t -w||^2 \\
&= ||w_{t}-w||^2 + \eta^2 ||\nabla \mathcal{L}_t||^2 - 2 \eta \nabla \mathcal{L}_t ^\top (w_t - w ) 
\end{align*}
Therefore: 
\begin{align*}
2 \eta \nabla \mathcal{L}_t ^\top (w_t - w) \leq \frac{||w_t-w||^2 - ||w_{t+1}-w||^2}{\eta} +\eta || \nabla \mathcal{L}_t||^2
\end{align*}
Using convexity and summing up the inequalities above for every $t$ we have: 
\begin{align} 
2(\sum_{t=1}^T \mathcal{L}_t(w_t) - \sum_{t=1}^T \mathcal{L}_t(w)) &\leq  \sum_{t=1}^T 2 \eta \nabla \mathcal{L}_t ^\top (w_t - w)  \label{ogd_telescoping1} \\
& \leq \sum_{t=1}^T \frac{||w_t-w||^2 - ||w_{t+1}-w||^2}{\eta} + \eta \sum_{t=1}^T ||\nabla \mathcal{L}_t||^2 \label{ogd_telescoping2} \\
& \leq \frac{D_{\mathcal{L}}}{\eta} +  \eta \sum_{t=1}^T ||\nabla \mathcal{L}_t||^2 \nonumber
\end{align}
 Which yields the desired result.
 \end{proof}

\begin{lemma}\label{algo1_L}
Let $\tilde{y}_t=(\tilde{x}_t, \tilde{z}_t)$ and $y^* = (x^*,z^*) := \text{argmin}_{x,z\in X\times Z} \sum_{t=1}^T \mathbb{E}_{\xi}[\mathcal{L}_t (x,z)] $, Algorithm 1 guarantees: 
\begin{align*}
\sum_{t=1}^T \mathbb{E}_{int} [\mathcal{L}_t (\tilde{y}_t) ]- \sum_{t=1}^T \mathcal{L}_t(y^*) = O(\frac{dD_X G T^{3/4}}{\alpha}) 
\end{align*}
\end{lemma}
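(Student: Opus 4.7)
The plan is to view Algorithm 1 as performing Online Gradient Descent on the smoothed sequence $\{\hat{\mathcal{L}}_t\}$ over the shrunken domain $X_\delta\times Z_\delta$, using one-point gradient estimators. First, by Lemma \ref{smooth_f}(3) applied jointly in $(x,z)$, the direction $g_t=[g_t^1;g_t^2]$ satisfies $\mathbb{E}_{int}[g_t \mid \text{history}]=\nabla \hat{\mathcal{L}}_t(y_t)$ where $y_t=(x_t,z_t)$. Since $\hat{\mathcal{L}}_t$ inherits joint convexity from $\mathcal{L}_t$, combining the OGD guarantee from the preceding lemma with the reduction in Lemma \ref{reduction_OBO} yields, for $y^*_\delta:=\Pi_{X_\delta\times Z_\delta}(y^*)$,
\[
\mathbb{E}_{int}\Bigl[\sum_{t=1}^T \hat{\mathcal{L}}_t(y_t)\Bigr] - \sum_{t=1}^T \hat{\mathcal{L}}_t(y^*_\delta) \;\leq\; \frac{D_{\mathcal{L}}^2}{2\eta} + \frac{\eta}{2}\mathbb{E}_{int}\Bigl[\sum_{t=1}^T \|g_t\|^2\Bigr].
\]
Using $\|u\|=1$, $0\le f_t\le 1$, and $\tilde{z}_t\in[0,1+\delta]$, I bound $\|g_t\|\le \tfrac{d+1}{\delta}(1+\alpha^{-1})$, so the variance term is $O(\eta T d^2/(\delta^2\alpha^2))$.

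Next I bridge from the smoothed-function regret at $y_t$ to the true-function regret at $\tilde{y}_t$. Since $\tilde{y}_t = y_t + \delta u$ with $\|u\|=1$ and $\hat{\mathcal{L}}_t$ is the average of $\mathcal{L}_t$ over the $\delta$-ball around $y_t$, the $G_{\mathcal{L}}$-Lipschitz continuity of $\mathcal{L}_t$ gives $|\mathcal{L}_t(\tilde{y}_t)-\mathcal{L}_t(y_t)|\le \delta G_{\mathcal{L}}$ and $|\hat{\mathcal{L}}_t(y_t)-\mathcal{L}_t(y_t)|\le \delta G_{\mathcal{L}}$, hence $|\mathcal{L}_t(\tilde{y}_t)-\hat{\mathcal{L}}_t(y_t)|\le 2\delta G_{\mathcal{L}}$. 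On the comparator side, Lemma \ref{lemma_projection} gives $\|y^*_\delta-y^*\|\le \delta D_{\mathcal{L}}$, so $|\hat{\mathcal{L}}_t(y^*_\delta)-\mathcal{L}_t(y^*)|\le \delta G_{\mathcal{L}}(1+D_{\mathcal{L}})$. Summing these bias terms over $t$ contributes $O(T\delta G_{\mathcal{L}} D_{\mathcal{L}})$.

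Combining the three pieces,
\[
\sum_{t=1}^T \mathbb{E}_{int}[\mathcal{L}_t(\tilde{y}_t)] - \sum_{t=1}^T \mathcal{L}_t(y^*) \;\leq\; \frac{D_{\mathcal{L}}^2}{2\eta} + \frac{\eta T (d+1)^2(1+\alpha^{-1})^2}{2\delta^2} + O(T\delta G_{\mathcal{L}} D_{\mathcal{L}}).
\]
Plugging in $\eta=\tfrac{\alpha D_{\mathcal{L}}}{(d+1)T^{3/4}}$ and $\delta=T^{-1/4}$ (as prescribed in Theorem \ref{thm:pseudo_algo1}), and using $G_{\mathcal{L}}=O((G+1)/\alpha)$ and $D_{\mathcal{L}}=O(D_X)$ from the preceding lemma, each of the three summands reduces to $O(dD_XGT^{3/4}/\alpha)$, yielding the claim.

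The main obstacle is the balancing act among three competing error sources: the OGD regret $D_{\mathcal{L}}^2/\eta$, the bandit gradient-estimator variance $\eta Td^2/(\delta^2\alpha^2)$, and the combined smoothing-plus-shrinkage bias $T\delta G_{\mathcal{L}} D_{\mathcal{L}}$. Each hides $1/\alpha$ factors (through $G_{\mathcal{L}}$), so they must be carefully tracked through every step to confirm the $T^{3/4}$ rate with the claimed $d/\alpha$ prefactor rather than an accidental $1/\alpha^2$. A secondary subtlety is that the regret is taken in expectation only over the internal randomness while the comparator $y^*$ minimizes the population objective; but since the OGD bound holds pointwise in $\xi$ and $y^*$ is the population minimizer, no additional expectation over $\xi$ is needed to derive the stated inequality.
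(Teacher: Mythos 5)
Your proposal is correct and follows essentially the same route as the paper's proof: bridge $\mathcal{L}_t(\tilde{y}_t)$ to $\hat{\mathcal{L}}_t(y_t)$ and $\mathcal{L}_t(y^*)$ to $\hat{\mathcal{L}}_t(y^*_\delta)$ via the Lipschitz and smoothing bounds (Lemmas \ref{smooth_f} and \ref{lemma_projection}), apply the OGD guarantee through the bandit reduction of Lemma \ref{reduction_OBO} with the one-point estimator, bound $\|g_t\|^2$ by $O((d+1)^2/(\delta^2\alpha^2))$, and balance the three error terms with $\eta=\Theta(\alpha D_{\mathcal{L}}/((d+1)T^{3/4}))$ and $\delta=T^{-1/4}$. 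The final accounting of the $d$, $D_X$, $G$, and $1/\alpha$ factors matches the paper's.
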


\begin{proof}
Define $y^*_\delta = \Pi_{X_\delta}[y^*]$. By Lemma \ref{lemma_projection} in the Appendix, it holds that $||y^*_\delta - y^*||\leq \delta D_{\mathcal{L}}$. 
Using a similar argument as in \cite{flaxman2005online} we have:
\begin{align*}
&\mathbb{E}_{int}[\sum_{t=1}^T \mathcal{L}_t(\tilde{y}_t) - \sum_{t=1}^T \mathcal{L}_t(y^*)]\\ 
& \leq \mathbb{E}_{int}[\sum_{t=1}^T \mathcal{L}_t(y_t) - \sum_{t=1}^T \mathcal{L}_t(y^*)] +  \delta G_{\mathcal{L}} T \quad \text{by Lemma \ref{smooth_f} and $||y_t-\tilde{y}_t||\leq \delta$}\\ � 
& \leq \mathbb{E}_{int}[\sum_{t=1}^T \mathcal{L}_t(y_t) - \sum_{t=1}^T \mathcal{L}_t(y_{\delta}^*)] +  \delta G_{\mathcal{L}} T + \delta G_{\mathcal{L}}D_{\mathcal{L}} T\\
& \leq \mathbb{E}_{int}[\sum_{t=1}^T \hat{\mathcal{L}}_t(y_t) - \sum_{t=1}^T \hat{\mathcal{L}}_t(y_{\delta}^*)] +  3\delta G_{\mathcal{L}} T + \delta G_{\mathcal{L}}D_{\mathcal{L}} T \quad \text{by Lemma \ref{smooth_f}}\\
& \leq \frac{\eta}{2} \sum_{t=1}^T \mathbb{E}_{int} [||g_t||^2] + \frac{D_{\mathcal{L}}^2 }{2\eta} + 3\delta G_{\mathcal{L}} T+  \delta D_{\mathcal{L}} G_{\mathcal{L}}T\quad \text{by Lemma \ref{reduction_OBO}}\\
&\leq \frac{\eta}{2} \frac{(d+1)^2}{\delta^2} \sum_{t=1}^T |\tilde{z}_t + \frac{1}{\alpha}[f_t(\tilde{x}_t) - \tilde{z}_t]|^2 +\frac{D_{\mathcal{L}}^2}{2 \eta} + 3\delta G_{\mathcal{L}} T + \delta D_{\mathcal{L}} G_{\mathcal{L}}T\\
&\leq \frac{\eta}{2} \frac{(d+1)^2}{\delta^2 \alpha^2} T + \frac{D_{\mathcal{L}}^2}{2\eta} +3\delta G_{\mathcal{L}} T + \delta D_{\mathcal{L}}G_{\mathcal{L}} T\\
&= O(\frac{dD_X G T^{3/4}}{\alpha})
\end{align*}
Where we chose $\eta = O(\frac{D_X \alpha}{d T^{3/4}})$ and $\delta = O(\frac{1}{T^{1/4}})$.
\end{proof}

We are now ready to give a proof of Theorem \ref{thm:pseudo_algo1}.
\begin{proof}[ Proof of Theorem \ref{thm:pseudo_algo1}] Notice that for all $t$, every $x \in X$ and every $z \in Z$, we have:
 \begin{align*}
 \mathbb{E}_{\xi\sim P}[\mathcal{L}_t(x,z)] = z + \frac{1}{\alpha}\mathbb{E}_{\xi \sim P}[f(x,\xi) - z]_+ \geq CVaR_{\alpha}[F](x).
 \end{align*}
The result then follows by taking $\mathbb{E}_{\xi \sim P} [\cdot]$ in both sides of the result in Lemma \ref{algo1_L} and interchanging the expectations. The interchange can be done using Fubini's Theorem since for every $x\in X$ and for every $z\in Z$ we have that $\mathcal{L}_t(x,z) <O (\frac{1}{\alpha})$ almost surely.
\end{proof}

We are now ready to prove Theorem \ref{thm:regret_algo1}.
 We assume $f_t$ is $1$-Lipschitz continuous.

\begin{proof} [Proof of Theorem \ref{thm:regret_algo1}]
Define concentration error $CE = C_\alpha[\{f_t(x^*)\}_{t=1}^T] -C_\alpha[\{f_t(\bar{x})\}_{t=1}^T ]$, where $\bar{x} = \arg\min_{x\in X}C_\alpha[\{f_t(x)\}_{t=1}^T ] $, let $x^* = \arg\min_{x\in X} C_\alpha�[F](x)$, we have

 \begin{align*}
&  \mathbb{E}[ C_\alpha[\{f_t(x_t)\}_{t=1}^T] \pm  C_\alpha [\{f_t(x^*)\}_{t=1}^T] ] - min_{x \in X} C_{\alpha}[\{f_t(x)\}_{t=1}^T] \\
&= \mathbb{E} [ \min_y y + \frac{1}{\alpha T} \sum_{t=1}^T \max\{f_t(x_t)+f_t(x^*) -f_t(x^*) - y,0\}  - C_\alpha [\{f_t(x^*)\}_{t=1}^T] ] + \mathbb{E}[CE]\\
&\leq \mathbb{E} [ \min_y y + \frac{1}{\alpha T} \sum_{t=1}^T \max\{f_t(x^*)+ |f_t(x_t)-f_t(x^*)| - y,0\}  -C_\alpha [\{f_t(x^*)\}_{t=1}^T] ] +\mathbb{E}[CE]\\
&\leq \mathbb{E} [ \min_y y + \frac{1}{\alpha T} \sum_{t=1}^T \max\{f_t(x^*)+ |f_t(x_t)-f_t(x^*)| - y,|f_t(x_t)-f_t(x^*)|\}  -C_\alpha [\{f_t(x^*)\}_{t=1}^T] ] + \mathbb{E}[CE]\\
&= \mathbb{E} [ \min_y y + \frac{1}{\alpha T} \sum_{t=1}^T \max\{f_t(x^*) - y,0\} + \frac{1}{\alpha T} \sum_{t=1}^T + |f_t(x_t)-f_t(x^*)| -C_\alpha [\{f_t(x^*)\}_{t=1}^T] ] + \mathbb{E}[CE]\\
&= \mathbb{E} [  \frac{1}{\alpha T} \sum_{t=1}^T |f_t(x_t) - f(x^*) |  ] + \mathbb{E}[CE]\\
&\leq \frac{1}{\alpha T} \sum_{t=1}^T \mathbb{E}_t [ ||x_t - x^*|| ] +\mathbb{E}[CE] \quad \text{since $f_t$ is $1$-Lipschitz}\\
&\leq \frac{1}{\alpha T}  \sqrt{T} \sqrt{  \sum_{t=1}^T \mathbb{E}_t[ ||x_t - x^*||] ^2 } + \mathbb{E}[CE] \quad \text{by Cauchy Schwartz}\\
&\leq \frac{1}{\alpha T}  \sqrt{T} \sqrt{  \sum_{t=1}^T \mathbb{E}_t[ \frac{2}{\beta}[C_\alpha[F](x_t) -C_\alpha[F](x^*)]] } + \mathbb{E}[CE] \quad \text{by strong convexity of $C_\alpha[F](\cdot)$ and KKT condition} \\
& = \frac{1}{\alpha T}  \sqrt{T} \sqrt{\frac{2}{\beta} \mathbb{E}[ \sum_{t=1}^T C_\alpha[F](x_t) -C_\alpha[F](x^*)  ]} + \mathbb{E}[CE]\\
& = O (\frac{d^{1/2}}{\alpha^{3/2} \beta^{1/2} T^{1/8}}  ) + \mathbb{E}[CE] \quad \text{by Theorem \ref{thm:pseudo_algo1}}
\end{align*}
We still need to bound the concentration error $CE$ in expectation. Notice we can write 
\begin{align*}
 CE = [C_\alpha[\{f_t(x^*)\}_{t=1}^T] - C_{\alpha}[F](x^*)] + [C_{\alpha}[F](x^*) - C_{\alpha}[F](\bar{x})] + [C_{\alpha}[F](\bar{x}) -C_\alpha[\{f_t(\bar{x})\}_{t=1}^T ]
 \end{align*}

and the second term is nonpositive. To bound $CE$ in expectation we apply Lemma \ref{lemma_unif_conv_expectation} on functions $\phi(x,y)= y + \frac{1}{\alpha}[f(x)-y]_+$ (notice $L \leq O(\frac{1}{\alpha})$ and $R= O (\frac{1}{\alpha})$), by Remark \ref{remark_expectation} and the same reasoning as in the proof of Lemma \ref{lemma_unif_conv_expectation} we have $\mathbb{E}[|C_{\alpha}[F](\bar{x}) -C_\alpha[\{f_t(\bar{x})\}_{t=1}^T|]\leq \tilde{O}( \frac{\sqrt{d}}{\alpha \sqrt{T}})$. Thus $\mathbb{E}[CE] \leq  \tilde{O}( \frac{\sqrt{d}}{\alpha \sqrt{T}})$. This finishes the proof. 
\end{proof}

\subsection{Analysis of Algorithm 2 (1-D)}
We proceed to formally analyze the algorithm following \cite{agarwal2011stochastic}. In this section, for ease of reading we refer to quantity $T\bar{\mathcal{R}}_T$ as the regret. We work conditioned on $\mathcal{E}$ which is defined as the event that for every epoch and for every round $i$, $h(x) \in [LB_{\gamma_i}(x), UB_{\gamma_i}(x)]$ for $x\in \{x_l, x_c,x_r\}$. We will first bound the regret in an epoch and then bound the total number of epochs. We do the previous in the next sequence of lemmas. Notice that by Theorem \ref{concentration_cvar} we can obtain a $\gamma$-CI for $h(x)$ that holds with probability at least $1-\frac{1}{T^2}$ with only $\frac{\kappa \ln(T/(\alpha \gamma))}{\alpha^2 \gamma^2}$ samples. We first show that we never discard points that are near optimal. 

\begin{lemma}\label{lemma_never_throw_x}
If epoch $\tau$ ends in round $i$, then the interval $[l_{\tau+1},r_{\tau+1}]$ contains every $x\in [l_{\tau}, r_{\tau}]$ such that $h(x) \leq h(x^*) + \gamma_i$. In particular, $x^*\in[l_{\tau},r_{\tau}]$ for all epochs $\tau$.
\end{lemma}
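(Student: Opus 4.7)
The plan is to prove the contrapositive: conditioned on $\mathcal{E}$, any point $y \in [l_\tau,r_\tau] \setminus [l_{\tau+1},r_{\tau+1}]$ satisfies $h(y) > h(x^*) + \gamma_i$. The ``in particular'' statement then follows immediately by induction on $\tau$, since $x^* \in [l_1,r_1] = [0,1]$ trivially and $h(x^*) \leq h(x^*) + \gamma_i$ cannot be excluded in any step.

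An epoch can only terminate through Case 1 or Case 2, so the first step is to use the termination condition together with $\mathcal{E}$ to extract a clean gap statement about $h$ at the three probed points. By symmetry assume $LB_{\gamma_i}(x_l) \geq LB_{\gamma_i}(x_r)$, so that $l_{\tau+1} = x_l$ and the discarded piece is $[l_\tau, x_l)$. In Case 1, because every CI has width exactly $2\gamma_i$, the ordering $LB_{\gamma_i}(x_l) \geq LB_{\gamma_i}(x_r)$ forces $\min\{UB_{\gamma_i}(x_l), UB_{\gamma_i}(x_r)\} = UB_{\gamma_i}(x_r)$; combined with the case condition and $\mathcal{E}$, this yields $h(x_l) \geq LB_{\gamma_i}(x_l) \geq UB_{\gamma_i}(x_r) + \gamma_i \geq h(x_r) + \gamma_i$. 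In Case 2 the same kind of manipulation gives $h(x_l) \geq h(x_c) + \gamma_i$.

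Now I apply convexity of $h$. For any $y \in [l_\tau, x_l)$ the point $x_l$ lies strictly between $y$ and $x_r$ (Case 1) or strictly between $y$ and $x_c$ (Case 2); write $x_l = (1-\lambda)y + \lambda q$ for the relevant $q \in \{x_c,x_r\}$, so that $\lambda = (x_l - y)/(q - y) \in (0,1)$. Convexity gives
\begin{equation*}
h(y) \;\geq\; h(q) + \frac{h(x_l) - h(q)}{1-\lambda} \;\geq\; h(q) + \frac{\gamma_i}{1-\lambda}.
\end{equation*}
Because $y < x_l$ we have $q - y > q - x_l$, hence $1 - \lambda = (q-x_l)/(q-y) < 1$ strictly, so $\gamma_i/(1-\lambda) > \gamma_i$. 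Combining with $h(q) \geq h(x^*)$ gives the desired strict inequality $h(y) > h(x^*) + \gamma_i$. The symmetric case $LB_{\gamma_i}(x_r) \geq LB_{\gamma_i}(x_l)$, which discards $(x_r, r_\tau]$, is handled identically.

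The routine parts are the extraction of the gap from the CI widths and the convex-combination bookkeeping. The only real subtlety, and where I would be careful, is obtaining a \emph{strict} inequality at the boundary $y \to x_l^{-}$: the case condition only yields a weak gap $h(x_l) - h(q) \geq \gamma_i$, so strictness must come entirely from $1-\lambda < 1$, which requires noting that $y$ is strictly to the left of $x_l$ (equivalently, $x_l$ is retained in $[l_{\tau+1},r_{\tau+1}]$). This is the one spot where the precise convention ``$l_{\tau+1} := x_l$'' matters for the proof to go through.
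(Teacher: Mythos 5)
Your proof is correct and follows essentially the same route as the paper's: use the Case 1/Case 2 termination condition together with $\mathcal{E}$ to get $h(x_l)\geq h(q)+\gamma_i$ for $q\in\{x_r,x_c\}$, then propagate that gap to every discarded point by writing $x_l$ as a convex combination of the discarded point and $q$. Your extra care about strictness at the boundary (via $1-\lambda<1$) is a minor refinement the paper glosses over, but the argument is otherwise identical.
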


\begin{proof}
Assume epoch $\tau$ terminates in round $i$ through Case 1. Then, either $LB_{\gamma_i}(x_l) \geq UB_{\gamma_i}(x_r) + \gamma_i$ or $LB_{\gamma_i}(x_r) \geq UB_{\gamma_i}(x_l) + \gamma_i$. We assume the former occurs. It then holds that
\begin{align*}
h(x_l)\geq h(x_r) + \gamma_i.
\end{align*}

We must show that the points in the working feasible region to the left of $x_l$ are not near optimal. That is, for every $x\in [l_{\tau}, l_{\tau+1}] = [l_{\tau}, x_l]$ we have $h(x) \geq h(x^*)+\gamma_i$. Pick $x\in[l_{\tau},x_l]$ then, for some $t\in[0,1]$ we have $x_l = t x+(1-t)x_r$. Since $h$ is convex we have
\begin{align*}
h(x_l)\leq t h(x)+ (1-t) h(x_r)
\end{align*}
which implies
\begin{align*}
h(x) &\geq h(x_r) + \frac{h(x_l)-h(x_r)}{t}\\
&\geq h(x_r) + \frac{\gamma_i}{t}\\
&\geq h(x^*) + \gamma_i
\end{align*}
as required. If $LB_{\gamma_i}(x_r) \geq UB_{\gamma_i}(x_l) + \gamma_i$ had occurred the argument is analogous.

If epoch $\tau$ had terminated through case 2 then
\begin{align*}
\max\{LB_{\gamma_i}(x_l),LB_{\gamma_i}(x_r)\} \geq UB_{\gamma_i}(x_c)+ \gamma_i.
\end{align*}
We assume $LB_{\gamma_i}(x_l) \geq UB_{\gamma_i}(x_c)+ \gamma_i$, then 
\begin{align*}
h(x_l)\geq h(x_c) + \gamma_i.
\end{align*}
The same argument as above with $x_c$ instead of $x_r$ guarantees $h(x_l)\geq h(x^*) + \gamma_i$. If $LB_{\gamma_i}(x_r) \geq UB_{\gamma_i}(x_c)+ \gamma_i$ had occurred the argument is analogous. The fact that $x^*\in[l_{\tau},r_{\tau}]$ for every epoch $\tau$ follows by induction.
\end{proof}

We now show that if an epoch does not terminate in a given round $i$ then the regret ($T\bar{\mathcal{R}}_T$) incurred in that epoch was not too high.

\begin{lemma}
If epoch $\tau$ continues from round $i$ to $i+1$ then the regret in round $i$ is at most
\begin{align*}
\frac{\kappa \ln(T/(\alpha \gamma_i))}{\alpha^2 \gamma_i}
\end{align*}
\end{lemma}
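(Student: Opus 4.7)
The plan is to bound the regret contributed by each of the three query points $x_l, x_c, x_r$ in round $i$. Each is played $N_i = \Theta(\ln(T/(\alpha\gamma_i))/(\alpha^2\gamma_i^2))$ times, so the round-$i$ regret is $\sum_{k\in\{l,c,r\}} N_i\,(h(x_k) - h(x^*))$. Since $N_i\gamma_i$ already matches the claimed bound $\kappa \ln(T/(\alpha\gamma_i))/(\alpha^2\gamma_i)$, it suffices to show $h(x_k) - h(x^*) = O(\gamma_i)$ for every $k\in\{l,c,r\}$.

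First I would unpack what ``the epoch continues from round $i$ to round $i+1$'' means algorithmically: neither Case 1 nor Case 2 triggered. Working conditionally on $\mathcal{E}$, the CIs contain the true values, i.e.\ $LB_{\gamma_i}(x) \leq h(x) \leq UB_{\gamma_i}(x) = LB_{\gamma_i}(x)+2\gamma_i$ for $x\in\{x_l,x_c,x_r\}$. Negating Case 1 gives $|LB_{\gamma_i}(x_l) - LB_{\gamma_i}(x_r)| < 3\gamma_i$, which translates to $|h(x_l) - h(x_r)| \leq O(\gamma_i)$. Negating Case 2 gives $\max\{LB_{\gamma_i}(x_l), LB_{\gamma_i}(x_r)\} < UB_{\gamma_i}(x_c) + \gamma_i$, which translates to $\max\{h(x_l), h(x_r)\} \leq h(x_c) + O(\gamma_i)$. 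Combining these, all pairwise differences among $\{h(x_l), h(x_c), h(x_r)\}$ are $O(\gamma_i)$; I will call this the \emph{flatness} property.

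Next I would invoke Lemma \ref{lemma_never_throw_x}, which guarantees $x^* \in [l_\tau, r_\tau]$, and split on the location of $x^*$. Recall that $x_l, x_c, x_r$ are equally spaced with common gap $w_\tau/4$, and $x_l - l_\tau = r_\tau - x_r = w_\tau/4$, so $x^*$ is never farther from the nearest query point than consecutive query points are from each other. If $x^* \in [l_\tau, x_l)$, then $h$ is nondecreasing on $[x^*, r_\tau]$, and since secant slopes of a convex function are nondecreasing,
\begin{equation*}
\frac{h(x_l) - h(x^*)}{x_l - x^*} \leq \frac{h(x_c) - h(x_l)}{x_c - x_l}.
\end{equation*}
Because $x_l - x^* \leq w_\tau/4 = x_c - x_l$, this yields $h(x_l) - h(x^*) \leq h(x_c) - h(x_l) = O(\gamma_i)$, and then $h(x_c) - h(x^*), h(x_r) - h(x^*) \leq O(\gamma_i)$ by flatness. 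The case $x^* \in (x_r, r_\tau]$ is symmetric. If instead $x^* \in [x_l, x_r]$, say $x^* \in [x_l, x_c]$, a completely analogous slope-monotonicity argument applied on the right side (using the points $x^*, x_c, x_r$ together with $x_c - x^* \leq x_r - x_c$) gives $h(x_c) - h(x^*) = O(\gamma_i)$, and flatness propagates this to $x_l$ and $x_r$.

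The main obstacle is the convexity case analysis above, but it is elementary once one exploits the equal spacing. Putting the pieces together: for each $k\in\{l,c,r\}$, the instantaneous regret per play of $x_k$ is $h(x_k) - h(x^*) = O(\gamma_i)$, so the total regret in round $i$ is at most $3N_i \cdot O(\gamma_i) = O(\ln(T/(\alpha\gamma_i))/(\alpha^2\gamma_i))$, establishing the claim.
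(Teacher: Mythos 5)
Your proposal is correct and follows essentially the same route as the paper: decompose the round-$i$ regret as (number of plays) times the instantaneous regret at each of $x_l,x_c,x_r$, use the negation of Cases 1 and 2 on the event $\mathcal{E}$ to show the three function values lie in an $O(\gamma_i)$-wide interval, and then use convexity together with the equal spacing of the query points in $[l_\tau,r_\tau]$ to bound $h(x_k)-h(x^*)$ by $O(\gamma_i)$. The only cosmetic difference is in the last step, where you invoke secant-slope monotonicity with a case split on the location of $x^*$, while the paper writes $x_c$ as a convex combination of $x^*$ and $x_r$ with coefficient $t\le 2$; both yield the same conclusion.
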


\begin{proof} 
The regret incurred in round $i$ of epoch $\tau$ is
\begin{align*}
\frac{\kappa \ln(T/(\alpha \gamma_i))}{\alpha^2 \gamma_i^2}[(h(x_l)-h(x^*)) + (h(x_c)-h(x^*))+ (h(x_r)-h(x^*))]
\end{align*}
It suffices to show that for every $x\in \{x_l,x_c,x_r\}$ it holds that
\begin{align*}
h(x)\leq h(x^*) + 12\gamma_i.
\end{align*}
The algorithm continues from round $i$ to round $i+1$ if and only if
\begin{align*}
\max\{LB_{\gamma_i}(x_l),LB_{\gamma_i}(x_r)\} < \min\{UB_{\gamma_i}(x_l),UB_{\gamma_i}(x_r)\} + \gamma_i
\end{align*}
and
\begin{align*}
\max\{LB_{\gamma_i}(x_l), LB_{\gamma_i}(x_r)\} < UB_{\gamma_i}(x_c) + \gamma_i.
\end{align*}

This implies that $h(x_l), h(x_c),$ and $h(x_r)$ are all contained in an interval of at most $3 \gamma_i$. There are two cases for which the argument is essentially the same, either $x^* \leq x_c$ or $x^* > x_c$, we consider the former. Since by the previous lemma we know that $x^*\in[l_{\tau},r_{\tau}]$, then there exists $t\in[0,1]$ such that $x^*=x_c + t(x_c - x_r)$. Therefore 
\begin{align*}
x_c = \frac{1}{1+t}x^* + \frac{t}{1+t} x_r.
\end{align*}
Since $|x_c - l_{\tau}|=w_{\tau}/2$ and $|x_r-x_c|=w_{\tau}/4$ we have
\begin{align*}
t= \frac{|x^*-x_c|}{|x_r-x_c|} \leq \frac{| l_{\tau}-x_c|}{|x_r - x_c|} = \frac{w_{\tau}/2}{w_{\tau}/4}=2
\end{align*}
Since $h$ is convex
\begin{align*}
h(x_c) \leq \frac{1}{1+t}h(x^*)+\frac{t}{1+t}h(x_r)
\end{align*}
therefore
\begin{align*}
h(x^*) &\geq (1+t)\big( h(x_c)-\frac{t}{1+t}h(x_r)\big)\\
&= h(x_c) + (1+t)(h(x_c)-h(x_r))\\
&\geq h(x_c) - (1+t) |h(x_c)-h(x_r)|\\
&\geq h(x_r) - (1+t) 3\gamma_i\\
&\geq h(x_r) - 9\gamma_i
\end{align*}
So, for all $x\in\{x_l,x_c,x_r\}$ it holds that
\begin{align*}
h(x)\leq h(x_r) + 3\gamma_i \leq h(x^*)+12\gamma_i.
\end{align*}
\end{proof}

We proceed to bound the regret in each epoch.

\begin{lemma} 
If epoch $\tau$ ends in round $i$ the regret incurred in the epoch is no more than
\begin{align*}
\frac{\kappa \ln(T/(\alpha \gamma_i))}{\alpha^2 \gamma_i}.
\end{align*}
\end{lemma}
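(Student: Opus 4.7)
The plan is to sum the per-round regret across all rounds of epoch $\tau$ and show the sum is geometrically dominated by the terminating round. Recall that within an epoch the three candidate points $x_l, x_c, x_r$ depend only on $l_\tau, r_\tau$, so the same three points are played in every round; only the number of plays (and hence the confidence width $\gamma_j$) changes across rounds.

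First, for each round $j = 1, 2, \ldots, i-1$ of epoch $\tau$, the epoch did not terminate, so by the previous lemma the regret incurred in round $j$ is at most $\frac{\kappa \ln(T/(\alpha \gamma_j))}{\alpha^2 \gamma_j}$. Using $\gamma_j = 2^{-j}$ and the monotonicity of the logarithm, I would bound
\begin{align*}
\sum_{j=1}^{i-1} \frac{\kappa \ln(T/(\alpha \gamma_j))}{\alpha^2 \gamma_j} \;\leq\; \frac{\kappa \ln(T/(\alpha \gamma_{i-1}))}{\alpha^2} \sum_{j=1}^{i-1} 2^{j} \;\leq\; \frac{2 \kappa \ln(T/(\alpha \gamma_{i-1}))}{\alpha^2 \gamma_{i-1}} \;\leq\; \frac{4 \kappa \ln(T/(\alpha \gamma_i))}{\alpha^2 \gamma_i},
\end{align*}
where the last step uses $\gamma_{i-1} = 2\gamma_i$.

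Next I need to bound the regret in the terminating round $i$ itself, which is the main obstacle: the previous lemma was proved assuming the round does \emph{not} end the epoch, so its conclusion does not apply directly to round $i$. To handle this I would distinguish two cases. If $i \geq 2$, then round $i-1$ did not terminate, so by the previous lemma's conclusion $h(x) \leq h(x^*) + 12\gamma_{i-1} = 24\gamma_i$ for every $x \in \{x_l, x_c, x_r\}$. Since the \emph{same} three points are played in round $i$, each with multiplicity $\frac{\kappa \ln(T/(\alpha\gamma_i))}{\alpha^2 \gamma_i^2}$, the round-$i$ regret is at most $3 \cdot 24\gamma_i \cdot \frac{\kappa \ln(T/(\alpha\gamma_i))}{\alpha^2 \gamma_i^2} = \frac{72\kappa \ln(T/(\alpha\gamma_i))}{\alpha^2 \gamma_i}$. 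If $i = 1$, the trivial bound $h(x) - h(x^*) \leq 1 = 2\gamma_1$ (from $0 \leq f \leq 1$, hence $0 \leq h \leq 1$) yields a bound of the same form with a different universal constant.

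Adding the two contributions and absorbing all the numerical factors into a redefined $\kappa$ gives the required bound $\frac{\kappa \ln(T/(\alpha \gamma_i))}{\alpha^2 \gamma_i}$. The cleanest aspect of the argument is that the geometric scaling $\gamma_j = 2^{-j}$ makes the sum over all previous rounds of the same order as the last term, so the final bound has the same form as the single-round bound from the previous lemma; the only subtlety to handle carefully is the terminating round, which we resolve by exploiting the epoch-level invariance of the three candidate points.
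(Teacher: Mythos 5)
Your proof is correct and follows essentially the same route as the paper: the trivial Lipschitz bound for $i=1$, the per-round lemma summed geometrically over rounds $1,\dots,i-1$, and the observation that the terminating round $i$ plays the same three points that round $i-1$ certified to be within $12\gamma_{i-1}=24\gamma_i$ of optimal, giving the $\frac{72\kappa\ln(T/(\alpha\gamma_i))}{\alpha^2\gamma_i}$ contribution. You are in fact slightly more careful than the paper in flagging explicitly that the per-round lemma does not directly cover the terminating round and why the epoch-level invariance of $x_l,x_c,x_r$ fixes this.
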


\begin{proof}
If $i=1$, since $h(x)$ is $1$-Lipschitz and $X=[0,1]$ we have that for every $x\in\{x_l,x_c,x_r\}$ $h(x)-h(x^*)\leq1$. Therefore the regret in epoch $\tau$ is 
\begin{align*}
&\frac{\kappa \ln(T/(\alpha^2 \gamma_i^2))}{\alpha^2 \gamma_i^2} \big( ((h(x_l)-h(x^*) )  + (h(x_c)-h(x^*)) + (h(x_r)-h(x^*)) \big) \\
& \leq \frac{6 \kappa \ln(T/(\alpha^2 \gamma_i^2))}{\alpha^2 \gamma_1}
\end{align*}
If $i\geq 2$, by the previous lemma we have that the regret incurred in round $j$ with $1\leq j \leq i-1$ is no more than
\begin{align*}
\frac{\kappa \ln(T/(\alpha^2 \gamma_i^2))}{\alpha^2 \gamma_j}.
\end{align*} 
For round $i$ the regret incurred is at most
\begin{align*}
3\cdot12\gamma_{i-1} \frac{\kappa \ln(T/(\alpha^2 \gamma_i^2))}{\alpha^2 \gamma_i^2}  =  \frac{\kappa 72 \ln(T/(\alpha^2 \gamma_i^2))}{\alpha^2 \gamma_i}.
\end{align*} 
It follows that the regret in epoch $\tau$ is
\begin{align*}
&\sum_{j=1}^{i-1} \frac{\kappa \ln(T/(\alpha^2 \gamma_j^2 ))}{\alpha^2 \gamma_j} + \frac{\kappa \ln(T/(\alpha^2 \gamma_i^2))}{\alpha^2 \gamma_i}\\
&= \sum_{j=1}^{i-1} \frac{\kappa \ln(T/(\alpha^2 \gamma_j^2 ))}{\alpha^2 }\cdot 2^j + \frac{\kappa \ln(T/(\alpha^2 \gamma_i^2))}{\alpha^2 \gamma_i}\\
& < \frac{\kappa \ln(T/(\alpha^2 \gamma_i^2 ))}{\alpha^2 } \cdot 2^i + \frac{\kappa \ln(T/(\alpha^2 \gamma_i^2))}{\alpha^2 \gamma_i} \\
&=\frac{\kappa \ln(T/(\alpha \gamma_i))}{\alpha^2 \gamma_i}.
\end{align*}
\end{proof}

We have bounded the regret that we incur in each epoch. We proceed to bound the number of epochs. 

\begin{lemma}
The total number of epochs $\tau$ satisfies
\begin{align*}
\tau \leq \kappa \log_{4/3}(\frac{\alpha^2 T}{\ln(T)}).
\end{align*}
\end{lemma}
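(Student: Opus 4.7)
The plan is to lower bound the sample cost of each completed epoch and then use the fact that the horizon $T$ caps the total number of samples to derive a logarithmic upper bound on the number of epochs. The key tool is the $1$-Lipschitz continuity of $h(\cdot) = C_\alpha[F](\cdot)$ combined with the algorithm's termination criteria.

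First, I observe that whenever the algorithm advances from epoch $k$ to epoch $k+1$ (via either Case 1 or Case 2), exactly one of $\{l_{k+1} = x_l,\ r_{k+1} = x_r\}$ holds, which shrinks the working interval by $w_k/4$. Hence $w_{k+1} = (3/4) w_k$, so $w_k = (3/4)^{k-1}$.

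Next, condition on the event $\mathcal{E}$. Suppose epoch $k$ terminates at round $i_k^*$ via Case 1, say $LB_{\gamma_{i_k^*}}(x_l) \geq UB_{\gamma_{i_k^*}}(x_r) + \gamma_{i_k^*}$; under $\mathcal{E}$ this implies $h(x_l) - h(x_r) \geq \gamma_{i_k^*}$. Since $h$ is $1$-Lipschitz, $|h(x_l) - h(x_r)| \leq |x_l - x_r| = w_k/2$, forcing $\gamma_{i_k^*} \leq w_k/2$. An analogous argument for Case 2 gives $\gamma_{i_k^*} \leq w_k/4$. In either case, $\gamma_{i_k^*}^{-2} \geq 4/w_k^2 = 4 (4/3)^{2(k-1)}$.

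Round $i_k^*$ alone consumes at least $3 \ln(T/(\alpha \gamma_{i_k^*}))/(\alpha^2 \gamma_{i_k^*}^2) \geq (12 \ln(T/\alpha)/\alpha^2)(4/3)^{2(k-1)}$ samples (using $\gamma_{i_k^*} \leq 1$). Summing this lower bound over the $\tau - 1$ completed epochs yields a geometric series with ratio $(4/3)^2 > 1$ whose total is of order $(\ln T / \alpha^2)(4/3)^{2(\tau-1)}$, dominated by its final term. Since the algorithm uses at most $T$ samples in total, $(4/3)^{2\tau} \leq O(\alpha^2 T/\ln T)$, which rearranges to $\tau \leq \kappa \log_{4/3}(\alpha^2 T/\ln T)$ as claimed. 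The only real obstacle is bookkeeping: correctly identifying the Lipschitz-derived bound on $\gamma_{i_k^*}$ as the quantity that drives the geometric blowup in sample cost; once that is in hand, everything reduces to summing a geometric series.
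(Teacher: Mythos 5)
Your proof is correct, but it takes a genuinely different route from the paper's. The paper argues geometrically on the feasible region: since each point is sampled at most $T$ times, every confidence width satisfies $\gamma_i \geq \gamma_{\min} := (\alpha^2 T/(\kappa\ln T))^{-1/2}$; by the ``never discard near-optimal points'' lemma (Lemma~\ref{lemma_never_throw_x}) the interval $[x^*-\gamma_{\min},\,x^*+\gamma_{\min}]$ survives every epoch, so the working width can never drop below $2\gamma_{\min}$, and combining this with $w_{\tau+1}\leq (3/4)^\tau$ gives the bound. You instead run a sample-budget accounting argument: the $1$-Lipschitz continuity of $h$ forces the terminating confidence width of epoch $k$ to be at most $w_k/2 = (3/4)^{k-1}/2$, so the terminating round of epoch $k$ costs $\Omega\bigl((\ln(T/\alpha)/\alpha^2)(4/3)^{2(k-1)}\bigr)$ samples, and summing this geometric series against the total budget $T$ caps $\tau$. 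Both arguments lean on the same two ingredients (the $3/4$ shrinkage per epoch and the horizon $T$), but they deploy Lipschitzness differently: the paper uses it to control $h$ near $x^*$ and needs the correctness lemma about never discarding near-optima, while your version uses it to upper-bound the gap $h(x_l)-h(x_r)$ by $w_k/2$ and is self-contained in that it does not invoke Lemma~\ref{lemma_never_throw_x} at all. Your bookkeeping is sound: you correctly restrict the cost lower bound to the $\tau-1$ \emph{completed} epochs (the final epoch may be truncated by the horizon), and the factor of $2$ in the exponent of $(4/3)^{2\tau}$ is harmlessly absorbed into the constant $\kappa$.
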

\begin{proof}
The key is to observe that since the number of times we sample a point is bounded above by $T$ then $\gamma_i \geq (\alpha^2 T / (\kappa \ln(T)))^{-1/2}$ for every round and every epoch. Let $\gamma_{min}:= (\alpha^2 T / (\kappa \ln(T)))^{-1/2}$ and let $I := [x^* - \gamma_{min},x^* + \gamma_{min}] $. Since $h$ is $1$-Lipschitz, for any $x\in I$
\begin{align*}
h(x)-h(x^*)\leq \gamma_{min}.
\end{align*}
By Lemma \ref{lemma_never_throw_x} we have that for any round $\tau'$ which ends in round $i'$ 
\begin{align*}
I \subseteq \{x\in[0,1]: f(x)<f(x^*)+\gamma_{i'} \} \subseteq [l_{\tau'+1}, r_{\tau'+1} ]
\end{align*}
since $\gamma_{min}\leq \gamma_{i'}$. The previous implies
\begin{align*}
2\gamma_{min}\leq r_{\tau+1} - l_{\tau+1} = w_{\tau+1}.
\end{align*}
By the definitions of $l_{\tau'+1}$, $r_{\tau'+1}$ and $w_{\tau'+1}$ we have that for any $\tau' \in \{1,...,\tau\}$
\begin{align*}
w_{\tau'+1}\leq \frac{3}{4}w_{\tau'}.
\end{align*}
Therefore,
\begin{align*}
2\gamma_{min} \leq w_{\tau+1}\leq (\frac{3}{4})^{\tau}w_1 \leq (\frac{3}{4})^{\tau}
\end{align*}
which yields the result. 
\end{proof}

We are now ready to prove Theorems \ref{thm:first_theorem_algo2_1d} and \ref{thm:second_theorem_algo2_1d} .

\begin{proof}[ Proof of Theorem \ref{thm:first_theorem_algo2_1d}]
The per epoch regret when epoch $\tau$ ends in round $i$ is
\begin{align*}
\frac{\kappa \ln(T/(\alpha \gamma_i))}{\alpha^2 \gamma_i} \leq \frac{\kappa \ln(T/(\alpha \gamma_i))}{\alpha^2 \gamma_{\min}} \leq \frac{\kappa \sqrt{T} \ln(T/(\alpha \gamma_{\min}))}{\alpha} = \frac{\kappa \sqrt{T}\ln(T)}{\alpha}.
\end{align*}
Using the previous lemma we know that the regret will not be more than 
\begin{align*}
\frac{\kappa \sqrt{T}\ln(T)}{\alpha}  \log_{4/3}(\frac{\alpha^2 T}{\ln(T)})
\end{align*}
Recall we have been working conditioned on $\mathcal{E}$. We need an upper bound on $P( \mathcal{E}' )$. We know that after $\frac{\kappa \ln(T/ (\alpha \gamma))}{\alpha^2 \gamma_i}$ queries we have
\begin{align*}
P(|\hat{h}(x)-h(x)|\geq \gamma_i) \leq \frac{1}{T^2}.
\end{align*}
Since there are at most $T$ epochs a union bound gives
\begin{align*}
P(\mathcal{E}')\leq \frac{1}{T}
\end{align*}
which yields the desired result.
\end{proof}

\begin{proof} [ Proof of Theorem \ref{thm:second_theorem_algo2_1d}]
The proof is very similar to that of Theorem  \ref{thm:regret_algo1} with the difference that we have to bound the concentration error $CE:= C_\alpha[\{f_t(x^*)\}_{t=1}^T] - \min_{x \in X}C_\alpha[\{f_t(x)\}_{t=1}^T] $ with high probability. As explained in the proof of Theorem \ref{thm:regret_algo1} we know
\begin{align*}
CE \leq |C_\alpha[\{f_t(x^*)\}_{t=1}^T] - C_\alpha [F] (x^*) | + |C_\alpha[F](\bar{x}) - C_\alpha[\{f_t(\bar{x})\}_{t=1}^T]|
\end{align*}
where $\bar{x} = \arg\min_{x\in X} C_\alpha[\{f_t(x)\}_{t=1}^T]$. To bound $CE$ with high probability we apply Lemma \ref{unif_convergence} with $\delta = 1/T$ on functions $\phi(x,y)= y + \frac{1}{\alpha}[f(x)-y]_+$ (notice $L \leq O(\frac{1}{\alpha})$ and $R= O (\frac{1}{\alpha})$), by the same reasoning as in the proof of Theorem \ref{concentration_cvar} we have that with probability at least $1-\frac{1}{T}$, $|C_\alpha[F](\bar{x}) - C_\alpha[\{f_t(\bar{x})\}_{t=1}^T] | \leq \tilde{O}(\frac{1}{\alpha \sqrt{T}})$ and thus by a union bound we have that with probability at least $1-\frac{2}{T}$, $CE \leq \tilde{O}(\frac{1}{\alpha \sqrt{T}})$. As in the proof of Theorem \ref{thm:regret_algo1} we have 
\begin{align*}
\mathcal{R}_T \leq \frac{\sqrt{T}}{\alpha T \beta^{1/2}} \sqrt{T\bar{\mathcal{R}}_T} + CE.
\end{align*}
Using Theorem \ref{thm:first_theorem_algo2_1d} to bound $\bar{\mathcal{R}}_T$, the argument in the previous paragraph to bound $CE$, and a union bound yields the result.
\end{proof}

\subsection{Analysis of Algorithm 2 ($d$-D) }

We first  describe the algorithm informally. As in the special case from the previous section, Algorithm 2 proceeds in epochs. Let the initial working feasible region be $\mathcal{X}_0=X$. The goal is that at the end of every epoch $\tau$ we will discard some portion of the working region $\mathcal{X}_{\tau}$ and end up with a smaller region $\mathcal{X}_{\tau +1}$ which contains at least one approximate optimum. 

We now give a brief description of the algorithm. At the beginning of every epoch $\tau$ we apply an affine transformation to the current working region $\mathcal{X}_\tau$ such that the smallest ellipsoid that contains it is an Euclidean ball of radius $R_{\tau}$ which we denote $\mathcal{B}(R\tau)$. We assume that $R_1\leq 1. $ Let $r_\tau := R_\tau / (c_1d)$ for some $c_1\geq 1$ so that $\mathcal{B}(r_\tau )\subseteq \mathcal{X}_\tau$ (such a construction is always possible see Lecture 1 p. 2 of  \cite{ball1997elementary} ). We refer to the enclosing ball $\mathcal{B}(R_\tau)$ as $\mathcal{B}_{\tau}$. Every epoch will consist of several rounds where $\gamma_i$ is halved in every round. \\
Let $x_0$ be the center of $\mathcal{B}_\tau$. At the start of  epoch $\tau$, we build a simplex with center $x_0$ contained in $\mathcal{B}(r_\tau)$. We will play the vertices of the simplex $x_1,....,x_{d+1}$ enough times so that the CI's at each vertex are of width $\gamma_i$ and hold with high probability. The algorithm will then choose point $y_1$ for which $\hat{h}(x)_i$ is the largest, here $\hat{h}$ denotes the empirical estimate of $h$. By construction we are guaranteed that $h(y_1)\geq h(x_j)-\gamma_i$ for $j=1,...,d+1$.\\
The algorithm will now try to identify a region where the function value is high so that at the end of the epoch we can discard it. It will do this by constructing pyramids with parameter $\hat{\gamma}$ (always greater that $\gamma$) until a bad region is found, if this does not happen for the current value of $\gamma$ it means that the algorithm did not incur to much regret (relative to how large $\gamma$ was). The pyramid construction follows from Section 9.2.2 of \cite{nemirovskii1983problem}. The pyramids have angle $2\phi$ at the apex where $\cos(\phi)= c_2/d$.  The base of the pyramid has $d$ vertices, $z_1,...,z_d$ such that $z_i-x_0$ and $y_1-z_i$ are orthogonal. The previous construction is always possible. Indeed, take a sphere with diameter $y_1-x_0$ and arrange $z_1,...z_d$ on its boundary such that the angle between $y_1-x_0$ and $y_1-z_i$ is $\phi$. We now set $\hat{\gamma} = 1$ and play all the points $y_1,z_1,...z_d$, and the center of the pyramid enough times until all the CI's are of width $\hat{\gamma}$. Let \textsc{top} and \textsc{bottom} be the vertices of the pyramid (including $y_1$) with the largest and smallest values for $\hat{h}(x)$. Let $\Delta(\cdot), \bar{\Delta}(\cdot)$, be functions which are specified later. We then check for one of the following cases:
\begin{enumerate}
\item If $LB_{\hat{ \gamma}} (\textsc{top}) \geq UB_{\hat {\gamma}}(\textsc{bottom}) + \Delta_{\tau}(\hat{\gamma})$ then we proceed depending on what the separation between the CI's of \textsc{top} and \textsc{apex} is.
\begin{enumerate}

\item If $LB_{\hat{\gamma}}(\textsc{top}) \geq UB_{\hat{\gamma}}( \textsc{apex}) +\hat{\gamma}$, then with high probability 
\begin{align*}
h(\textsc{top}) \geq h(\textsc{apex})+\hat{\gamma} \geq h(\textsc{apex}) + \gamma_i.
\end{align*}
We then build a new pyramid with apex equal to \textsc{top}, reset $\hat{\gamma}=1$ and continue sampling on the new pyramid.
\item If $LB_{\hat{\gamma}}(\textsc{top}) < UB_{\hat{\gamma}}( \textsc{apex}) +\hat{\gamma}$, then $LB_{\hat{\gamma}}(\textsc{apex})\geq UB_{\hat{\gamma}}(\textsc{bottom})+\Delta(\hat{\gamma})-2\hat{\gamma}.$ We then conclude the epoch and pass the current apex to the cone-cutting subroutine. 
\end{enumerate}

\item If $LB_{\hat{ \gamma}} (\textsc{top}) < UB_{\hat {\gamma}}(\textsc{bottom}) + \Delta_{\tau}(\hat{\gamma})$, then one of the following things happen:
\begin{enumerate}
\item If $UB_{\hat{\gamma}}(\textsc{center}) \geq LB_{\hat{\gamma}}(\textsc{bottom})- \bar{\Delta}_{\tau}(\hat{\gamma})$, then all the vertices of the pyramid and the center of the pyramid have function values in an interval of size $2\Delta_{\tau}(\hat{\gamma})+3\hat{\gamma}$.  We can then set $\hat{\gamma}=\hat{\gamma}/2$. If $\hat{\gamma}<\gamma_i$, we start the next round with $\gamma_{i+1}=\gamma_{i}/2$. Otherwise we continue sampling with the new $\hat{\gamma}$. 
\item If $UB_{\hat{\gamma}}(\textsc{center}) < LB_{\hat{\gamma}}(\textsc{bottom})- \bar{\Delta}_{\tau}(\hat{\gamma})$. We conclude the epoch and pass the center and current apex to the hat-raising subroutine. 
\end{enumerate}
\end{enumerate}

\textbf{Hat-Raising:} This occurs whenever the pyramid satisfies $LB_{\hat{\gamma}}(\textsc{top})\leq UB_{\hat{\gamma}}(\textsc{bottom}) + \Delta_{\tau}(\hat{\gamma})$ and $UB_{\hat{\gamma}}(\textsc{cent})\leq LB_{\hat{\gamma}}(\textsc{bottom})-\bar{\Delta}_{\tau}(\hat{\gamma})$. We will later show that if we move the apex a little from $y_i$ to $y_i'$, then the CI of $y_i'$ is above the CI of \textsc{top} and the new angle $\phi'$ in not too much smaller than $2\phi$. In particular, we will let $y_i'=y_i + (y_i - \textsc{center}_i)$.

\textbf{Cone-cutting:} This is the last step in a given epoch (notice this is the last step in the hat-raising subroutine). This subroutine receives a pyramid with apex $y$ and base $z_1,...,z_d$ with angle $2\bar{\phi}$ at the apex such that $\cos(\bar{\phi})\leq 1/2d$. Define the cone
\begin{align}
K_{\tau} = \{ x: \exists \lambda > 0, \alpha_1,...,\alpha_d > 0, \sum_{i=1}^d \alpha_i=1 : x = y - \lambda \sum_{i=1}^d \alpha_i (z_i-y) \}
\end{align}
which is centered at $y$ and is the reflection of the pyramid around the apex. By construction $\mathcal{K}_{\tau}$ has angle $2\bar{\phi}$ at the apex. Let $\mathcal{B}_{\tau+1}'$ be the minimum volume ellipsoid that contains $\mathcal{B}_\tau \setminus \mathcal{K}_{\tau} $ and let $\mathcal{X}_{\tau+1} = \mathcal{X}_{\tau} \cap \mathcal{B}_{\tau+1}'$. Finally, by applying an affine transformation to $\mathcal{B}_{\tau+1}'$ we obtain $\mathcal{B}_{\tau+1}$. 

Before proving that the algorithm achieves low regret we discuss the computational aspects of the algorithm. The most computationally intensive steps are cone-cutting, and the isotropic transformation that transforms $B_{\tau+1}'$ into a sphere $B_{\tau+1}$. These steps are analogous to the implementation of the ellipsoid algorithm. In particular, there is an equation for $B_{\tau+1}'$ see \cite{goldfarb1982modifications}. The affine transformations can be computed via rank one matrix updates and  therefore the computation of inverses can be done efficiently. 

\begin{algorithm*}[]
\renewcommand{\thealgorithm}{}
\caption{\textbf{2} $(X\subset \mathbb{R}^d )$}
\label{alg:algo_2_d}
\begin{algorithmic}
\STATE Input: $X$, constants $c_1$ and $c_2$, functions $\Delta_{\tau}(\gamma)$ and $\hat{\Delta}_{\tau}(\gamma)$, and total number of time-steps $T$ 
\STATE Let $\mathcal{X}_1 = X$
	\FOR{epoch $\tau=1,2,...$}
	\STATE Round $\mathcal{X}_t$ so $\mathcal{B}(r_{\tau})\subseteq \mathcal{X}_\tau 			\subseteq \mathcal{R}(R_{\tau})$, $R_{\tau}$ is minimized and $r_{\tau}:= 			R_{\tau}/(c_1 d)$. Let $\mathcal{B}_{\tau}=\mathcal{B}(R_\tau) $.
	\STATE Build a simplex with vertices $x_1,...,x_{d+1}$ on the surface of $						\mathcal{B}(r_{\tau})$.
		\FOR{round $i=1,2,...$}
			\STATE Let $\gamma_i :=2^{-i}$
			\STATE Play $x_j$ for each $j=1,...,d+1$,  $\kappa \frac{\ln(T/(\alpha 						\gamma))}{\alpha ^2\gamma_i^2 }$ times and build CI's: $							[\hat{C}_{\alpha}[F](x_j)- \gamma_i, \hat{C}_{\alpha}[F](x_j)+ 						\gamma_i]$
			\STATE Let $y_1 := \arg\max_{x_j}LB_{\gamma_i}(x_j)$
			\FOR{pyramid $k=1,2,...$}
				\STATE Construct pyramid $\Pi_k$ with apex $y_k$; let $z_1,...,z_d						$ be the vertices of the base of $\Pi_k$ and $z_0$ be the 							center of $\Pi_k$
				\LOOP
				\STATE Play each of $\{y_k,z_0,z_1,...,z_d\}$,  $\kappa \frac{\ln(T/							(\alpha \gamma))}{\alpha ^2\gamma_i^2 }$  times and 							build CI's
				\STATE Let $\textsc{center}:=z_0$, $\textsc{apex}:=y_k$, $								\textsc{top}$ be the vertex $v$ of $\Pi_k$ maximizing 							$LB_{\hat{\gamma}}(v)$, \textsc{bottom} be the vertex $v							$ of $\Pi_k$ minimizing $LB_{\hat{\gamma}}(v)$
				\IF{$LB_{\hat{\gamma}}(\textsc{top}) \geq UB_{\hat{\gamma}}(\textsc{bot}) + \Delta_{\tau}(\hat{\gamma})$ and $LB_{\hat{\gamma}}(\textsc{top}) \geq UB_{\hat{\gamma}}(\textsc{apex})+\hat{\gamma}$: (Case 1a) )}
				\STATE Let $y_{k+1}:=\textsc{top}$, immediately continue to 								pyramid $k+1$
				\ELSIF{$LB_{\hat{\gamma}}(\textsc{top}) \geq UB_{\hat{\gamma}}						(\textsc{bot}) + \Delta_{\tau}(\hat{\gamma})$ and 								$LB_{\hat{\gamma}}(\textsc{top}) < UB_{\hat{\gamma}}							(\textsc{apex})+\hat{\gamma}$: (Case 1b) )}
				\STATE Set $(\mathcal{X}_{\tau+1},\mathcal{B}_{\tau+1}) = 							\textsc{cone-cutting}(\Pi_k,\mathcal{X}_\tau,\mathcal{B}_\tau)$, 					proceed to epoch $\tau+1$
				\ELSIF{ $LB_{\hat{\gamma}}(\textsc{top}) < UB_{\hat{\gamma}}							(\textsc{bot}) + \Delta_{\tau}(\hat{\gamma})$ and 								$UB_{\hat{\gamma}}(\textsc{cent}) \geq LB_{\hat{\gamma}}						(\textsc{bot})-\bar{\Delta}_{\tau}(\hat{\gamma})$: (Case 2a) )}
					\STATE Let $\hat{\gamma}:= \hat{\gamma}/2$
					\IF{$\hat{\gamma}<\gamma_i$}
						\STATE Start next round $i+1$
					\ENDIF
				\ELSIF{ $LB_{\hat{\gamma}}(\textsc{top}) < UB_{\hat{\gamma}}							(\textsc{bot}) + \Delta_{\tau}(\hat{\gamma})$ and 								$UB_{\hat{\gamma}}(\textsc{cent}) < LB_{\hat{\gamma}}							(\textsc{bot})-\bar{\Delta}_{\tau}(\hat{\gamma})$: (Case 							2b) )}
					\STATE Set $(\mathcal{X}_{\tau+1},\mathcal{B}_{\tau+1}) = 							\textsc{hat-raising}(\Pi_k, \mathcal{X}_{\tau}, \mathcal{B}								_{\tau})$ and proceed to epoch $\tau+1$
				\ENDIF
				\ENDLOOP
			\ENDFOR
		\ENDFOR
	\ENDFOR
\end{algorithmic}
\end{algorithm*}

\begin{algorithm*}[]
\renewcommand{\thealgorithm}{}
\caption{\textsc{cone-cutting}}
\label{cone_cutting}
\begin{algorithmic}
\STATE Input: pyramid $\Pi$ with apex $y$, (rounded) feasible region $\mathcal{X}_{\tau}$ for each epoch $\tau$, enclosing ball $\mathcal{B}_{\tau}$
\STATE 1. Let $z_1,...,z_d$ be the vertices of the base of $\Pi$, and $\phi$ the angle at its apex.
\STATE 2. Define the cone $\mathcal{K}_{\tau} = \{x| \exists \lambda >0, \alpha_1,...,\alpha_d>0, \sum_{i=1}^d \alpha_i=1, x= y-\lambda\sum_{i=1}^d \alpha_{i}(z_i-y)\}$
\STATE 3. Set $\mathcal{B}_{\tau+1}'$ to be the minimum volume ellipsoid containing $\mathcal{B}_\tau  \setminus \mathcal{K}_{\tau}$ 
\STATE 4. Set $\mathcal{X}_{\tau+1}=\mathcal{X}_{\tau} \cap \mathcal{B}_{\tau+1}'$
\STATE Output: Output: new feasible region $\mathcal{X}_{\tau+1}'$ and enclosing ellipsoid $\mathcal{B}_{\tau+1}'$ 
\end{algorithmic}
\end{algorithm*}

\begin{algorithm*}[]
\renewcommand{\thealgorithm}{}
\caption{\textsc{hat-raising}}
\label{hat_raising}
\begin{algorithmic}
\STATE Input: pyramid $\Pi$ with apex $y$, (rounded) feasible region $\mathcal{X}_{\tau}$ for each epoch $\tau$, enclosing ball $\mathcal{B}_{\tau}$
\STATE 1. Let \textsc{cent} be the center of $\Pi$
\STATE 2. Set $y'=y+(y - \textsc{cent})$
\STATE 3. Set $\Pi'$ to be the pyramid with apex $y'$ and same base as $\Pi$
\STATE 4. Set $(\mathcal{X}_{\tau+1}, \mathcal{B}_{\tau+1}')=\textsc{cone-cutting}(\Pi', \mathcal{X}_{\tau}, \mathcal{B}_{\tau})$
\STATE Output: new feasible region $\mathcal{X}_{\tau+1}'$ and enclosing ellipsoid $\mathcal{B}_{\tau+1}'$ 
\end{algorithmic}
\end{algorithm*}

We follow \cite{agarwal2011stochastic} for the analysis of the algorithm. The main difference in the analysis is that we must build estimates of the $CVaR$ of the random loss at every point instead of building them for the expected loss. Because of this, we have to use different concentration results which directly affect how many times we must choose an action.

In this section we will first prove the correctness of the algorithm and then bound the regret. As in the $1$-dimensional case we work conditioned on $\mathcal{E}$ which is defined as the event that for every epoch and every round $i$, $h(x) \in [LB_{\gamma_i}(x),UB_{\gamma_i}(x)] $ for all $x$ played in that round. We will assume that
\begin{align}
\Delta_\tau (\gamma) = \big( \frac{6c_1d^4}{c_2^2}+3\big)\gamma \text{ and } \bar{\Delta}_{\tau}(\gamma) = \big( \frac{6c_1d^4}{c_2^2} + 5\big) \gamma
\end{align}
and $c_1\geq 64$, $c_2\leq 1/32$. 

\subsubsection{Correctness of the Algorithm}

In the next sequence of lemmas we show that whenever the cone-cutting procedure is carried out we do not discard all the approximate optima of $h$. We also show that the hat-raising step does what we claim. 

For the next two lemmas we assume that the distance from apex $y$ of any $\Pi$ built in epoch $\tau$ to the center of $\mathbb{B}(r_\tau)$ is at least $r_\tau/d$. That the previous is true will be shown later. 

\begin{lemma}\label{lemma5Ag}
Let $\mathcal{K}_\tau$ be the cone that will be discarded in epoch $\tau$ through case 1b) in round $i$. Let �\textsc{bottom} be the lowest CI of pyramid $\Pi$. Assume the distance from the apex $y$ to the center of $\mathbb{B}(r_\tau)$ is at least $r_{\tau}/d$. Then $h(x)\geq h(\textsc{bottom})+\gamma_i$ $\forall x \in \mathcal{K}_{\tau}$.
\end{lemma}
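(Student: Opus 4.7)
The plan is to derive $h(x) \geq h(\textsc{bottom}) + \gamma_i$ from convexity of $h$ together with the pyramid geometry, interpreting the statement as a claim about $x \in \mathcal{K}_\tau \cap \mathcal{B}_\tau$ (the only points we actually discard). First I would parametrize $x \in \mathcal{K}_\tau$ as $x = y - \lambda(w - y)$ with $\lambda > 0$ and $w = \sum_i \alpha_i z_i \in \mathrm{conv}(z_1,\dots,z_d)$, so that $y = \tfrac{1}{1+\lambda}x + \tfrac{\lambda}{1+\lambda}w$ and convexity yields
\begin{align*}
h(x) \;\geq\; (1+\lambda)h(y) - \lambda h(w) \;=\; h(y) + \lambda\bigl(h(y) - h(w)\bigr).
\end{align*}

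Next I would extract two quantitative consequences of Case 1b under event $\mathcal{E}$. Combining the defining inequalities $LB_{\hat{\gamma}}(\textsc{top}) \geq UB_{\hat{\gamma}}(\textsc{bot}) + \Delta_\tau(\hat{\gamma})$ and $LB_{\hat{\gamma}}(\textsc{top}) < UB_{\hat{\gamma}}(\textsc{apex}) + \hat{\gamma}$ with the identity $UB_{\hat\gamma}(\cdot) = LB_{\hat\gamma}(\cdot) + 2\hat{\gamma}$ gives $LB_{\hat\gamma}(\textsc{apex}) > UB_{\hat\gamma}(\textsc{bot}) + \Delta_\tau(\hat\gamma) - 3\hat\gamma$, hence $h(y) - h(\textsc{bot}) \geq \Delta_\tau(\hat\gamma) - 3\hat\gamma = (6c_1 d^4/c_2^2)\hat\gamma$. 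Since \textsc{top} maximizes $LB_{\hat\gamma}$ over pyramid vertices, the same inequalities also force $LB_{\hat\gamma}(z_i) \leq LB_{\hat\gamma}(\textsc{top}) < LB_{\hat\gamma}(\textsc{apex}) + 3\hat\gamma$ and thus $h(z_i) \leq h(y) + 5\hat\gamma$ for every base vertex; Jensen's inequality then gives $h(w) \leq h(y) + 5\hat\gamma$.

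The crux is controlling $\lambda$ using the pyramid geometry. I would place $y$ at the origin with $x_0 = -ae_1$ where $a := \|y - x_0\| \geq r_\tau/d$. The perpendicularity $(z_i - x_0) \perp (y - z_i)$ puts every $z_i$ on the sphere of diameter $\overline{x_0 y}$, and the half-angle condition $\cos\phi = c_2/d$ forces (by a short trigonometric calculation) the first coordinate of every $z_i$ to equal $-a\cos^2\phi$. Consequently every $w \in \mathrm{conv}(z_1,\dots,z_d)$ inherits this first coordinate, yielding the uniform bound $\|y - w\| \geq a\cos^2\phi \geq r_\tau c_2^2 / d^3$. Writing $x = -\lambda w$ in these coordinates and imposing $\|x - x_0\| \leq R_\tau$ then forces $\lambda \leq (R_\tau/a - 1)/\cos^2\phi \leq c_1 d^4 / c_2^2$ after invoking $R_\tau/r_\tau = c_1 d$.

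Plugging everything in, $h(x) \geq h(y) - 5\lambda\hat\gamma \geq h(y) - 5(c_1 d^4/c_2^2)\hat\gamma$, so that $h(x) - h(\textsc{bot}) \geq (6 - 5)(c_1 d^4/c_2^2)\hat\gamma = (c_1 d^4/c_2^2)\hat\gamma \geq \hat\gamma \geq \gamma_i$, where the last steps use $c_1 \geq 64$, $c_2 \leq 1/32$ and the algorithmic invariant $\hat\gamma \geq \gamma_i$ throughout round $i$. The main obstacle is the geometric step: the proof collapses unless the perpendicularity constraint collapses all base vertices onto a common hyperplane orthogonal to the apex-to-center axis, which is exactly what yields the uniform lower bound on $\|y - w\|$ and converts the pyramid's narrow apex width into a usable upper bound on $\lambda$.
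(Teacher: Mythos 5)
Your proof is correct and follows essentially the same route as the paper's: extract from the Case 1b confidence-interval inequalities that $h(\textsc{apex})$ exceeds $h(\textsc{bottom})$ by roughly $\Delta_\tau(\hat\gamma)$ while the base vertices exceed $h(\textsc{apex})$ by only $O(\hat\gamma)$, then use convexity along the ray from the base through the apex into the cone, with the ratio $\lambda = \|x-y\|/\|w-y\|$ bounded by $O(c_1 d^4/c_2^2)$ via the pyramid height being at least $r_\tau c_2^2/d^3$ and $x$ lying in $\mathcal{B}(R_\tau)$. The only real difference is that you derive the height bound explicitly from the sphere-with-diameter construction (all base vertices sharing first coordinate $-a\cos^2\phi$), where the paper defers this to its auxiliary pyramid lemma; your restriction to $\mathcal{K}_\tau \cap \mathcal{B}_\tau$ matches the paper's implicit use of $\|y-x\| \leq 2R_\tau$.
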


\begin{proof} 
Let $x$ be a point in $\mathcal{K}_{\tau}$. By construction, there exists a point $z$ in the base of the pyramid such that  $x= \alpha z + (1-\alpha)y$ for some $\alpha \in (0,1]$. Using the convexity of $h$, the fact that $z$ is in the base, and the fact that we are in case 1b), we have the two following inequalities
\begin{align*}
&h(z)\leq h(\textsc{top})\leq h(y) + 3\hat{\gamma} \\
& h(y)\geq h(\textsc{bottom}) + \Delta_{\tau}(\hat{\gamma})-2\hat{\gamma} 
\end{align*} 
 where $\hat{\gamma}$ is the CI level used for the pyramid. Since $h$ is convex we have
 \begin{align*}
 h(y)\leq \alpha h(z) + (1-\alpha)h(x)\leq \alpha (h(y)+3\hat{\gamma}) + (1-\alpha)h(x).
 \end{align*}
 Which implies 
 \begin{align*}
 h(x) \geq h(y) -3\frac{\alpha}{1-\alpha} \hat{\gamma} > h(\textsc{bottom}) + \Delta_{\tau}(\hat{\gamma}) - 3 \frac{\alpha}{1-\alpha} \hat{\gamma} -2\hat{\gamma}. 
 \end{align*}
We know $\alpha/(1-\alpha) = ||y-x||/||y-z||$. Since $x\in \mathbb{B}(R_{\tau})$, $||y-x||\leq 2 R_{\tau} = 2 c_1 d r_{\tau}.$ Moreover, $||y-z||$ is at least the height of $\Pi$, which by Lemma \ref{lemma_1_pyramids} in the Appendix, is at least $r_{\tau}c_2^2/d^3$. Thus
\begin{align*}
\frac{\alpha}{1-\alpha} = \frac{||y-x||}{||y-z||} \leq \frac{2 c_1 d r_{\tau}}{r_{\tau}c_2^2/d^3}.
\end{align*}
This implies
\begin{align*}
h(x)>h(\textsc{bottom})+\Delta_{\tau}(\hat{\gamma}) - 2\hat{\gamma} - \frac{6c_1d^4}{c_2^2}\hat{\gamma}\geq h(\textsc{bottom})+\gamma_i
\end{align*}
as required.
\end{proof}

\begin{lemma}\label{lemma6Ag}
Let $\Pi'$ be the pyramid built using the hat-raising procedure with apex $y'$ and the same base as $\Pi$ in round $i$ of epoch $\tau$. let $\mathcal{K}_{\tau}'$ be the cone to be removed. Assume the distance from $y$, the apex of $\Pi$ to the center of $\mathbb{B}(r_\tau)$ is at least $r_\tau/d$. Then $\Pi'$ has angle $\bar{\phi}$ at the apex with $\cos{\bar{\phi}}\leq 2c_2/d$, height at most $2r_\tau c_1^2/d^2$, and every point $x$ in $\mathcal{K}_\tau'$ satisfies $h(x)\geq h(x^*) + \gamma_i$.
\end{lemma}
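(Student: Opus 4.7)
The plan is to handle the three claims separately, but in the order that lets the hardest one (the function-value bound on the reflected cone) reuse the calculation already performed for Lemma \ref{lemma5Ag}. Throughout I work on the good event $\mathcal{E}$, so that every confidence interval contains the true value of $h$.

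\textbf{Step 1: the new apex has strictly larger value than \textsc{top}.} The hat-raising rule $y' = y + (y - \textsc{cent})$ gives $y = \tfrac{1}{2}(y' + \textsc{cent})$, so convexity of $h$ yields $h(y') \ge 2h(y) - h(\textsc{cent})$. The Case 2b condition, together with $\mathcal{E}$, gives $h(\textsc{cent}) \le h(\textsc{bot}) - \bar\Delta_\tau(\hat\gamma) + 2\hat\gamma$, while the opposite Case 2 condition $LB_{\hat\gamma}(\textsc{top}) < UB_{\hat\gamma}(\textsc{bot}) + \Delta_\tau(\hat\gamma)$ (combined with $h(y) \ge LB_{\hat\gamma}(\textsc{bot})$, valid because $\textsc{bot}$ minimises $LB$) forces $h(y) \ge h(\textsc{bot}) - 2\hat\gamma$ and $h(\textsc{top}) \le h(\textsc{bot}) + \Delta_\tau(\hat\gamma) + 2\hat\gamma$. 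Plugging these into $h(y') \ge 2h(y) - h(\textsc{cent})$ and using $\bar\Delta_\tau(\hat\gamma) = \Delta_\tau(\hat\gamma) + 2\hat\gamma$ gives $h(y') \ge h(\textsc{top}) + \hat\gamma \ge h(\textsc{top}) + \gamma_i$; in particular $h(y') \ge h(\textsc{bot}) + \gamma_i$. This is the analogue of the Case 1b inequality used in Lemma \ref{lemma5Ag}.

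\textbf{Step 2: geometry of $\Pi'$.} The original pyramid $\Pi$ has apex $y$, angle $2\phi$ with $\cos\phi = c_2/d$, and altitude foot at the centre $\textsc{cent}$ of its base, so its height is $\|y-\textsc{cent}\|$. By definition $y' - \textsc{cent} = 2(y - \textsc{cent})$, so the altitude of $\Pi'$ is exactly twice that of $\Pi$, which doubles the bound from Lemma \ref{lemma_1_pyramids} (the ``at most'' height claim will follow by combining this with the standing bound $\|y - \textsc{cent}\| \le R_\tau + r_\tau$ inside $\mathcal{B}_\tau$, yielding the stated $2 r_\tau c_1^2/d^2$-type bound after simplifying with $R_\tau = c_1 d\, r_\tau$). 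For the apex angle, let $z$ be any base vertex: writing $(y'-z)$ as $(y-\textsc{cent}) + (y-z)$ and decomposing $(y-z) = (y-\textsc{cent}) + (\textsc{cent}-z)$ with $(y-\textsc{cent}) \perp (\textsc{cent}-z)$, a direct computation gives
\[
\cos\bar\phi \;=\; \frac{\|y'-\textsc{cent}\|}{\|y'-z\|} \;=\; \frac{2\|y-\textsc{cent}\|}{\sqrt{4\|y-\textsc{cent}\|^2 + \|\textsc{cent}-z\|^2}} \;\le\; \frac{2\|y-\textsc{cent}\|}{\|y-z\|} \;=\; 2\cos\phi \;=\; \frac{2c_2}{d},
\]
which is the required angle bound.

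\textbf{Step 3: cone-cutting guarantee.} Having established $h(y') \ge h(\textsc{bot}) + \gamma_i$ and that $\Pi'$ has height and apex-angle bounded as in the statement, I apply the argument of Lemma \ref{lemma5Ag} verbatim to $\Pi'$ in place of $\Pi$: for any $x \in \mathcal{K}_\tau'$ write $x = \alpha z + (1-\alpha) y'$ for some point $z$ in the base and $\alpha \in (0,1]$, use convexity $h(y') \le \alpha h(z) + (1-\alpha) h(x)$ together with the (slightly enlarged) bound on $\alpha/(1-\alpha) \le \|y'-x\|/\|y'-z\|$, and invoke Case 2b and Step 1 exactly as Lemma \ref{lemma5Ag} used Case 1b. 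The choice of constants $c_1 \ge 64$, $c_2 \le 1/32$ and $\Delta_\tau, \bar\Delta_\tau$ in the theorem statement is exactly what absorbs the factor-of-two loss in the apex angle and height, yielding $h(x) \ge h(\textsc{bot}) + \gamma_i$; since $\textsc{bot} \in \mathcal{X}_\tau$ and by induction (on the previous ``never discard $x^*$'' lemma, which I would state first in the $d$-dimensional setting) $h(\textsc{bot}) \ge h(x^*)$, the claim $h(x) \ge h(x^*) + \gamma_i$ follows.

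The main obstacle is Step 1: it is essentially the only place the particular choice of $\bar\Delta_\tau(\hat\gamma) = \Delta_\tau(\hat\gamma) + 2\hat\gamma$ is forced, and getting the clean inequality $h(y') \ge h(\textsc{top}) + \hat\gamma$ requires simultaneously using both Case 2 sub-conditions (the spread bound to control $h(y)-h(\textsc{bot})$ and the Case 2b separation to control $h(y)-h(\textsc{cent})$), so any sloppiness in the concentration/CI bookkeeping propagates directly to the statement.
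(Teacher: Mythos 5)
Your proposal follows essentially the same route as the paper: hat-raising gives $h(y')\geq 2h(y)-h(\textsc{cent})$, Case 2b separates $h(y)$ from $h(\textsc{cent})$ by $\bar{\Delta}_\tau(\hat{\gamma})$, the failed Case-1 test bounds the spread of the vertex values by $\Delta_\tau(\hat{\gamma})+O(\hat{\gamma})$, the gap $\bar{\Delta}_\tau-\Delta_\tau$ makes the raised apex dominate the base, and the same $\cos\bar{\phi}=2g/\sqrt{4g^2+b^2}\leq 2\cos\phi$ computation handles the geometry. Two small remarks: the Step 1 conclusion $h(y')\geq h(\textsc{top})+\hat{\gamma}$ does not actually follow from the inequalities you list (chasing the $2\hat{\gamma}$ CI-widths leaves you short by a few multiples of $\hat{\gamma}$), but the weaker bound $h(y')\geq h(y)+\bar{\Delta}_\tau(\hat{\gamma})$, which is all the argument needs, does follow and is what the paper uses; and in Step 3 the paper's cone argument is cleaner than re-running the Lemma~\ref{lemma5Ag} ratio bound --- once $h(y')$ dominates every base vertex, writing $y'=\alpha z+(1-\alpha)x$ gives $h(x)\geq h(y')$ directly with no $\alpha/(1-\alpha)$ loss, so no extra slack in the constants is consumed.
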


\begin{proof}
Let $y' = y + (y-\textsc{center})$ be the apex of $\Pi'$. Let $g$ be the height of $\Pi$ (the shortest distance from the apex to the base), let $g'$ be the height of $\Pi'$ and let $b$ be the distance from any vertex in the base to the center of the base. By Lemma \ref{lemma_1_pyramids} in the Appendix we have $g' < 2g\leq 2r_\tau c_1^2/d^2$. Since $\cos{\phi} = g/\sqrt{h^2+b^2}=c_2/d$ we have $\cos{\bar{\phi}}=g'/\sqrt{g'^2+b^2}\leq 2g/\sqrt{g^2+b^2}=2\cos{\phi}=2c_2/d$.

We now show that for all $x\in \mathcal{K}'_\tau$ we have $h(x)\geq h(x^*) + \hat{\gamma}$. Since $h$ is convex we have $h(y)\leq (h(y) + h(\textsc{center}))/2$ therefore $h(y')\geq 2h(y)-h(\textsc{center})$. Since we are in case 2b) we know $h(\textsc{center})\leq h(y)-\bar{\Delta}_{\tau}(\hat{\gamma})$, so
\begin{align}
h(y')\geq h(y)+\bar{\Delta}_{\tau}(\hat{\gamma}).
\end{align}
Since we are under case 2b) we have $h(y)>h(\textsc{top}) - \Delta_{\tau}(\hat{\gamma}) - 2\hat{\gamma}>h(x)-\Delta_{\tau}(\hat{\gamma})-2\hat{\gamma}$ for all $x\in \Pi$. We therefore have that for any $z$ in the base of $\Pi$,
\begin{align}
h(y')>h(z) + \bar{\Delta}_{\tau}(\hat{\gamma}) - \Delta_{\tau}(\hat{\gamma})-2\hat{\gamma}\geq h(z),
\end{align}
where we used the settings of $\Delta_{\tau}(\hat{\gamma})$ and $\bar{\Delta}_{\tau}(\hat{\gamma})$. Finally, for any $x\in \mathcal{K}_{\tau}'$ there exists $\alpha \in [0,1)$ and $z$ in the base of $\Pi'$ such that $y' = \alpha z + (1-\alpha) x$, by convexity we have $h(y')\leq \alpha h(z) + (1-\alpha) h(x) \leq \alpha h(y') + (1-\alpha)h(x)$. The previous implies $h(x)\geq h(y')\geq h(y)+\bar{\Delta}_{\tau}(\hat{\gamma})\geq h(x^*)+\gamma_i$.
\end{proof}

\subsubsection{Regret Analysis}
As in the 1-dimensional case, to bound the total pseudo-regret $(T\bar{\mathcal{R}}_T)$ we must bound the regret incurred in a round and then bound the total number of epochs. In this section, for ease of reading we refer to quantity $T\bar{\mathcal{R}}_T$ as the regret.

\subsubsection{Bounding the regret incurred in a round.} 
We first bound the regret in round $i$ if case 2a) takes place. As before, we let $\Pi$ be a pyramid built by the algorithm with angle $\phi$, apex $y$, base $z_1,...,z_d$ and center \textsc{center}. recall that the pyramids built by the algorithm are such that the distance from the center to the base is at least $r_\tau c_2^2/d^3$. 

\begin{lemma}\label{regret_pyramid}
Suppose the algorithm reaches case 2a) in round $i$ of epoch $\tau$, assume $x^*\in \mathcal{B}(R_\tau)$, where $x^*$ minimizes $h$. Let $\Pi$ be the current pyramid and $\hat{\gamma}$ be the current width of the CI. Assume the distance from the apex of $\Pi$ to the center of $\mathcal{B}(r_\tau)$ is at least $r_\tau/d$. Then the regret incurred while playing on $\Pi$ in round $i$ is no more than
\begin{align*}
\frac{\kappa d \ln(T/(\alpha \hat{\gamma}))}{\alpha^2 \hat{\gamma}}\big(\frac{4d^7c_1}{c_2^3}+ \frac{d(d+2)}{c_2}\big) \big( \frac{12c_1d^4}{c_2^2}+11\big).
\end{align*}
\end{lemma}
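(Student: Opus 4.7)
The plan is to bound the per-play pseudo-regret $h(v) - h(x^*)$ at each of the $d+2$ points $v \in \{y, z_0, z_1, \ldots, z_d\}$ played on $\Pi$ in round $i$, and then multiply by the $\kappa \ln(T/(\alpha\hat\gamma))/(\alpha^2 \hat\gamma^2)$ samples drawn at each point. I proceed in two substantive steps: showing $h$ is nearly flat on the pyramid, and then extending this flatness to $h(x^*)$ via convexity together with the pyramid geometry.

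For flatness, I use the conditioning event $\mathcal{E}$ (under which every CI is valid, $|h(x) - \hat h(x)| \leq \hat\gamma$) and the Case 2a) conditions. The first condition $LB_{\hat\gamma}(\textsc{top}) < UB_{\hat\gamma}(\textsc{bot}) + \Delta_\tau(\hat\gamma)$ yields $h(\textsc{top}) - h(\textsc{bot}) \leq \Delta_\tau(\hat\gamma) + 2\hat\gamma$, so the values at all pyramid vertices lie within $\Delta_\tau(\hat\gamma) + 2\hat\gamma$ of each other. The second condition gives $h(\textsc{bot}) - h(\textsc{cent}) \leq \bar\Delta_\tau(\hat\gamma) + 2\hat\gamma$, and combining with the stated settings of $\Delta_\tau, \bar\Delta_\tau$ produces $h(y) - h(\textsc{cent}) \leq (12c_1 d^4/c_2^2 + 11)\hat\gamma$, which is exactly the last factor in the stated bound.

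For the extension, the key geometric fact is that, by the construction of the base as $d$ vertices on the sphere of diameter $y - x_0$ (where $x_0$ is the center of $\mathcal{B}(R_\tau)$), the apex $y$, the base center $\textsc{cent}$, and $x_0$ are collinear with $\|y - \textsc{cent}\| = g \geq r_\tau c_2^2/d^3$ by the cited pyramid-geometry lemma. For any $x^* \in \mathcal{B}(R_\tau)$ I would intersect the ray from $y$ through $x^*$ with the base hyperplane of $\Pi$ at some point $q$, express $q$ as a combination of the base vertices (whose spread around $\textsc{cent}$ is controlled by the apex half-angle $\phi$ with $\cos\phi = c_2/d$), and apply convexity of $h$ along the segment $y$--$q$--$x^*$ to obtain
\begin{align*}
h(y) - h(x^*) \leq \frac{\|x^* - y\|}{g}\big(h(y) - h(\textsc{cent})\big) + \text{(base-spread slack)}.
\end{align*}
Using $\|x^* - y\| \leq 2R_\tau = 2c_1 d r_\tau$ and $g \geq r_\tau c_2^2/d^3$ gives $\|x^* - y\|/g \leq 2c_1 d^4/c_2^2$, while the slack, controlled by $1/\cos\phi = d/c_2$, contributes the $d(d+2)/c_2$ term; together these produce the first factor $\big(4c_1 d^7/c_2^3 + d(d+2)/c_2\big)$. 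By the Step 1 flatness, the same per-play bound applies at every played $v$, and multiplying by the $d+2$ played points and the $\kappa \ln(T/(\alpha\hat\gamma))/(\alpha^2 \hat\gamma^2)$ samples per point yields the claimed quantity. The main obstacle is the convexity argument for general $x^*$ off the $y$--$\textsc{cent}$ axis: obtaining the correct base-spread slack requires carefully expressing $q$ via $z_1, \ldots, z_d$ and bounding its deviation from $\textsc{cent}$ using the apex-angle control from the pyramid-geometry lemma.
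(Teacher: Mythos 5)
Your overall plan (show $h$ is nearly flat on $\Pi$, extend to a comparison with $h(x^*)$ by convexity, then multiply by the $d+2$ points and the $\kappa\ln(T/(\alpha\hat\gamma))/(\alpha^2\hat\gamma^2)$ samples per point) is the same as the paper's, and your flatness step is essentially right: Case 2a) under $\mathcal{E}$ confines all vertex values and the center value to an interval of length $\delta=\Delta_\tau(\hat\gamma)+\bar\Delta_\tau(\hat\gamma)+3\hat\gamma=(12c_1d^4/c_2^2+11)\hat\gamma$, which is the last factor.

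The gap is in your extension step. You intersect the ray from the apex $y$ through $x^*$ with the base hyperplane at $q$ and run convexity along $y$--$q$--$x^*$. This only makes sense when $x^*$ lies in the cone subtended by the base as seen from $y$, so that $q$ lies between $y$ and $x^*$ \emph{and} inside the base polytope (otherwise $q$ is merely an affine, not convex, combination of $z_1,\dots,z_d$ and the flatness bound gives you no control on $h(q)$, so convexity cannot be applied in the direction you need). That favorable geometry is the exception, not the rule: the pyramid sits inside the small ball $\mathcal{B}(r_\tau)$ with $r_\tau=R_\tau/(c_1d)$, while $x^*$ can be anywhere in $\mathcal{B}(R_\tau)$ --- to the side of the pyramid, behind the apex, or such that the ray never meets the base on the correct side. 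Your closing remark about ``base-spread slack'' does not repair this, because no choice of slack makes an affine combination into a convex one. The paper avoids the problem by anchoring the argument at the pyramid's \emph{center} $z_0$ rather than at the apex: the segment from the interior point $z_0$ to any exterior $x^*$ necessarily exits $\Pi$ through some face at a point $b\in\Pi$, so the flatness bound gives $h(b)\ge h(z_0)-d(d+1)\delta/c_2$, and convexity along $z_0$--$b$--$x^*$ with $\|z_0-b\|$ bounded below by the inradius of $\Pi$ (at least $r_\tau c_2^2/(2d^4)$ by the auxiliary pyramid-geometry lemma) and $\|z_0-x^*\|\le 2R_\tau=2c_1dr_\tau$ yields $h(x^*)\ge h(z_0)-4c_1d^7\delta/c_2^3$; the case $x^*\in\Pi$ is handled directly by flatness. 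Replacing your apex-based ray with this center-based one (and using the inradius rather than the height $g$ as the denominator, which is why the correct exponent is $d^7$ and not the $d^4$ your ratio $\|x^*-y\|/g$ would give) closes the gap and recovers the stated factor $4c_1d^7/c_2^3+d(d+2)/c_2$.
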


\begin{proof}
The proof follows by convexity. We will first bound the variation of $h$ in the pyramid and then bound the regret on the round depending on wether $x^*$ is in $\Pi$ or not.\\
Since $\Pi$ is a convex set we know that the function value on any point in $\Pi$ is bounded above by the maximum function value at the vertices. Case 2a) implies that for any vertex its function value is bounded above by $h(\textsc{center}) + \Delta_{\tau}(\hat{\gamma}) + \bar{\Delta}_{\tau}(\hat{\gamma}) + 3\hat{\gamma}$. The previous implies that for all $x\in \Pi$ we have
\begin{align*}
h(x) \leq h(\textsc{center})+\Delta_{\tau}(\hat{\gamma}) + \hat{\Delta}_{\tau}(\hat{\gamma}) + 3\hat{\gamma}.
\end{align*}
We let $\delta := \Delta_{\tau}(\hat{\gamma}) + \hat{\Delta}_{\tau}(\hat{\gamma}) + 3\hat{\gamma}$. Let $x\in \Pi$, let $b$ be the a point in the base of $\Pi$ such that $\textsc{center} = \alpha x + (1-\alpha)b$ for some $\alpha\in[0,1]$. We know that $(1-\alpha)/\alpha = ||\textsc{center}-x||/||\textsc{center}-b||$. Since the furthest $x$ can be from  $\textsc{center}$ is when $x$ is a vertex, and the distance from \textsc{center} to $b$ is at least the radius of the largest ball inscribed in $\Pi$ with center \textsc{center}, by Lemma \ref{lemma_2_pyramids} in the  Appendix we have
\begin{align*}
\frac{1-\alpha}{\alpha} = \frac{||\textsc{center}-x||}{||\textsc{center}-b||}\leq \frac{d(d+1)}{c_2}
\end{align*}
Since $h$ is convex and we have a bound on all the function values over $\Pi$ we have
\begin{align*}
h(\textsc{center}) \leq \alpha h(x) + (1-\alpha) h(b) \leq \alpha h(x) + (1-\alpha)(h(\textsc{center})+\delta). 
\end{align*}
This implies 
\begin{equation} \label{eq7agr}
h(x) \geq h(\textsc{center}) - \frac{d(d+1)\delta}{c_2}.
\end{equation}
Combining the previous two equations we have that for any $x,x' \in \Pi$
\begin{align*}
|h(x)-h(x')|\leq \frac{d(d+2)\delta}{c_2}.
\end{align*}
Consider the case when $x^* \in \Pi$ . Since in a given round we sample $d+2$ points in the pyramid, each of them only $\kappa \ln(T/(\alpha \hat{\gamma}))/(\alpha^2 {\hat\gamma}^2))$ we have that the total regret incurred when sampling the pyramid is no more than
\begin{align*}
(d+2)(\frac{d(d+2)\delta}{c_2}) (\frac{\kappa \ln(T/(\alpha \hat{\gamma}))}{\alpha^2 \hat{\gamma}^2}).
\end{align*}
We now consider the case where $x^*\notin \Pi$. Recall that we always have $x^*\in \mathcal{B}_\tau$ by Lemma \ref{lemma5Ag}. Thus we can write $b = \alpha x^* + (1-\alpha) \textsc{center}$, for some $\alpha \in[0,1]$ where $b$ is a point in some face of the current pyramid. We know $\alpha = ||\textsc{center}-b||/||\textsc{center}-x^*||$. Using the triangle inequality we have $||\textsc{center}-x^*||\leq 2 R_\tau= 2c_1d r_\tau$. We also know that $||\textsc{center} - b||$ is at least the radius of the largest ball inscribed in $\Pi$ which by \ref{lemma_2_pyramids} in the Appendix is at least $r_\tau c_2^2 /(2d^4)$. Using the convexity of $h$ and Equation (\ref{eq7agr}) we have
\begin{align*}
h(\textsc{center})- \frac{d(d+2)\delta}{c_2}\leq h(b) \leq \alpha h(x^*) + (1-\alpha)h(\textsc{center}).
\end{align*}
Thus, $\forall x\in \Pi$ we have
\begin{align*}
h(x^*) \geq h(\textsc{center}) - \frac{d(d+1)\delta}{c_2 \alpha} \geq h(\textsc{center}) - \frac{4d^7c_1\delta}{c_2^3}\geq h(x) - \frac{4d^7c_1\delta}{c_2^3} - \frac{d(d+2)\delta}{c_2}.
\end{align*}
Using the same argument as before we know that the regret incurred in the round while evaluating points in $\Pi$ is no more than
\begin{align*}
(d+2)(\frac{4d^7c_1\delta}{c_2^3} + \frac{d(d+2)\delta}{c_2} )(\frac{\kappa \ln(T/(\alpha \hat{\gamma}))}{\alpha^2 \hat{\gamma}^2}).
\end{align*}
Plugging in $\Delta_{\tau}(\hat{\gamma})$ and $\bar{\Delta}_{\tau}(\hat{\gamma})$ yields the result. 
\end{proof}

Lemma \ref{regret_pyramid} is important because it implies that whenever we sample from a pyramid using $\hat{\gamma}$ we were in Case 2a) with $2\hat{\gamma}$ and the regret incurred is only $poly(d)/\hat{\gamma}$. The exception is when we are in the first round, however since $h$ is 1-Lipschitz the previous claim holds trivially.

We now show that we only visit Case 1a) only a bounded number of times in every round. The intuition is that every time Case 1a) occurs and we build a new pyramid its center will be closer to the center of $\mathcal{B}(R_{\tau})$ and at some point the pyramid will be inside the simplex we built at the beginning of the epoch for which we know $h$ at its vertices. 

\begin{lemma}\label{at_any_round_the_number}
At any round, the number of visits to Case 1a) is at most $2d^2\ln(d)/c_2^2$, and every pyramid build by the algorithm with apex $y$ satisfies $||y-x_0||\geq r_\tau/d$.
\end{lemma}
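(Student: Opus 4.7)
The plan is to prove both assertions together by combining a geometric contraction argument with a convexity-based obstruction. First I would establish that each Case 1a) step moves the apex strictly closer to $x_0$ by a fixed multiplicative factor; then I would use convexity of $h$ to rule out apices that get too close to $x_0$; finally, these two ingredients combine to bound the number of visits.

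For the geometric step, recall that by construction each pyramid $\Pi_k$ has apex $y_k$ and base vertices $z_1,\ldots,z_d$ lying on the sphere of diameter $y_k x_0$, at angle $\phi$ from the direction $y_k-x_0$, where $\cos\phi=c_2/d$. Thales' theorem applied to the triangle $y_k z_i x_0$ (right-angled at $z_i$) gives $\|z_i-x_0\|=\|y_k-x_0\|\sin\phi$. Under event $\mathcal{E}$, Case 1a) forces \textsc{top}$\ne$\textsc{apex}$=y_k$, so the new apex $y_{k+1}=\textsc{top}$ must be one of the $z_i$, and hence $\|y_{k+1}-x_0\|=\|y_k-x_0\|\sqrt{1-c_2^2/d^2}$. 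Iterating, starting from $\|y_1-x_0\|=r_\tau$ since $y_1$ lies on the surface of $\mathcal{B}(r_\tau)$, we get $\|y_{k+1}-x_0\|=r_\tau(1-c_2^2/d^2)^{k/2}$ after $k$ visits.

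For the convexity step, a regular simplex with vertices $x_1,\ldots,x_{d+1}$ on $\mathcal{B}(r_\tau)$ has inscribed ball of radius exactly $r_\tau/d$, so any $y$ with $\|y-x_0\|<r_\tau/d$ lies in the convex hull of the $x_j$ and thus $h(y)\leq\max_j h(x_j)$ by convexity of $h$. On the other hand, under $\mathcal{E}$ each Case 1a) transition yields $h(y_{k+1})\geq h(y_k)+\hat\gamma_k\geq h(y_k)+\gamma_i$, so after $K$ visits $h(y_{K+1})\geq h(y_1)+K\gamma_i$. Combined with $h(y_1)\geq\max_j h(x_j)-2\gamma_i$, which follows from $y_1=\arg\max_j LB_{\gamma_i}(x_j)$ and the width of the initial CIs, this gives $h(y_{K+1})\geq\max_j h(x_j)+(K-2)\gamma_i$. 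For $K\geq 3$ this contradicts the convexity bound, so $\|y_{K+1}-x_0\|\geq r_\tau/d$ in that range. The cases $K\in\{0,1,2\}$ are dispatched directly from the contraction formula: with $c_2\leq 1/32$ and $d\geq 2$ one checks $(1-c_2^2/d^2)^{K/2}\geq 1/d$ by an elementary calculation.

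Combining the two bounds, $r_\tau(1-c_2^2/d^2)^{k/2}\geq r_\tau/d$ at every step, so $k\leq 2\ln d/|\ln(1-c_2^2/d^2)|$; applying $-\ln(1-x)\geq x$ on $(0,1)$ yields the stated bound $k\leq 2d^2\ln d/c_2^2$. The main obstacle will be the mismatch between the convexity-based obstruction, which is effective only once $K\geq 3$, and the claimed geometric bound: the small-$K$ cases must be handled by hand using the contraction formula and the specific constant $c_2\leq 1/32$. A secondary subtlety is that event $\mathcal{E}$ must be interpreted as covering all $\hat\gamma$-width confidence intervals used inside the pyramid loop, not merely the $\gamma_i$-width ones sampled at the simplex vertices, so that the monotone increase $h(y_{k+1})\geq h(y_k)+\hat\gamma_k$ holds deterministically on $\mathcal{E}$.
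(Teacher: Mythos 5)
Your proposal is correct and follows essentially the same route as the paper's proof: the geometric contraction $\|y_{k+1}-x_0\|=\sin(\phi)\,\|y_k-x_0\|$ from the right-angle construction of the pyramid base, the observation that the regular simplex inscribed in $\mathcal{B}(r_\tau)$ contains the ball of radius $r_\tau/d$, and the convexity contradiction obtained from the monotone chain $h(y_{k+1})\geq h(y_k)+\hat\gamma\geq h(y_k)+\gamma_i$ once an apex would fall inside the simplex. Your slightly more careful bookkeeping of the confidence-interval widths (the factor $2\gamma_i$ in $h(y_1)\geq \max_j h(x_j)-2\gamma_i$, and the explicit dispatch of the small-$K$ cases) only changes constants and does not alter the argument.
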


\begin{proof}
By definition of Case 1a) $\textsc{top}\neq y$, without loss of generality we assume $\textsc{top} = z_1$. By construction we have
\begin{align*}
||z_1-x_0|| = \sin(\phi)||y-x_0||.
\end{align*} 
Since this holds every time we enter Case 1a), we know that the total number of visits $k$ satisfies
\begin{align*}
||z_1-x_0|| = (\sin (\phi))^k r_{\tau}
\end{align*}

where $r_\tau$ is the radius of the ball where the simplex is inscribed at the beginning of round $\tau$. We also notice that for a simplex of radius $r_\tau$ the largest ball inscribed in it has radius $r_{\tau}/d$. Additionally, by construction we have $\cos(\phi)=c_2/d$ and therefore $\sin(\phi)=\sqrt{1-c_2^2/d}\leq 1-c_2^2/(2d^2)$. Therefore,   $k = 2d^2\ln(d)/c_2^2 $ ensures $||z_1-x_0||\leq r_\tau/d$ which implies that $z_1$ lies inside the simplex we build at the beginning of round $\tau$. 

Let $y_1,...,y_k$ be the apexes of the pyramids built in round $\tau$. By construction we have 
\begin{align*}
h(z_1)\geq h(\textsc{top}) \geq h(y_k)\gamma \geq h(y_{k-2})2\gamma \geq... \geq h(y_1) + k\gamma. 
\end{align*} 
On the other hand, by definition of $y_1$ we have $h(y_1)\geq h(x_i)-\gamma$ for all vertices of the simplex $x_i$. Since $z_1$ is in the simplex and $h$ is convex we have
\begin{align*}
h(y_1)\geq h(z_1)-\gamma \geq h(y_1) +(k-1)\gamma
\end{align*} 
which is a contradiction unless $k\leq 1$. Therefore, if $z_1$ is not in the simplex it must be the case that $k\leq 2 d^2 \ln(d)/c_2^2$.
\end{proof}

Using the Lemma \ref{at_any_round_the_number} we will bound the regret incurred in a round whenever it terminates in Case 2a). 

\begin{lemma}\label{for_any_round_with_CI}
For any round with CI width of $\gamma$ that terminates in Case 2a) the total regret incurred in the round is no more than
\begin{align*}
\frac{\kappa d \ln(T/(\alpha \gamma))}{\alpha^2 \gamma} \big( \frac{2d^2\ln(d)}{c_2^2} + 1\big) \big( \frac{4d^7c_1}{c_2^3} + \frac{d(d+2)}{c_2} \big) \big( \frac{12c_1d^4}{c_2^2}+11\big).
\end{align*}
\end{lemma}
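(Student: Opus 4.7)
The plan is to compose Lemma \ref{at_any_round_the_number}, which limits the number of pyramids constructed in a round, with Lemma \ref{regret_pyramid}, which bounds the regret spent on any single pyramid, and to absorb the halvings of $\hat{\gamma}$ inside a pyramid via a simple geometric-series argument. The structure of the algorithm guarantees that regret in a round is accumulated along two nested loops (over pyramids and, within each pyramid, over shrinking values of $\hat{\gamma}$), so the bound should factor accordingly into the three parenthetical terms appearing in the statement.

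First, I would invoke Lemma \ref{at_any_round_the_number} to conclude that a round terminating in Case 2a) visits at most $\frac{2 d^2 \ln d}{c_2^2}$ pyramids through Case 1a), plus the one final pyramid in which termination actually occurs, giving at most $\frac{2 d^2 \ln d}{c_2^2} + 1$ pyramids. The same lemma also delivers the geometric hypothesis needed to apply Lemma \ref{regret_pyramid}: every constructed pyramid has apex $y$ with $\|y - x_0\| \ge r_\tau/d$.

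Next I would bound the regret spent on a single pyramid. Because $\hat{\gamma}$ is initialized at $1$ and halved (Case 2a)) each time the pyramid is declared flat but not yet at level $\gamma$, the sequence of $\hat{\gamma}$-values used on one pyramid is a subset of $\{2^{-k} : k = 0,1,\ldots,\log_2(1/\gamma)\}$. For each such $\hat{\gamma}$, Lemma \ref{regret_pyramid} bounds the regret incurred during the sampling of that pyramid at that level by
\[
\frac{\kappa d \ln(T/(\alpha\hat{\gamma}))}{\alpha^2 \hat{\gamma}} \Bigl(\frac{4 d^7 c_1}{c_2^3} + \frac{d(d+2)}{c_2}\Bigr) \Bigl(\frac{12 c_1 d^4}{c_2^2} + 11\Bigr).
\]
Since $\ln(T/(\alpha\hat{\gamma})) \le \ln(T/(\alpha\gamma))$ whenever $\hat{\gamma} \ge \gamma$, and $\sum_{k=0}^{\log_2(1/\gamma)} 2^k \le 2/\gamma$, the sum of the per-$\hat{\gamma}$ bounds over the halvings is dominated by its last term, i.e., by twice the Lemma \ref{regret_pyramid} bound evaluated at $\hat{\gamma} = \gamma$. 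The extra factor of $2$ can be absorbed into the universal constant $\kappa$.

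Multiplying the per-pyramid bound by the pyramid count yields the three $d$-dependent factors exactly as stated. The one bookkeeping point to verify is that the initial simplex phase at the top of the round contributes no worse than one additional pyramid's worth of regret (the apex $y_1$ is chosen from the simplex, and by convexity the other simplex vertices cannot have much smaller loss than $y_1$), so it is already absorbed in the ``$+1$'' term of the pyramid count. Apart from this minor accounting, the proof is a direct composition of the two previous lemmas with the elementary geometric sum described above; I do not anticipate any genuine technical obstacle beyond carefully matching constants.
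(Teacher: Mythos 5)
Your proposal is correct and follows essentially the same route as the paper: both compose Lemma \ref{at_any_round_the_number} (pyramid count) with Lemma \ref{regret_pyramid} (per-pyramid regret) and control the halvings of $\hat{\gamma}$ via a geometric sum dominated by the final level $\hat{\gamma}=\gamma$; the paper merely organizes the product as (instantaneous regret per play) $\times$ (plays per point) $\times$ (number of points) rather than summing per-pyramid over levels. Two small slips worth fixing: the per-level application of Lemma \ref{regret_pyramid} really rests on Case 2a) having held at the \emph{previous} level $2\hat{\gamma}$ (with the trivial Lipschitz bound at $\hat{\gamma}=1$), which costs only a constant; and in your simplex remark what is needed (and what holds, because $y_1$ maximizes the lower confidence bounds, not by convexity) is that the other simplex vertices cannot have much \emph{larger} loss than $y_1$.
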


\begin{proof}
By Lemma \ref{at_any_round_the_number} we have that for the given round, the total number of pyramids we have built is $k\leq 2d^2\ln(d)/c_2$. Then, by Lemma \ref{regret_pyramid} we know that for any point in the $k$-th pyramid the instantaneous regret is no more than
\begin{align*}
\delta:= \kappa \gamma d \big( \frac{4d^7c_1}{c_2^3}+\frac{d(d+2)}{c_2}\big) \big( \frac{12c_1d^4}{c_2^2} + 11\big).
\end{align*}
We now show that  the regret for any point we played during the round is at most $\delta$. Indeed, by construction $y_k$ is \textsc{top} of the $(k-1)$-th pyramid. By definition of Case 1a) we know that for any $x\in \Pi_{k-1}$ we have $f(x) \leq f(y_k) + \gamma$. Using this reasoning, we get that the function value at any vertex of any pyramid we have built during the round is also bounded by the function value at $y_k$. Additionally, as in the proof of the previous lemma, the function value at all the vertices of the simplex we built at the beginning of the epoch is also bounded by the function value at $y_k$. Since in every pyramid (and the initial simplex) we sample $d+2$ points we know that the total number of points we will play at is no more than $(d+2)(2d^2/(c_2^2 \ln(d))+1)$. To bound the total number of times we play a point we notice that for a CI with width $\hat{\gamma}$ we play it $\kappa \ln(T/(\alpha \gamma))/(\alpha^2 \hat{\gamma}^2)$. Suppose $\gamma = 2^{-i}$, since $\hat{\gamma}$ is geometrically decreased to $\gamma$ we know that the total number of plays at any point is bounded by 
\begin{align*}
 \sum_{j=1}^i \frac{\kappa \ln(T/(\alpha \gamma))}{\alpha^2 2^{-2j}} \leq \frac{4\kappa \ln(T/(\alpha \gamma))2^{2i}}{\alpha^2} = \frac{4\kappa \ln(T/(\alpha \gamma))}{\alpha^2 \gamma^2}
 \end{align*}
 Putting everything together we get that the total regret incurred during the round is no more than
 \begin{align*}
\frac{\kappa d \ln(T/(\alpha \gamma))}{\alpha^2 \gamma} \big( \frac{2d^2\ln(d)}{c_2^2} + 1\big) \big( \frac{4d^7c_1}{c_2^3} + \frac{d(d+2)}{c_2} \big) \big( \frac{12c_1d^4}{c_2^2}+11\big).
\end{align*}
\end{proof}

Using Lemma \ref{for_any_round_with_CI} we will now bound the total regret incurred at any round. 

\begin{lemma}\label{for_any_round_that_terminates}
For any round that terminates in a CI with width $\gamma$, the total regret over the round is no more than
 \begin{align*}
 \frac{\kappa d\ln(T/(\alpha \gamma))}{\alpha^2 \gamma} \big( \frac{2d^2\ln(d)}{c_2^2}+1\big) \big( \frac{4d^7c_1}{c_2^3} + \frac{d(d+2)}{c_2} \big) \big( \frac{12c_1d^4}{c_2^2} + 11\big).
 \end{align*}
\end{lemma}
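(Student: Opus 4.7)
The plan is to extend Lemma \ref{for_any_round_with_CI} (which bounded the regret of a round terminating in Case 2a)) to rounds terminating via Case 1b) or Case 2b), since the claimed bound has the same form in all three termination modes. The strategy is to argue that the sampling structure and the per-sample instantaneous regret have the same order irrespective of the termination mechanism, so the aggregate bound carries over.

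First I would observe that the sampling pattern within a round is essentially independent of how the round terminates: by Lemma \ref{at_any_round_the_number} at most $k \leq 2d^2\ln(d)/c_2^2$ pyramids are constructed; within each pyramid the $(d+2)$ vertices (including the center) are sampled at a sequence of geometrically halved CI widths down to the terminal width $\gamma$; and summing the per-width sample counts $\kappa \ln(T/(\alpha\gamma))/(\alpha^2 \hat\gamma^2)$ as a geometric series gives at most $4\kappa \ln(T/(\alpha\gamma))/(\alpha^2 \gamma^2)$ samples per point, exactly as in the proof of Lemma \ref{for_any_round_with_CI}.

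Next I would establish a uniform instantaneous-regret bound $\delta = O(\gamma \cdot \mathrm{poly}(d))$ at every played point. Samples taken at CI widths strictly larger than the terminal $\gamma$ were, by construction, preceded by Case 2a) and are already controlled by Lemma \ref{regret_pyramid}. For the samples at the terminal width that trigger Case 1b) or Case 2b), I would rerun the convexity-plus-geometry argument of Lemma \ref{regret_pyramid} with the appropriate in-pyramid variation bound: in Case 1b) the triggering condition $LB_{\hat\gamma}(\textsc{top}) < UB_{\hat\gamma}(\textsc{apex})+\hat\gamma$ forces $h(\textsc{top}) \leq h(\textsc{apex})+3\hat\gamma$, pinning the function values across the pyramid into an interval of width $O(\hat\gamma)$; in Case 2b) the condition $LB_{\hat\gamma}(\textsc{top}) < UB_{\hat\gamma}(\textsc{bot})+\Delta_\tau(\hat\gamma)$ gives exactly the $\Delta_\tau(\hat\gamma)+O(\hat\gamma)$ vertex-variation used in Lemma \ref{regret_pyramid}. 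In both sub-cases the subsequent application of Lemmas \ref{lemma_1_pyramids}--\ref{lemma_2_pyramids} to translate this variation into a regret against $x^*$ yields the same $\delta$ up to constants. Earlier pyramids contribute no additional difficulty, since the Case 1a) argument already baked into Lemma \ref{for_any_round_with_CI} shows that every vertex of a preceding pyramid has $h$-value dominated by $h(y_{k'})+2\hat\gamma$.

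Finally, multiplying: total regret $\leq (d+2)(k+1) \cdot (4\kappa \ln(T/(\alpha\gamma))/(\alpha^2 \gamma^2)) \cdot \delta$, and substituting the bounds for $k$ and $\delta$ recovers the stated expression. The main obstacle will be the Case 1b) verification: although the in-pyramid variation is even tighter than Case 2a), one must still control $h(y_{k'}) - h(x^*)$ through the cumulative effect of the (at most $k$) Case 1a) transitions and across the initial simplex. This is handled by the same monotonicity-of-apexes argument used in Lemma \ref{at_any_round_the_number} together with the containment $x^* \in \mathcal{B}(R_\tau)$ guaranteed by the correctness Lemmas \ref{lemma5Ag}--\ref{lemma6Ag}.
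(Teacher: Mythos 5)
Your skeleton is right, and you even state the one observation the paper's proof actually rests on: a pyramid is only ever sampled at level $\hat\gamma<1$ because the \emph{same} pyramid passed through Case 2a) at level $2\hat\gamma$ (the level is halved only via Case 2a), and is reset to $1$ when a new pyramid is built, where $1$-Lipschitzness bounds the instantaneous regret trivially). The paper applies this to \emph{all} samples, including the terminal ones that trigger Case 1b) or 2b): Lemma \ref{regret_pyramid} at level $2\hat\gamma$ already bounds the instantaneous regret of every point of that pyramid by $2\hat\gamma$ times the usual $\mathrm{poly}(d)$ factor, and the geometric sum of play counts plus the bound on the number of pyramids finishes the proof. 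No new geometric argument is needed at the terminal width.

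Where your proposal goes wrong is precisely in the extra work you do for the terminal-width samples. First, the Case 1b) claim is false: the condition $LB_{\hat\gamma}(\textsc{top}) < UB_{\hat\gamma}(\textsc{apex})+\hat\gamma$ does pin every vertex to within $O(\hat\gamma)$ \emph{above} the apex, but Case 1b) also requires $LB_{\hat\gamma}(\textsc{top}) \geq UB_{\hat\gamma}(\textsc{bottom}) + \Delta_{\tau}(\hat\gamma)$, which forces $h(\textsc{top})-h(\textsc{bottom}) \geq \Delta_{\tau}(\hat\gamma) = \Theta(d^4\hat\gamma/c_2^2)$; the values are emphatically not confined to an interval of width $O(\hat\gamma)$. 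Second, and more importantly, ``rerunning'' the convexity-plus-geometry argument of Lemma \ref{regret_pyramid} does not go through in Cases 1b) or 2b): that argument pivots on the \emph{center} of the pyramid, and it needs the Case 2a) condition $UB_{\hat\gamma}(\textsc{cent}) \geq LB_{\hat\gamma}(\textsc{bottom}) - \bar\Delta_{\tau}(\hat\gamma)$ to upper-bound the vertices relative to $h(\textsc{center})$ before relating $h(\textsc{center})$ to $h(x^*)$ through a base point. In Case 1b) you have no control of $h(\textsc{center})$ relative to the vertices at all, and in Case 2b) the defining condition says the center lies \emph{far below} the vertices, so the needed inequality $h(v)\leq h(\textsc{center})+\delta$ with $\delta=O(\hat\gamma\,\mathrm{poly}(d))$ is exactly what fails. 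The repair is simply to drop the direct rerun and apply your own first observation to the terminal level as well.
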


\begin{proof}
We just need to bound the regret when the round ends in Case 1b) or 2b). By the definition of the algorithm, whenever a round has level $\gamma$ it must be the case that in the previous round the level was $2\gamma$ and thus using the previous lemma we can bound the regret. The exception is in the first round when $\gamma = 1$, in this case using the Lipschitz assumption we know that the instantaneous regret is no more than 1.

Because of the previous we have that the instantaneous regret at any point of the simplex we build is no more than
 \begin{align*}
 2\gamma \big( \frac{4d^7c_1}{c_2^3} + \frac{d(d+2)}{c_2} \big) \big( \frac{12c_1d^4}{c_2^2} + 11\big).
 \end{align*}
Now, if the algorithm was in Cases 1a), 1b) , or 2b) with level $\hat{\gamma}$, then it must have been in Case 2a) with level $2\hat{\gamma}$. And thus, using the bound on the regret whenever a round ends through Case 2a), we have that the instantaneous regret on the vertices any pyramid is no more than
\begin{align*}
2\hat{\gamma} \big( \frac{4d^7c_1}{c_2^3} + \frac{d(d+2)}{c_2} \big) \big( \frac{12c_1d^4}{c_2^2} + 11\big),
\end{align*}
and by using the same argument as in the proof of Lemma \ref{for_any_round_with_CI}, the number of plays at a given point is bounded above by $\kappa \ln(T/(\alpha \gamma))/(\alpha^2 \hat{\gamma}^2)$. Therefore, the total regret incurred at any pyramid built by the algorithm is no more than
\begin{align*}
 \frac{\kappa d\ln(T/(\alpha \hat{\gamma}))}{\alpha^2 \gamma} \big( \frac{4d^7c_1}{c_2^3} + \frac{d(d+2)}{c_2} \big) \big( \frac{12c_1d^4}{c_2^2} + 11\big).
 \end{align*}
Recalling the bound on the total number of pyramids built in any round yields the result.
\end{proof}

\begin{lemma}
The regret in any epoch which ends in level $\gamma$ is at most
 \begin{align*}
 \frac{\kappa d\ln(T/(\alpha \gamma))}{\alpha^2 \gamma} \big( \frac{2d^2\ln(d)}{c_2^2}+1\big) \big( \frac{4d^7c_1}{c_2^3} + \frac{d(d+2)}{c_2} \big) \big( \frac{12c_1d^4}{c_2^2} + 11\big).
 \end{align*}
 \end{lemma}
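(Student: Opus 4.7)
The plan is to reduce this directly to the per-round bound established in Lemma \ref{for_any_round_that_terminates} and then sum over all rounds making up the epoch. By construction, an epoch that terminates with CI level $\gamma$ has gone through rounds indexed $i = 1, 2, \ldots, I$ with $\gamma_i = 2^{-i}$ and $\gamma_I = \gamma$. By the previous lemma, the regret in round $j$ is bounded by
\begin{equation*}
\frac{\kappa d \ln(T/(\alpha\gamma_j))}{\alpha^2 \gamma_j}\,\Phi(d,c_1,c_2),
\end{equation*}
where $\Phi(d,c_1,c_2) = \bigl(\tfrac{2d^2\ln d}{c_2^2}+1\bigr)\bigl(\tfrac{4d^7 c_1}{c_2^3}+\tfrac{d(d+2)}{c_2}\bigr)\bigl(\tfrac{12c_1 d^4}{c_2^2}+11\bigr)$ collects the dimension-dependent constant factor. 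So all that remains is to sum these bounds over $j=1,\dots,I$ and show that the total is of the same order as the single-round bound at level $\gamma$.

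The key calculation is a geometric sum. Since $\gamma_j = 2^{-j}$ and $\gamma_I = \gamma$,
\begin{equation*}
\sum_{j=1}^{I}\frac{1}{\gamma_j} \;=\; \sum_{j=1}^{I} 2^{j} \;=\; 2^{I+1} - 2 \;\leq\; \frac{2}{\gamma}.
\end{equation*}
Moreover, for every $j \leq I$ we have $\ln(T/(\alpha\gamma_j)) \leq \ln(T/(\alpha\gamma))$, so the logarithm factor is uniformly bounded by its value at the terminating level. Combining these two observations yields
\begin{equation*}
\sum_{j=1}^{I}\frac{\ln(T/(\alpha\gamma_j))}{\alpha^2 \gamma_j} \;\leq\; \frac{\ln(T/(\alpha\gamma))}{\alpha^2} \sum_{j=1}^{I} 2^{j} \;\leq\; \frac{2\ln(T/(\alpha\gamma))}{\alpha^2 \gamma},
\end{equation*}
and the leading constant $2$ is absorbed into the universal constant $\kappa$ that already appears in the statement.

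I do not anticipate a serious obstacle here; this lemma is essentially bookkeeping that converts a per-round guarantee into a per-epoch guarantee. The only subtlety is that the first round ($\gamma_1 = 1/2$) is covered by the Lipschitz fallback already handled inside the proof of the previous lemma, and that the per-round bound applies regardless of which termination case (1b, 2a, or 2b) closes a given round. Once these points are acknowledged, applying the preceding lemma to each round and invoking the geometric sum above finishes the proof.
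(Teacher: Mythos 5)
Your proposal is correct and follows essentially the same route as the paper: apply the per-round bound of the preceding lemma to each round $j$ with $\gamma_j = 2^{-j}$, bound the geometric sum $\sum_j 2^j \leq 2/\gamma$, and absorb the factor $2$ into $\kappa$. Your explicit handling of the monotonicity of the $\ln(T/(\alpha\gamma_j))$ factor is a small extra care the paper glosses over by lumping it into a single constant $C$, but the argument is identical in substance.
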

 \begin{proof}
 From Lemma \ref{for_any_round_that_terminates} we know that on any round with level $\gamma$, the regret is bounded by $C/\gamma$ where $C$ is some constant. Since $\gamma$ is reduced geometrically, the net regret on an epoch where the largest level we encounter is $\gamma$ is bounded by 
 \begin{align*}
 \sum_{j=1}^i \frac{C}{2^{-j}}\leq 2C2^i = \frac{2C}{\gamma},
\end{align*} 
  which yields the result.
 \end{proof}

\subsubsection{Bounding the Number of Epochs}

To bound the number of epochs we must show that every time \textsc{cone-cutting} is performed we discard a sufficiently large portion of the current ball. More specifically, we need to analyze the ratios of volumes of $\mathcal{B}_{\tau+1}$ and $\mathcal{B}_{\tau}$.

\begin{lemma}\label{lemma_vol_reduction}
Let $\mathcal{B}_{\tau}$ be the smallest ball containing $\mathcal{X}_{\tau}$, let $\mathcal{B}'_{\tau+1}$ be the minimum volume ellipsoid containing $\mathcal{B\tau}\setminus \mathcal{K}_{\tau}$. Then, for small enough constants $c_1,c_2$, $vol(\mathcal{B}'_{\tau+1})\leq \rho \cdot vol(\mathcal{B}_\tau)$ where $\rho = \exp(-\frac{1}{4(d+1)})$. 
\end{lemma}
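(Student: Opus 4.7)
The plan is to reduce to the classical volume-shrinkage bound for the ellipsoid method by replacing the removed cone $\mathcal{K}_\tau$ with a removed half-space. By the construction of \textsc{cone-cutting} (and by Lemma \ref{lemma6Ag} in the hat-raising case), the cone has apex at some point $y\in\mathcal{B}_\tau$ and half-angle $\bar{\phi}$ at the apex with $\cos\bar{\phi}$ of order $1/d$ (in fact at most $2c_2/d\leq 1/(16d)$ under the parameter choice $c_2\leq 1/32$). Because $\cos\bar{\phi}$ is this small, the cone is nearly a half-space, and I would exhibit a genuine closed half-space $H$ satisfying $H\cap \mathcal{B}_\tau \subseteq \mathcal{K}_\tau$. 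Once this is done,
\[ \mathcal{B}_\tau \setminus \mathcal{K}_\tau \;\subseteq\; \mathcal{B}_\tau \setminus \mathrm{int}(H), \]
so the minimum volume ellipsoid $\mathcal{B}'_{\tau+1}$ enclosing the left-hand set is contained in (hence has no larger volume than) the minimum volume ellipsoid enclosing the right-hand set, and the standard ellipsoid-method estimate for a ``deep cut'' applies to the latter.

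First I would set up coordinates so that $\mathcal{B}_\tau$ is centered at the origin with radius $R_\tau$ and the cone axis $\mathbf{v}$ is aligned with $\mathbf{e}_1$, giving $\mathcal{K}_\tau = \{y+t\mathbf{w}:t\geq 0,\ \mathbf{w}\cdot\mathbf{e}_1\geq \cos\bar{\phi}\}$. I would then take
\[ H \;=\; \{x\in\mathbb{R}^d : (x-y)\cdot \mathbf{e}_1 \;\geq\; R_\tau \cos\bar{\phi}\}, \]
and verify geometrically that $H\cap \mathcal{B}_\tau \subseteq \mathcal{K}_\tau$. The verification uses that any $x\in H\cap\mathcal{B}_\tau$ satisfies $\|x-y\|\leq 2R_\tau$ together with $(x-y)\cdot\mathbf{e}_1\geq R_\tau \cos\bar{\phi}$, which forces the angle between $x-y$ and $\mathbf{e}_1$ to be at most $\bar{\phi}$, provided one has a little slack from the lower bound $\|y-x_0\|\geq r_\tau/d$ supplied by Lemma \ref{at_any_round_the_number}.

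Second, having replaced $\mathcal{K}_\tau$ by the half-space $H$, I would invoke the standard deep-cut ellipsoid bound (Nemirovski--Yudin, or Gr\"otschel--Lov\'asz--Schrijver): if the bounding hyperplane of a half-space is at signed distance $a$ from the center of a ball of radius $R$, with $|a|\leq R/(d+1)$, the minimum volume ellipsoid enclosing the corresponding ``cap'' of the ball has volume at most $\exp\bigl(-\tfrac{d}{2(d+1)^2}(1-a(d+1)/R)^2\bigr)$ times the ball's volume. In our setting $a$ equals $R_\tau\cos\bar{\phi}$, which is at most $R_\tau/(16d)$, and the exponent is comfortably below $-1/(4(d+1))$ for every $d\geq 2$. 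This yields $\mathrm{vol}(\mathcal{B}'_{\tau+1})\leq \exp(-1/(4(d+1)))\,\mathrm{vol}(\mathcal{B}_\tau)$.

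The main obstacle will be the geometric verification that $H\cap\mathcal{B}_\tau\subseteq \mathcal{K}_\tau$: the apex $y$ is off-center in $\mathcal{B}_\tau$, so both the angle bound from $\cos\bar{\phi}\leq 2c_2/d$ and the offset bound $\|y-x_0\|\geq r_\tau/d$ from Lemma \ref{at_any_round_the_number} must cooperate. Calibrating $c_1$ and $c_2$ so that both the inclusion $H\cap\mathcal{B}_\tau\subseteq\mathcal{K}_\tau$ and the deep-cut estimate hold simultaneously is precisely what the phrase ``for small enough constants $c_1,c_2$'' in the statement is doing; the quantitative slack produces the factor $\rho=\exp(-1/(4(d+1)))$ rather than the sharper $\exp(-1/(2(d+1)))$ that one would obtain for a cut passing exactly through the center of $\mathcal{B}_\tau$.
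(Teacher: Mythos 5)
Your proposal follows essentially the same route as the paper: replace the removed cone by a half-space whose spherical cap is contained in $\mathcal{K}_\tau$ (so that $\mathcal{B}_\tau\setminus\mathcal{K}_\tau$ sits inside a ball minus a cap), bound the distance of the cutting hyperplane from the center of $\mathcal{B}_\tau$ by $O(R_\tau/(d+1))$ using the apex location in $\mathcal{B}(r_\tau)$ and $\cos\bar{\phi}\leq 2c_2/d$, and then invoke the standard shallow-cut ellipsoid volume bound (the paper cites Theorem 2.1 of Goldfarb--Todd with $\alpha=-1/(4(d+1))$). One small quantitative correction: with $\|x-y\|\leq 2R_\tau$ your threshold should be $(x-y)\cdot\mathbf{e}_1\geq 2R_\tau\cos\bar{\phi}$ rather than $R_\tau\cos\bar{\phi}$ to force the angle below $\bar{\phi}$; this costs only a factor of $2$ and is absorbed by the slack in $c_1,c_2$.
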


\begin{proof}
This result is analogous to the volume reduction results for the ellipsoid method with a gradient oracle. It is easy to see that it suffices to consider the intersection of $\mathcal{B}_{\tau}$ with a half-space in order to understand the set $\mathcal{B}_{\tau} \setminus \mathcal{K}_{\tau}$. This is because if we were to discard only the spherical cap instead of the whole cone then the minimum enclosing ellipsoid would increase its volume.

The previous choices of $c_1,c_2$ guarantee that  the distance from the center of $\mathcal{B}_{\tau}$ to the origin is at most $R_{\tau}/(4(d+1))$. The previous is true because by construction the apex of cone $\mathcal{K}_\tau$ is always contained in $\mathbb{B}(r_\tau)$, and the height of the cone is at most $R_\tau \cos (\bar{\phi})\leq R_\tau / (8(d+1))$ again by construction. Thus, if $r_\tau \leq R_\tau/(32(d+1))$, then the distance of the hyperplane to the origin is at most $R_{\tau}/(4(d+1))$.\\
Therefore, $\mathcal{B}_{\tau+1}'$ is the minimum volume ellipsoid that contains the intersection of $\mathcal{B}_{\tau}$ with a hyperplane that is at most $R_{\tau}/(4(d+1))$ from its center. Using Theorem 2.1 from \cite{goldfarb1982modifications} (with $\alpha=-1/(4(d+1))$) we get the result.
\end{proof}
\begin{lemma}
At any epoch with CI level $\gamma$, the instantaneous regret of any point in $\mathcal{K}_{\tau}$ is at least $\gamma$.
\end{lemma}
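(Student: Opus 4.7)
The plan is to observe that this lemma is essentially a corollary of the two structural lemmas already proved, namely Lemma \ref{lemma5Ag} and Lemma \ref{lemma6Ag}, after unpacking which path through the algorithm produces $\mathcal{K}_\tau$. Cone-cutting is invoked in exactly two situations at the end of an epoch: directly from Case 1b on the current pyramid $\Pi$, or indirectly from Case 2b via the hat-raising subroutine, which hands cone-cutting the lifted pyramid $\Pi'$. In both situations the CI level active when the epoch terminates is some $\gamma = \gamma_i$, so it suffices to show that every $x$ in the discarded cone satisfies $h(x) \geq h(x^*) + \gamma_i$.

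First I would handle the Case 1b pathway. Here $\mathcal{K}_\tau$ is the reflection of $\Pi$ about its apex, and Lemma \ref{lemma5Ag} gives $h(x) \geq h(\textsc{bottom}) + \gamma_i$ for every $x \in \mathcal{K}_\tau$. Since $x^*$ is a minimizer of $h$, certainly $h(\textsc{bottom}) \geq h(x^*)$, so $h(x) - h(x^*) \geq \gamma_i = \gamma$, as required. Then for the Case 2b pathway, hat-raising produces $\Pi'$ and cone-cutting removes the reflected cone $\mathcal{K}'_\tau$; Lemma \ref{lemma6Ag} directly states that $h(x) \geq h(x^*) + \gamma_i$ for every $x \in \mathcal{K}'_\tau$, which finishes this case.

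The only thing I need to verify is that the hypotheses of Lemmas \ref{lemma5Ag} and \ref{lemma6Ag} are actually in force when we invoke them—specifically, the assumption that the apex $y$ of the pyramid satisfies $\|y - x_0\| \geq r_\tau / d$ where $x_0$ is the center of $\mathbb{B}(r_\tau)$. This is exactly the conclusion of Lemma \ref{at_any_round_the_number}, which was proved for every pyramid constructed by the algorithm in any round of any epoch. With that invariant in hand, both structural lemmas apply verbatim.

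I expect essentially no obstacle beyond this bookkeeping: the quantitative work is already absorbed into the definitions of $\Delta_\tau$ and $\bar{\Delta}_\tau$ (recall they were chosen precisely so that Lemmas \ref{lemma5Ag}--\ref{lemma6Ag} close with a slack of $\gamma_i$ rather than $\hat{\gamma}$), so the present lemma is just the ``clean'' statement that packages the two cone-cutting pathways into a single uniform regret lower bound on $\mathcal{K}_\tau$. This uniform bound is what the subsequent epoch-counting argument will pair with the volume-reduction Lemma \ref{lemma_vol_reduction} to conclude that only $O(d \log T)$ epochs can occur.
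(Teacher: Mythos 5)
Your proposal is correct and follows essentially the same route as the paper: the epoch can only end through Case 1b (where Lemma \ref{lemma5Ag} gives $h(x)\geq h(\textsc{bottom})+\gamma_i \geq h(x^*)+\gamma_i$ on $\mathcal{K}_\tau$) or Case 2b (where the hat-raising argument of Lemma \ref{lemma6Ag} gives $h(x)\geq h(x^*)+\gamma_i$ on the reflected cone of $\Pi'$). Your additional check that the apex-distance hypothesis is supplied by Lemma \ref{at_any_round_the_number} is a welcome piece of bookkeeping that the paper leaves implicit.
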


\begin{proof}
Since every epoch terminates only through Cases 1b) or 2b) we only check the claim is true for these two cases. If the epoch ends through Case 1b) the proof of Lemma \ref{lemma5Ag} gives the result. If the epoch ends through Case 2b), after \textsc{hat-raising} we now that the apex $y'$ of pyramid $\Pi'$ satisfies $h(y')\geq h(z_i) + \gamma$ for all vertices $z_1,...,z_d$ of the pyramid. Writing $y' = \alpha x + (1-\alpha)z$ with $x$ in $\mathcal{K}_\tau$, $z$ in the base of $\Pi'$ and $\alpha \in [0,1]$, we can conclude that $h(x)\geq h(x^*) + \gamma$ just as we did in the proof of Lemma \ref{lemma6Ag}.
\end{proof}

We are now ready to bound the total number of epochs. 

\begin{lemma}
The total number of epochs in the algorithm is no more than $\frac{d\ln(T)}{\ln(1/\theta)}$ where $\theta = \exp(-\frac{1}{4(d+1)})$.
\end{lemma}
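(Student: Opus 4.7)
The plan is to combine the per-epoch volume reduction from Lemma~\ref{lemma_vol_reduction} with a matching lower bound on the working volume coming from the fact that near-optima are never discarded.

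First, let $\gamma_{\min}$ denote the smallest CI width ever used by the algorithm. Because each play at CI width $\gamma$ consumes $\kappa \ln(T/(\alpha\gamma))/(\alpha^2\gamma^2)$ samples per point and the horizon is $T$, one has $\gamma_{\min} = \tilde{\Omega}(1/\sqrt{T})$, hence $\ln(1/\gamma_{\min}) = O(\ln T)$. Lemmas~\ref{lemma5Ag} and~\ref{lemma6Ag} together guarantee that every point removed by \textsc{cone-cutting} (either directly in Case 1b or after \textsc{hat-raising} in Case 2b) satisfies $h(x) \geq h(x^*) + \gamma_i \geq h(x^*) + \gamma_{\min}$. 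Since $h$ is $G$-Lipschitz by Lemma~\ref{C_alpha_Lipschitz}, this implies that the Euclidean ball $B(x^*, \gamma_{\min}/G)$, measured in the original coordinates, is contained in $\mathcal{X}_\tau$ for every epoch $\tau$, yielding the lower bound $vol(\mathcal{X}_\tau) \geq C_d (\gamma_{\min}/G)^d$, where $C_d$ denotes the $d$-dimensional unit ball volume.

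For the upper bound, let $E_\tau$ denote the minimum-volume ellipsoid enclosing $\mathcal{X}_\tau$, measured in the original coordinates. Lemma~\ref{lemma_vol_reduction} states $vol(\mathcal{B}_{\tau+1}') \leq \theta \cdot vol(\mathcal{B}_\tau)$ in the rounded coordinates of epoch $\tau$; since affine maps preserve both volume ratios and set containment, this immediately gives $vol(E_{\tau+1}) \leq \theta \cdot vol(E_\tau)$ back in the original coordinates, and iterating yields $vol(E_\tau) \leq \theta^{\tau-1} vol(E_1)$, where $vol(E_1)$ depends only on the diameter of $X$. Combining this with $vol(E_\tau) \geq vol(\mathcal{X}_\tau) \geq C_d (\gamma_{\min}/G)^d$ and taking logarithms,
\[
\tau - 1 \;\leq\; \frac{d \ln(G/\gamma_{\min}) + \ln(vol(E_1)/C_d)}{\ln(1/\theta)} \;=\; O\!\left(\frac{d \ln T}{\ln(1/\theta)}\right),
\]
which matches the stated bound after absorbing constants.

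The main subtlety I expect is making the volume comparison legitimate across the affine rounding that begins each epoch: Lemma~\ref{lemma_vol_reduction} is stated in the rounded coordinates at epoch $\tau$, whereas the volume lower bound is natural in the original coordinates. This is resolved by fixing a single coordinate system for all volume comparisons and exploiting the invariance of volume ratios and set containment under affine transformations, so that both the geometric progression $\theta^{\tau-1}$ and the $\gamma_{\min}$-ball around $x^*$ can be evaluated in the same frame.
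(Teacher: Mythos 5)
Your proposal is correct and follows essentially the same argument as the paper: lower-bound the volume of $\mathcal{X}_\tau$ by the never-discarded ball of radius $\approx \gamma_{\min}/G$ around $x^*$ (using the Lipschitz property of $h$ and the fact that every discarded point has excess $h$-value at least $\gamma_{\min}\geq \tilde\Omega(1/\sqrt{T})$), upper-bound it by iterating the geometric volume reduction of Lemma~\ref{lemma_vol_reduction}, and compare. Your extra care in fixing one coordinate frame for the volume comparison across the per-epoch affine roundings is a point the paper glosses over, but it does not change the route.
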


\begin{proof}
Recall $x^*$ is the minimizer of $h$. Since $h$ is 1-Lipschitz, any point inside a ball or radius $1/\sqrt{T}$  centered around $x^*$ has instantaneous regret of at most $1/\sqrt{T}$. The volume of this ball is $T^{-d/2}V_d$, with $V_d$ equal to the volume of the unit ball in $d$-dimensions. Suppose the algorithm goes through $k$ epochs. By Lemma \ref{lemma_vol_reduction} we know that the volume of $\mathcal{X}_\tau$ is bounded above by $\rho^k V_d$. By the previous lemma we know that the instantaneous regret of any point that was discarded had instantaneous regret at least $1/\sqrt{T}$. This is because at any given epoch and round we sample at $\frac{\kappa \ln(T/(\alpha \gamma))}{\alpha^2 \gamma^2}$ and this quantity can not be more than $T$. Because of the previous, any point in the ball centered at $x^*$ with radius $1/\sqrt{T}$ is never discarded. Therefore the algorithms stops whenever 
\begin{align*}
\theta^k V_d \leq T^{-d/2} V_d
\end{align*} 
implying $k \leq \frac{d\ln(T)}{\ln(1/\theta)}$.
\end{proof}

We are now ready to prove Theorems \ref{thm:pseudo_regret_algo2_full_dim} and \ref{thm:regret_algo2_full_dim}.

\begin{proof}[Proof of Theorem \ref{thm:pseudo_regret_algo2_full_dim}]
Using the bound on the regret incurred in an epoch and the fact that $\gamma\geq 1/\sqrt{T}$ we know the total regret on an epoch is no more than
 \begin{align*}
 \frac{\kappa d \sqrt{T} \ln(T/\alpha)}{\alpha^2} \big( \frac{2d^2\ln(d)}{c_2^2}+1\big) \big( \frac{4d^7c_1}{c_2^3} + \frac{d(d+2)}{c_2} \big) \big( \frac{12c_1d^4}{c_2^2} + 11\big).
 \end{align*}
 By the previous lemma we know the total number of epochs is no more than $d\ln(T)/\ln(1/\theta)$. Thus the total regret $T\bar{\mathcal{R}}_T$ is bounded above by
 \begin{align*}
  \frac{\kappa d^2 \sqrt{T} \ln(T/\alpha) \ln(T)}{\alpha^2 \ln(1/\theta)} \big( \frac{2d^2\ln(d)}{c_2^2}+1\big) \big( \frac{4d^7c_1}{c_2^3} + \frac{d(d+2)}{c_2} \big) \big( \frac{12c_1d^4}{c_2^2} + 11\big).
 \end{align*} 
 Recall that we were working conditioned on $\mathcal{E}$. As in the proof of the 1-dimensional algorithm, we have $P(\mathcal{E}')\leq 1/T$. Plugging in the value of $\theta$ above yields the result. 
\end{proof}

\begin{proof}[Proof of Theorem \ref{thm:regret_algo2_full_dim}] 
The proof is almost the same as the one of Theorem \ref{thm:second_theorem_algo2_1d} with two slight differences. First, we use Theorem \ref{thm:pseudo_regret_algo2_full_dim}, instead of \ref{thm:first_theorem_algo2_1d} to bound $\bar{\mathcal{R}}_T$. Second, using the same argument as in the proof of Theorem  \ref{thm:second_theorem_algo2_1d} we get that with probability at least $1-\frac{2}{T},$ $CE = \tilde{O} (\frac{\sqrt{d}}{\alpha \sqrt{T}})$. 
\end{proof}

\subsection{Analysis of Algorithm 3}

The following algorithm, a generalization of Algorithm 1, will guarantee vanishing $\bar{\mathcal{R}}_T^\rho$ and $\mathcal{R}_T^\rho$ by exploiting the Kusuoka representation of risk measure $\rho$. 

\begin{algorithm*}[]
\renewcommand{\thealgorithm}{}
\caption{\textbf{3}}
\label{alg:algo_3}
\begin{algorithmic}
	\STATE Input: $X \subset \mathbb{R}^d$, $x_1 \in X$, $z_1 \in Z$ step size $\eta$, 		$\delta$
	\FOR{$t=1,...,T$}
		\STATE Sample $u \sim \mathbb{S}^{d+N}$
		\STATE Let $u^1_t = [u_1;...;u_d] $ and $u^2_t =[u_{d+1};...;u_{d+N}]$
		\STATE Play $\tilde{x}_t:=x_t + \delta u^1$, observe $f_t(\tilde{x}_t)$
		\STATE Let $ \tilde{z}_{t} = z_{t} + \delta u^2$
		\STATE Let $g^1_t := \frac{(d+N)}{\delta} (\mathcal{G}_t(\tilde{x}_t,\tilde{z}_t)) 				u^1_t $
		\STATE Let $g^2_{t} := \frac{(d+N)}{\delta} ( \mathcal{G}_t(\tilde{x}_t,\tilde{z}				_t) ) u^2_t $
		\STATE Update $x_{t+1} \leftarrow \Pi_{X_\delta} (x_t - \eta g^1_t)$
		\STATE Update $z_{t+1} \leftarrow \Pi_{Z_\delta} (z_t - \eta g^2_t)$
	\ENDFOR
\end{algorithmic}
\end{algorithm*}
Notice that due to Lemma \ref{smooth_f}, $g_t := [g^1_t; g^2_t]$ is a one point gradient estimator of the smoothened version of $\mathcal{G}$, $\hat{\mathcal{G}}$.

The proofs of Theorems \ref{thm:first_thm_Kusuoka} and \ref{thm:second_thm_Kusuoka} will be similar to that of Theorems \ref{thm:pseudo_algo1} and \ref{thm:regret_algo1}, however we must be careful to make sure we do not introduce unnecessary factors of $N$, $d$ and $\frac{1}{\alpha}$. 

\begin{lemma}\label{bound_gradient_mathcal_G}
$||\nabla \mathcal{G}||\leq N (G+1) + 1$
\end{lemma}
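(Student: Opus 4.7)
The plan is to compute an explicit (sub)gradient of $\mathcal{G}_t$ in closed form, split it into its $x$-block and $z$-block, and bound each block separately before combining via the triangle inequality. Since $\mathcal{G}_t(x,z)=\sum_{n=1}^N \mu_n \mathcal{L}_n^t(x,z)$ with $\mathcal{L}_n^t(x,z)=z_n+\frac{N}{n}[f_t(x)-z_n]_+$, the same case analysis used earlier for $\mathcal{L}_t$ yields
\begin{align*}
\nabla_x \mathcal{G}_t(x,z) &= \sum_{n=1}^N \mu_n \cdot \frac{N}{n}\,\mathbb{1}[f_t(x)>z_n]\,\nabla f_t(x), \\
\partial_{z_n} \mathcal{G}_t(x,z) &= \mu_n\!\left(1 - \frac{N}{n}\,\mathbb{1}[f_t(x)>z_n]\right), \quad n=1,\dots,N,
\end{align*}
where for any non-differentiable point $f_t(x)=z_n$ I would simply fix a subgradient by declaring the indicator to be $0$; the bounds below do not depend on the choice.

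For the $x$-block I would invoke the triangle inequality together with $\|\nabla f_t(x)\|_2\leq G$, $\mu\in [0,1]^N$, $\|\mu\|_1=1$, and $\tfrac{1}{n}\leq 1$:
\begin{equation*}
\|\nabla_x \mathcal{G}_t\|_2 \;\leq\; G\sum_{n=1}^N \mu_n \cdot \frac{N}{n} \;\leq\; G\,N\sum_{n=1}^N \mu_n \;=\; GN.
\end{equation*}
For the $z$-block I would upgrade from the $\ell_2$ norm to the (larger) $\ell_1$ norm in order to decouple the coordinates, then use the elementary bound $|1-\tfrac{N}{n}\mathbb{1}|\leq 1+\tfrac{N}{n}$ on each coordinate:
\begin{equation*}
\|\nabla_z \mathcal{G}_t\|_2 \;\leq\; \|\nabla_z \mathcal{G}_t\|_1 \;=\; \sum_{n=1}^N \mu_n\left|1-\tfrac{N}{n}\mathbb{1}[f_t(x)>z_n]\right| \;\leq\; \sum_{n=1}^N \mu_n\!\left(1+\tfrac{N}{n}\right) \;\leq\; 1+N.
\end{equation*}

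Combining via $\|\nabla \mathcal{G}_t\|_2 \leq \|\nabla_x \mathcal{G}_t\|_2 + \|\nabla_z \mathcal{G}_t\|_2$ yields $\|\nabla \mathcal{G}_t\|_2 \leq GN + (N+1) = N(G+1)+1$, matching the claim. There is no real obstacle here; the only bookkeeping step is making sure the weighted sum $\sum_n \mu_n \tfrac{N}{n}$ telescopes to a constant multiple of $N$ rather than to something worse (such as $N\log N$), which is what the crude bound $\tfrac{1}{n}\leq 1$ buys us and is the only place the structure of the Kusuoka weights enters.
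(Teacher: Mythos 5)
Your proposal is correct and follows essentially the same route as the paper: split the gradient into its $x$-block and $z$-block, bound the $x$-block by $NG$ using $\|\mu\|_1=1$ and $\tfrac{N}{n}\leq N$, bound the $z$-block coordinatewise by $1+\tfrac{N}{n}\leq 1+N$, and combine with the triangle inequality. The only cosmetic difference is that you pass through the $\ell_1$ norm for the $z$-block whereas the paper uses $\mu_n^2\leq\mu_n$ inside the square root; both give the same $1+N$.
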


\begin{proof}
\begin{align*}
|| \nabla \mathcal{G} || &= \sqrt{ \sum_{i=1}^d (\sum_{n=1}^N \mu_n \nabla_{x_i}\mathcal{L}_n  )^2 + \sum_{n=1}^N (\mu_n \nabla_{z_n}\mathcal{L}_n)^2 }\\
&\leq \sqrt{\sum_{i=1}^d ( ||\mu||_1 || \nabla_{x_i}\mathcal{L}_n ||_\infty )^2  + \sum_{n=1}^N (\mu_n \nabla_{z_n} \mathcal{L}_n )^2} \quad \text{$||. ||_\infty$ is over n=1,...,N}\\
&\leq \sqrt{ \sum_{i=1}^d ||\nabla_{x_i}\mathcal{L}_n||_\infty ^2 +\sum_{n=1}^N \mu_n \nabla_{z_n}\mathcal{L}_n^2 } \quad \text{since $\sum_{n=1}^N \mu_n = 1$, and $\mu_i \leq 1$}\\
&\leq \sqrt{\sum_{i=1}^d ||\nabla_{x_i}\mathcal{L}_n||_{\infty}^2 + \sum_{n=1}^N \mu_{n} (1+N)^2  }\\
&\leq \sqrt{\sum_{i=1}^d||\nabla_{x_i} \mathcal{L}_n||_{\infty}^2 } + \sqrt{\sum_{n=1}^N \mu_n (1+N)^2}\\
&\leq \sqrt{\sum_{i=1}^d|| N \nabla_{x_i} f ||_{\infty}^2 } + \sqrt{\sum_{n=1}^N \mu_n (1+N)^2}\\
&\leq N G + (1+N) \quad \text{since $\sum_{n=1}^N \mu_n = 1$ }
\end{align*}

\end{proof}

\begin{lemma}\label{ogd_on_mathcal_G}
Running online gradient descent on $\{\mathcal{G}_t\}_{t=1}^T$ ensures that for all $x \in X$ and all $z \in Z$
\begin{align*}
2 [\sum_{t=1}^T \mathcal{G}_t (x_t,z_t) - \sum_{t=1}^T \mathcal{G}_t(x,z)] &\leq \frac{||x_T - x^*||^2+\sum_{n=1}^d \mu_n ||z_{t,n}-z^*_n||^2}{\eta} +\\
& \eta [ \sum_{t=1}^T( ||\nabla_x \mathcal{G}_t (x_t,y_t) + \sum_{n=1}^N \mu_n |\nabla_{z_n}\mathcal{L}(x_t,z_t)|^2 ) ].
\end{align*}
\end{lemma}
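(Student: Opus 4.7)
The plan is to reduce the bound to the standard Zinkevich OGD analysis by rescaling the $z$-coordinates. Since $\mathcal{G}_t$ depends on $z_n$ only through the term $\mu_n\mathcal{L}_n^t(x,z_n)$, we have $\nabla_{z_n}\mathcal{G}_t = \mu_n\nabla_{z_n}\mathcal{L}_n^t$. This already tells us that a naive Euclidean OGD telescoping on $\mathcal{G}_t$ would produce a $\mu_n^2(\nabla_{z_n}\mathcal{L}_n^t)^2$ term in the regret bound, which is weaker than what the lemma claims. The key observation is that introducing the change of variables $\tilde z_n := \sqrt{\mu_n}\,z_n$ (equivalently, analyzing mirror descent with the weighted quadratic regularizer $\Phi(x,z) = \tfrac12\|x\|^2 + \tfrac12 \sum_n \mu_n z_n^2$) converts every $\mu_n^2$ factor into the desired $\mu_n$ factor: by the chain rule $\nabla_{\tilde z_n}\mathcal{G}_t = \sqrt{\mu_n}\,\nabla_{z_n}\mathcal{L}_n^t$, so the Euclidean squared gradient in the transformed coordinates is exactly $\|\nabla_x\mathcal{G}_t\|^2 + \sum_n\mu_n(\nabla_{z_n}\mathcal{L}_n^t)^2$, and the Euclidean squared distance between comparator and iterate is exactly $\|x-x^*\|^2 + \sum_n \mu_n(z_n-z_n^*)^2$ --- precisely the weighting on the right-hand side of the lemma.

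Next I would carry out Zinkevich's telescoping argument in the rescaled coordinates. Denoting the iterate by $w_t$ and any comparator by $w$, non-expansiveness of Euclidean projection onto the (rescaled) feasible set gives $\|w_{t+1}-w\|^2 \le \|w_t - \eta\nabla\mathcal{G}_t - w\|^2$. Expanding the square and applying the first-order convexity inequality $\langle\nabla\mathcal{G}_t(w_t),\,w_t-w\rangle \ge \mathcal{G}_t(w_t)-\mathcal{G}_t(w)$ yields, after rearranging,
\[
2\eta\bigl[\mathcal{G}_t(w_t)-\mathcal{G}_t(w)\bigr] \le \|w_t-w\|^2 - \|w_{t+1}-w\|^2 + \eta^2\|\nabla\mathcal{G}_t(w_t)\|^2.
\]
Summing from $t=1$ to $T$ causes the distance terms to telescope down to $\|w_1-w\|^2$ (the terminal $-\|w_{T+1}-w\|^2$ is dropped by nonnegativity), and dividing through by $\eta$ and substituting back the explicit expressions for the weighted distance and gradient norms produces exactly the bound stated in the lemma. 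This is essentially the same telescoping that was done for Algorithm 1 in the inequalities leading to \eqref{ogd_telescoping2}, only carried out in the weighted inner product.

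The proof is routine once the correct inner product is identified, so the main obstacle is just the bookkeeping of the $\mu_n$ factors: keeping track of whether the derivative is taken with respect to $z_n$ or $\tilde z_n$, and matching the rescaled quantities to the desired terms $\mu_n\|z_{t,n}-z_n^*\|^2$ and $\mu_n|\nabla_{z_n}\mathcal{L}^t_n(x_t,z_t)|^2$ on the right-hand side. Once these are aligned, convexity of $\mathcal{G}_t$ (inherited from convexity of each $\mathcal{L}_n^t$ and preserved by nonnegative combinations) and Euclidean projection nonexpansiveness deliver the inequality with no further work.
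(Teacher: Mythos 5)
Your proposal is essentially the paper's own argument in different clothing: the paper applies convexity to reduce to $2\sum_t \nabla\mathcal{G}_t^\top(w_t-w)$, splits the inner product into the $x$-block and the $N$ scalar $z_n$-blocks using $\nabla_{z_n}\mathcal{G}_t=\mu_n\nabla_{z_n}\mathcal{L}_n^t$, and then applies the scalar Zinkevich telescoping to each $z_n$-coordinate separately, multiplying the resulting inequality by $\mu_n$ before summing. Carrying out the telescoping in the rescaled coordinates $\tilde z_n=\sqrt{\mu_n}\,z_n$ (equivalently, mirror descent with regularizer $\tfrac12\|x\|^2+\tfrac12\sum_n\mu_n z_n^2$) is exactly the same computation, so the proof goes through. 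Two small points. First, your motivation has the direction backwards: since $\mu_n\le 1$ we have $\mu_n^2(\nabla_{z_n}\mathcal{L}_n^t)^2\le\mu_n(\nabla_{z_n}\mathcal{L}_n^t)^2$, so the naive Euclidean analysis actually gives a \emph{tighter} gradient term; what the weighted norm buys is the distance term, $\sum_n\mu_n\|z_{1,n}-z_n\|^2\le\|\mu\|_1=1$ rather than $\sum_n\|z_{1,n}-z_n\|^2\le N$, which is what keeps the extra factor of $N$ out of the downstream bound in Lemma \ref{bandit_OGD_on_G}. Second, your mirror-descent framing makes visible an imprecision that the paper's proof shares but does not acknowledge: the stated bound corresponds to the per-coordinate update $z_{t+1,n}=z_{t,n}-\eta\nabla_{z_n}\mathcal{L}_n^t$ (the mirror step for the weighted regularizer, with projection in the weighted norm), whereas literal Euclidean OGD on $\mathcal{G}_t$ updates with $\eta\mu_n\nabla_{z_n}\mathcal{L}_n^t$ and would yield the unweighted distance term. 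Since you are proving the same statement the paper asserts, by the same mechanism, this is not a gap in your argument relative to the paper's, but it is worth flagging that ``running online gradient descent on $\{\mathcal{G}_t\}$'' should really be read as running the weighted-norm variant.
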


\begin{proof}
\begin{align*}
& 2 [\sum_{t=1}^T \mathcal{G}_t(x_t,z_t) - \sum_{t=1}^T \mathcal{G}_t(x,z)] \\
&\leq 2 \sum_{t=1}^T \nabla \mathcal{G}_t(x_t,z_t)^\top ( [x_t;z_t] - [x;z] ) \\
& = 2 \sum_{t=1}^T \nabla_x \mathcal{G}_t(x_t,z_t)^\top (x_t-x) + 2\sum_{t=1}^T \sum_{n=1}^d \mu_n \nabla_z \mathcal{L}(x_t,z_t) (z_{t,n}.z_n)\\
&\leq \frac{||x_T - x||^2}{\eta} + \sum_{n=1}^N \mu_n \frac{||z_{T,n}-z_n||^2}{\eta} + \eta [ \sum_{t=1}^T (||\nabla_x \mathcal{G}_t|| + \sum_{n=1}^d \mu_n (\nabla_z \mathcal{L}_{t,n})^2) ] \quad \text{by Equations \ref{ogd_telescoping1} and \ref{ogd_telescoping2}}
\end{align*}
\end{proof}

\begin{lemma}\label{bandit_OGD_on_G}
Let $y^*=(x^*,z^*) \in \arg\min \mathbb{E}_{\xi}[\sum_{t=1}^T \mathcal{G}_t (x,z)]$. With appropriate  choice of parameters $\eta, \delta$ we have 
\begin{align*}
\mathbb{E}_{int} [ \sum_{t=1}^T \mathcal{G}_t(\tilde{y}_t) ] -\sum_{t=1}^T \mathcal{G}_t(y^*) \leq O(d N^{3/2} T^{3/4} )
\end{align*}
\end{lemma}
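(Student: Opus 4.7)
The plan is to mirror the proof of Lemma~\ref{algo1_L}, replacing $\mathcal{L}_t$ by $\mathcal{G}_t$ and sampling over the $(d+N)$-dimensional sphere. By Lemma~\ref{smooth_f} applied jointly in the $(x,z)$ variables, the one-point estimator $g_t = [g_t^1; g_t^2]$ satisfies $\mathbb{E}[g_t \mid y_t] = \nabla\hat{\mathcal{G}}_t(y_t)$, where $y_t:=(x_t,z_t)$ and $\hat{\mathcal{G}}_t$ is the $\delta$-smoothed version of $\mathcal{G}_t$. Thus Algorithm~3 is running OGD on $\{\hat{\mathcal{G}}_t\}$ with stochastic gradient $g_t$, and combining the bandit-to-OCO reduction (Lemma~\ref{reduction_OBO}) with Lemma~\ref{ogd_on_mathcal_G} will yield a regret bound for $\sum_t \hat{\mathcal{G}}_t(y_t)-\sum_t \hat{\mathcal{G}}_t(y_\delta^*)$, where $y_\delta^*:=\Pi_{X_\delta\times Z_\delta}(y^*)$.

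Introducing $\tilde{y}_t:=(\tilde{x}_t,\tilde{z}_t)$, and using (i) the $G_{\mathcal{G}}$-Lipschitz continuity of $\mathcal{G}_t$ (which follows from Lemma~\ref{bound_gradient_mathcal_G} with $G_{\mathcal{G}}=N(G+1)+1$), (ii) Lemma~\ref{lemma_projection} to control $\|y^*-y_\delta^*\|$, and (iii) property~2 of Lemma~\ref{smooth_f} to control $|\mathcal{G}_t-\hat{\mathcal{G}}_t|$, I would telescope
\begin{align*}
\mathbb{E}_{int}\Bigl[\sum_{t=1}^T \mathcal{G}_t(\tilde{y}_t)-\sum_{t=1}^T \mathcal{G}_t(y^*)\Bigr]
\leq \mathbb{E}_{int}\Bigl[\sum_{t=1}^T \hat{\mathcal{G}}_t(y_t)-\sum_{t=1}^T \hat{\mathcal{G}}_t(y_\delta^*)\Bigr] + O(\delta G_{\mathcal{G}} D_{\mathcal{G}} T),
\end{align*}
and then apply Lemma~\ref{ogd_on_mathcal_G} to the right-hand side, bounding it by $\frac{D_{\mathcal{G}}^2}{2\eta} + \frac{\eta}{2}\sum_t \mathbb{E}[\|g_t\|^2] + O(\delta G_{\mathcal{G}} D_{\mathcal{G}} T)$.

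The remaining calculation is to bound $\|g_t\|^2$. Because $0\le f_t\le 1$ and $z_n\in[0,1]$, $|\mathcal{L}_n^t|\le 1+N/n\le 1+N$, and since $\|\mu\|_1=1$ we have $|\mathcal{G}_t(\tilde{y}_t)|\le 1+N$; combined with $\|u\|=1$ and the $(d+N)/\delta$ scaling, this gives $\|g_t\|^2=O((d+N)^2 N^2/\delta^2)$. Substituting, the regret bound takes the form
\begin{align*}
O\Bigl(\frac{D_{\mathcal{G}}^2}{\eta}+\frac{\eta(d+N)^2 N^2 T}{\delta^2}+\delta N G T\Bigr),
\end{align*}
and the choices $\eta = \Theta(1/(dN^{3/2}T^{3/4}))$, $\delta = \Theta(N^{1/2}/T^{1/4})$ balance these terms to deliver $O(dN^{3/2}T^{3/4})$.

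The main obstacle is book-keeping the $N$-dependence: individual $\mathcal{L}_n^t$ can be as large as $1+N$ because of the $1/(n/N)$ factor in the $CVaR$ representation, so both $G_{\mathcal{G}}$ and $\|g_t\|$ pick up a factor of $N$. One has to exploit the $\mu_n$-weighting in Lemma~\ref{ogd_on_mathcal_G} (so that, for example, the $Z$-diameter contribution is $\sum_n \mu_n = 1$ rather than $N$) to avoid over-counting powers of $N$, and then tune $\eta$ and $\delta$ so that the initial-condition term, the gradient-variance term, and the smoothing-bias term all line up at the advertised rate.
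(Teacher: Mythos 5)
Your proposal follows essentially the same route as the paper's proof: the same telescoping through $\tilde{y}_t \to y_t \to y^*_\delta \to y^*$ via the smoothed functions $\hat{\mathcal{G}}_t$, the same use of Lemmas~\ref{smooth_f}, \ref{reduction_OBO}, \ref{lemma_projection}, \ref{bound_gradient_mathcal_G} and \ref{ogd_on_mathcal_G}, the same bound $\|g_t\|^2 = O((d+N)^2N^2/\delta^2)$, and the same parameter choices $\eta = \Theta(1/(dN^{3/2}T^{3/4}))$, $\delta = \Theta(N^{1/2}/T^{1/4})$. Your closing remark about exploiting the $\mu_n$-weighting to avoid over-counting powers of $N$ is exactly what the paper does via the mixed norm $\|y\| = \|x\|_2 + \|z\|_\infty$ and its dual in bounding the projection error $O(\delta T G N)$.
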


\begin{proof}
First we need a bound on $\sum_{t=1}^T \mathcal{G}_t(y^*_\delta) -\sum_{t=1}^T \mathcal{G}_t(y^*) $, where $y^*_\delta = \Pi_{X_\delta \times Z_\delta}(y^*)$. If $\mathcal{G}$ is Lipschitz $L$ with respect to some norm $||\cdot||$, by Lemma \ref{shalev_lipschitz} we have $||\nabla \mathcal{G}||_* \leq L$. For any $y = [x;z]$ with $x\in X$ and $z\in Z$, let us use $||y|| = ||x||_2 + ||z||_{\infty}$ with dual norm $||y||_* = \max\{||x||_2, ||z||_1\}$ (see Lemma \ref{lemma_dual_norm} in the Appendix). 
\begin{align*}
\sum_{t=1}^T \mathcal{G}_t(y^*_\delta) -\sum_{t=1}^T \mathcal{G}_t(y^*) &\leq T L ||y^*-y^*_\delta|| \\
&\leq \delta T L D_{\mathcal{G}}^{||\cdot||}\quad \text{by Lemma \ref{lemma_projection} in the  Appendix}\\ 
&\leq O(\delta T G N).
\end{align*}
The last inequality holds because of the following two facts, 1) $||\nabla \mathcal{G}||_* = \max\{||\nabla_x \mathcal{G}||_2, ||\nabla_z\mathcal{G}||_1\} \leq \max\{G, \sum_{n=1}^N \mu [1 + N ]\}\leq G + 1 +N $ and 2) $||y_1 - y_2|| = ||x_1-x_2||_2 + ||z_1-z_2||_\infty \leq D_X + 2 := D_{\mathcal{G}}^{||\cdot||}$. Let $\mathbb{E}_{int}$ be the expectation taken with respect to the internal randomization of the algorithm. Following the proof of Lemma \ref{algo1_L} we have

\begin{align*}
&\mathbb{E}_{int} [ \sum_{t=1}^T \mathcal{G}_t(\tilde{y}_t) ] - \sum_{t=1}^T \mathcal{G}_t(y^*) \\
& \leq \mathbb{E}_{int} [ \sum_{t=1}^T \mathcal{G}_t(y_t) ] - \sum_{t=1}^T \mathcal{G}_t(y^*) + \delta  G_{\mathcal{G}}T \quad \text{$\mathcal{G}$ is $G_\mathcal{G}$-Lipschitz and $||y-\tilde{y}||\leq \delta$}\\
& \leq \mathbb{E}_{int} [ \sum_{t=1}^T \mathcal{G}_t(y_t) ] - \sum_{t=1}^T \mathcal{G}_t(y^*_\delta) + \delta  G_{\mathcal{G}}T +O(\delta TG N)\\
& \leq \mathbb{E}_{int} [ \sum_{t=1}^T \hat{\mathcal{G}}_t(y_t) ] - \sum_{t=1}^T \hat{\mathcal{G}}_t(y^*_\delta) + 3\delta G_{\mathcal{G}}T + \delta D_{\mathcal{G}}G_{\mathcal{G}}T\quad |\mathcal{G}(y)-\hat{\mathcal{G}}(y)|<\delta G_{\mathcal{G}}\\
&\leq \frac{||x_T - x^*||_2^2}{2\eta} + \sum_{n=1}^N \mu_n \frac{||z_{t,n}-z^*_n||_2^2}{2\eta} + \mathbb{E}_{int}[2\eta [ \sum_{t=1}^T (||g^1_t||_2 + \sum_{n=1}^d \mu_n (g^2_{t,n})^2) ]] +  3\delta G_{\mathcal{G}}T + O(\delta T G N)\\
& \quad \text{reduction to bandit feedback and Lemma \ref{ogd_on_mathcal_G}}\\
&\leq \frac{D_X^2 +2}{2\eta} + 2\eta \mathbb{E}_{int}[ \sum_{t=1}^T (||g^1_t||_2^2 + \sum_{n=1}^d \mu_n (g^2_{t,n})^2) ] +  3\delta G_{\mathcal{G}}T + O(\delta T G N)\\
&\leq  \frac{D_X^2 +2}{2\eta} + 2 \eta \frac{(d+N)^2 N^2}{\delta^2} T +  3\delta G_{\mathcal{G}}T + O(\delta T G N)\\
&\leq O(dN^{3/2} T^{3/4} )
\end{align*}
where we chose $\eta = O(\frac{1}{dN^{3/2}T^{3/4}})$ and $\delta = O(\frac{N^{1/2}}{T^{1/4}})$ and plugged in the bound on $G_{\mathcal{G}}$ from Lemma \ref{bound_gradient_mathcal_G}.
\end{proof}

\begin{proof}[Proof of Theorem \ref{thm:first_thm_Kusuoka}]
Take $\mathbb{E}_{\xi}[\cdot]$ on both sides of the result in Lemma \ref{bandit_OGD_on_G} and interchange the expectations (this can be done using Fubini's Theorem and the uniform bound on $\mathcal{G}_t$). Noting that for all $x\in X$ and all $z\in [0,1]$ (in particular for every $(\tilde{x}_t,\tilde{z}_t)$) we have  
\begin{align*}
 \mathbb{E}_{\xi\sim P}[\mathcal{L}_n^t(x,z)] = z + \frac{1}{n/N}\mathbb{E}_{\xi \sim P}[f(x,\xi) - z]_+ \geq CVaR_{n/N}[F](x),
\end{align*}
it follows that since $\mathcal{G}_t(x,z) := \sum_{n=1}^N \mu_n \mathcal{L}_n^t(x,z)$ we have $\mathbb{E}_{\xi\sim P}[\mathcal{G}_t(x,z)] \geq \rho[F](x)$.  Noting that $\mathbb{E}_{\xi}[\sum_{t=1}^T \mathcal{G}_t (y^*)] = \min_{x\in X}\rho[F](x)$ we get the desired result.
\end{proof}

\begin{proof}[Proof of Theorem \ref{thm:second_thm_Kusuoka}] 
We notice that strong convexity of $f(\cdot,\xi)$ implies strong convexity of $\rho[F](\xi)$ since each of the $C_{\alpha_i}[F](\cdot)$ in the Kusuoka representation of $\rho[F]$ is strongly convex. Let $x^* =argmin_{x\in X}\rho[F](x)$. We follow the proof of Theorem \ref{thm:regret_algo1}. Let the concentration error $CE = \rho[\{f_t(x^*)\}_{t=1}^T] - \min_{x\in X}\rho[\{f_t(x)\}_{t=1}^T]$.
\begin{align*}
& \mathbb{E}[\rho[\{f_t(x_t)\}] - \min_{x\in X} \rho[\{f_t(x)\}]]\\
& =  \mathbb{E}[\rho[\{f_t(x_t)\}] \pm \rho[\{f_t(x^*)\}] - \min_{x\in X} \rho[\{f_t(x)\}]]\\
& = \mathbb{E}[ \sum_{n=1}^N \mu_n C_{n/N}[\{f_t(x_t)\}] -  \rho[\{f_t(x^*)\}]] + \mathbb{E}[CE ]\\
& \leq \mathbb{E}[ \frac{N}{T} \sum_{t=1}^T |f_t(x_t) - f_t(x^*)|]+ \mathbb{E}[CE] \quad \text{as in the last line of the proof of Theorem \ref{thm:regret_algo1} }\\ 
& \leq \frac{N}{T} \sum_{t=1}^T \mathbb{E}_t[ ||x_t-x^*|| ] +\mathbb{E} [CE]\\
& \leq \frac{N}{T}  \sqrt{T} \sqrt{ \sum_{t=1}^T\mathbb{E}_t[ ||x_t-x^*||^2 ]} +\mathbb{E}[CE]\\
& \leq \frac{N}{T} \sqrt{T} \sqrt{\sum_{t=1}^T\frac{2}{\beta} \mathbb{E}[\rho[F](x_t)-\rho[F](x^*)]} +\mathbb{E}[CE]\\
&\leq O( \frac{d^{1/2}N^{7/4}}{\beta^{1/2}T^{1/8}} )+\mathbb{E}[CE] \qed
\end{align*}
The expectation of the concentration error can be bounded as in the proof of Theorem \ref{thm:regret_algo1} by $\tilde{O}(\frac{N^{3/2}\sqrt{d}}{\sqrt{T}})$. This yields the result.
\end{proof}

\subsection{Analysis of Algorithm 4}
Recall Algorithm 4 is the modification of Algorithm 2 where we sample $\tilde{O}(\frac{N^2 \ln(NT)}{\gamma^2})$ times a point (instead of $O(\frac{\ln(T/(\alpha \gamma))}{\alpha^2 \gamma^2})$) to build a $\gamma$-CI. In this section we present the proofs of Theorems \ref{thm:third_thm_Kusuoka} and \ref{thm:fourth_thm_Kusuoka}. We only need to show that $\tilde{O}(\frac{N^2 \ln(NT)}{\gamma^2})$ samples are sufficient to build a $\gamma$-CI that holds with high probability. Afterwards it is easy to verify that the proofs of Theorems \ref{thm:pseudo_regret_algo2_full_dim} and \ref{thm:regret_algo2_full_dim} go through. 

\begin{lemma}
To build a $\gamma$-CI for $\rho[F](x)$ that holds with probability at least $1-\frac{1}{T^2}$ we need no more than $O (\frac{N\ln{(N)}\ln{(\sqrt{N}T)}}{\gamma^2})$ samples. 
\end{lemma}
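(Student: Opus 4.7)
The plan is to leverage the Kusuoka representation $\rho[F](x) = \sum_{n=1}^N \mu_n\, CVaR_{n/N}[F](x)$ in order to reduce CI construction for $\rho[F](x)$ to constructing $N$ coupled CIs for the individual $CVaR_{n/N}[F](x)$'s, each of which is directly handled by Theorem \ref{concentration_cvar}. First, I would draw a single shared sample $\xi_1,\ldots,\xi_K$ of size $K$ and, for each $n \in \{1,\ldots,N\}$, form the empirical estimator
\begin{equation*}
\widehat{CVaR_{n/N}[F](x)} := \min_{z\in [0,1]} \Bigl\{ z + \frac{N}{nK}\sum_{k=1}^K [f(x,\xi_k)-z]_+ \Bigr\},
\end{equation*}
and then define the plug-in $\widehat{\rho[F](x)} := \sum_{n=1}^N \mu_n\,\widehat{CVaR_{n/N}[F](x)}$. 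Reusing the same $K$ samples across all levels is crucial, since the total sample budget is then $K$ rather than $NK$.

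Second, I would invoke Theorem \ref{concentration_cvar} at each of the $N$ levels with confidence parameter $\delta_n := 1/(NT^2)$. Because the CVaR level is $\alpha=n/N$, its $1/\alpha^2$ factor produces a per-level confidence width of
\begin{equation*}
\bigl|CVaR_{n/N}[F](x) - \widehat{CVaR_{n/N}[F](x)}\bigr| \leq O\!\left(\frac{N}{n}\sqrt{\frac{\ln(KNT)}{K}}\right)
\end{equation*}
with probability at least $1-\delta_n$. A union bound over $n=1,\ldots,N$ then shows that all $N$ inequalities hold simultaneously with probability at least $1-1/T^2$. On this good event, combining the triangle inequality with the Kusuoka representation yields
\begin{equation*}
\bigl|\rho[F](x) - \widehat{\rho[F](x)}\bigr| \leq O\!\left(\sqrt{\frac{\ln(KNT)}{K}}\right)\sum_{n=1}^N \frac{\mu_n N}{n}.
\end{equation*}

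The final step is to bound the weighted sum $S := \sum_{n=1}^N \mu_n N/n$ using $\mu_n\in[0,1]$ and $\|\mu\|_1=1$, and to solve for the smallest $K$ for which the right-hand side is at most $\gamma$. Bounding $S$ and inverting then gives a sample complexity of the stated form $K = O(N\ln(N)\ln(\sqrt{N}T)/\gamma^2)$. I expect the main obstacle to be precisely the control of $S$: the level-$1$ CVaR is a factor $N$ harder to estimate than the level-$1/1$ CVaR, so a worst-case $\mu$ concentrated at small $n$ inflates the naive bound. The argument must balance the cost $N/n$ at each level against the magnitude $\mu_n$ of its weight together with the sharing of samples across levels, effectively producing the harmonic-type contribution that yields the claimed logarithmic factor. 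All remaining pieces---probability bookkeeping by Boole's inequality, and handling the one-dimensional $z$-variable within each $\widehat{CVaR_{n/N}[F](x)}$, which is already subsumed by Theorem \ref{concentration_cvar}---are routine.
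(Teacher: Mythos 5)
Your decomposition is exactly the paper's: write $\rho[F](x) = \sum_{n=1}^N \mu_n\, CVaR_{n/N}[F](x)$, apply Theorem \ref{concentration_cvar} at each level with confidence parameter $1/(NT^2)$, union bound over $n$, and combine the per-level errors via the triangle inequality. The one structural difference is that you reuse a single sample of size $K$ across all $N$ levels, whereas the paper allocates a separate batch of $O(N^2\ln(\sqrt{N}T)/(n^2\gamma^2))$ samples to level $n$ and sums the counts; since $\sum_n n^{-2}=O(1)$, both allocations cost $\Theta(N^2\ln(\sqrt{N}T)/\gamma^2)$ in total, so nothing is gained or lost by the sharing.

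The genuine gap is the final step, and you have in fact put your finger on it yourself: ``bounding $S$ and inverting'' does not give the stated form. With $\mu_n\in[0,1]$ and $\|\mu\|_1=1$ the only available bound is $S=\sum_n \mu_n N/n \leq N$, attained when $\mu$ is concentrated at $n=1$ (i.e.\ $\rho = CVaR_{1/N}$); in that case your inequality requires $N\sqrt{\ln(KNT)/K}\leq\gamma$, hence $K=\Theta(N^2\ln(NT)/\gamma^2)$. No harmonic-type cancellation is available, because the weights $\mu_n$ multiply the error widths rather than the sample counts, and $CVaR_{1/N}$ is itself an admissible $\rho$ for which Theorem \ref{concentration_cvar} only certifies a $\gamma$-CI after $\Omega(N^2\ln(\cdot)/\gamma^2)$ samples. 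You should also be aware that the paper's own proof suffers from the same defect: summing $O(N^2\ln(\sqrt{N}T)/(n^2\gamma^2))$ over $n$ yields $O(N^2\ln(\sqrt{N}T)/\gamma^2)$, not the claimed $O(N\ln(N)\ln(\sqrt{N}T)/\gamma^2)$, and it is precisely the $N^2/\gamma^2$ sample count that is quoted in the description of Algorithm 4 and that propagates into the $N^2$ and $N^3$ factors of Theorems \ref{thm:third_thm_Kusuoka} and \ref{thm:fourth_thm_Kusuoka}. So your argument, carried out honestly, proves the lemma with $N^2$ in place of $N\ln N$ --- which is what the downstream results actually use --- but neither your route nor the paper's establishes the $N\ln N$ bound as stated.
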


\begin{proof}
Notice that
\begin{align*}
|\rho[X] - \hat{\rho}[X]| = |\sum_{n=1}^N \mu_n ( C_{n/N}[X] - \hat{C}_{n/N}[X]) | \leq \sum_{n=1}^N \mu_n |C_{n/N}[X] - \hat{C}_{n/N}| 
\end{align*}
Therefore, if we obtain $\gamma$-CI's for each term $|C_{n/N}[X] - \hat{C}_{n/N}|$ that hold with probability at least $1-\frac{1}{NT^2}$ a union bound yields the result.
From Theorem \ref{concentration_cvar} we know that $O(\frac{N^2 \ln(\sqrt{N}T)}{n^2 \gamma^2})$ samples suffice to build a $\gamma$-CI for $C_{n/N}[X]$ that holds probability at least $1-\frac{1}{NT^2}$. Summing up the number of samples, approximating the sum with an integral and using a union bound yields the result.
\end{proof}
We are now ready to prove the theorems. 

\begin{proof}[Proof of Theorem \ref{thm:third_thm_Kusuoka}]
It is easy to see that the proof of Theorem \ref{thm:pseudo_regret_algo2_full_dim} goes through if we set $h(\cdot) = \rho[F](\cdot)$ and we replace everywhere the number of times we sample a point $O(\frac{\ln(T/(\alpha \gamma))}{\alpha^2 \gamma^2})$ with $\tilde{O}(\frac{N^2 \ln(T)}{\gamma^2})$. 
\end{proof}

\begin{proof}[Proof of Theorem \ref{thm:fourth_thm_Kusuoka}]
The proof follows from almost the same reasoning as in the proof of Theorem \ref{thm:regret_algo2_full_dim}. We have 
\begin{align*}
& \rho[\{f_t(x_t)\}_{t=1}^T] - \min_{x \in X} \rho[\{f_t(x)\}_{t=1}^T] \\
& \leq \frac{N}{ T}  \sqrt{T} \sqrt{\frac{2}{\beta}  \sum_{t=1}^T C_\alpha[F](x_t) -C_\alpha[F](x^*) } + CE \\
& \leq O(\frac{d^8 N^3 }{\beta^{1/2} T^{1/4}} ) + CE \quad \text{(with probability at least $1-\frac{1}{T})$}
\end{align*}
where $CE = \rho[F](x^*)- \min_{x\in X} \rho[\{f_t(x)\}]$ and $x^* = argmin_{x\in X} \rho[F](x)$. Just as in the proof of Theorem \ref{thm:regret_algo1} we can bound $CE$ with probability at least $1-\frac{2}{T}$ by $\tilde{O}(\frac{N^3/2 \sqrt{d}}{\sqrt{T}})$. A union bound yields the result.
\end{proof}

\end{appendices}

\end{document}